\newtheorem*{theorem*}{Theorem}
\theoremstyle{definition}
\newtheorem*{runex*}{Running Example}
\newtheorem*{ex*}{Example}
\newtheorem*{remark*}{Remark}
\definecolor{darkgreen}{rgb}{0,0.5,0}
\definecolor{darkred}{rgb}{0.7,0,0}
\definecolor{teal}{rgb}{0.3,0.8,0.8}
\definecolor{orange}{rgb}{1.0,0.5,0.0}
\definecolor{purple}{rgb}{0.8,0.0,0.8}
\newcommand{\kibitz}[2]{\ifnum\Comments=1{\textcolor{#1}{\textsf{\footnotesize #2}}}\fi}
\newcommand{\wen}[1]{\kibitz{green}{{#1}}}
\newcommand{\AlgLong}{Policy Cover-Policy Gradient }
\newcommand{\alg}{PC-PG\xspace}
\newcommand{\calK}{\mathcal{K}}
\newcommand{\E}{\ensuremath{\mathbb{E}}}
\newcommand{\ra}[1]{\renewcommand{\arraystretch}{#1}}
\providecommand\theHALG@line{\thealgorithm.\arabic{ALG@line}}
\title{\alg: Policy Cover Directed Exploration for\\ Provable Policy Gradient Learning}
\author[1]{Alekh Agarwal}
\author[2]{Mikael Henaff}
\author[3,1]{Sham Kakade}
\author[4]{Wen Sun\thanks{Work done while MH and WS were at Microsoft Research}}
\affil[1]{Microsoft Research}
\affil[2]{Facebook AI Research}
\affil[3]{University of Washington}
\affil[4]{Cornell University}
\begin{document}
\renewcommand\algorithmicindent{1.0em}%

\maketitle

\begin{abstract}
Direct policy gradient methods for reinforcement learning are a
successful approach for a variety of reasons: they are model free,
they directly optimize the performance metric of interest, and they
allow for richly parameterized policies. 
Their primary drawback is that, by being local in nature, they fail to
adequately explore the environment.
In contrast, while model-based approaches and Q-learning directly handle exploration through the use of
optimism, their ability to handle model misspecification and function
approximation is far
less evident.
This work introduces the the \emph{\AlgLong} (\alg) algorithm, which
provably balances the exploration vs. exploitation tradeoff using an
ensemble of learned policies (the policy cover). \alg enjoys
polynomial sample complexity and run time for both tabular MDPs and,
more generally, linear MDPs in an infinite dimensional RKHS.
Furthermore, \alg also has strong guarantees under model
misspecification that go beyond the standard worst case $\ell_\infty$
assumptions; this includes approximation guarantees for state
aggregation under an average case error assumption, along with guarantees
under a more general assumption where the approximation error under
distribution shift is controlled. We complement the theory with
empirical evaluation across a variety of domains in both reward-free
and reward-driven settings. \end{abstract}

\section{Introduction}
\label{section:intro}

Policy gradient methods are a successful class of
Reinforcement Learning (RL) methods, as they are amenable to parametric policy classes, including
neural policies~\citep{schulman2015trust, schulman2017proximal}), and
they directly optimizing the cost function of interest.
While
these methods have a long history in the RL
literature~\citep{williams1992simple, sutton1999policy,
  konda2000actor, Kakade01}, only recently have their theoretical convergence properties
been established: roughly when the objective function has wide coverage over the state
space, global convergence is possible~\citep{agarwal2019optimality,geist2019theory,russoGlobal,abbasi2019politex}.
In other words, the assumptions in these works imply that the state
space is already well-explored. Conversely, without such coverage (and, say, with sparse rewards),
policy gradients often suffer from the vanishing gradient problem.

With regards to exploration, at least in the tabular setting, there is
an established body of results which provably explore in order to
achieve sample efficient reinforcement learning, including model based
methods~\citep{kearns2002optimal,brafman2002r,kakade2003sample,
jaksch2010optimal,azar2017minimax,dann2015sample}, model free
approaches such as
Q-learning~\citep{strehl2006pac,li2009unifying,jin2018q,dong2019provably},
thompson
sampling~\citep{osband2014generalization,agrawal2017optimistic,russo2019worst},
and, more recently, policy optimization
approaches~\citep{efroni2020optimistic,cai2019provably}. In fact, more
recently, there are number of provable reinforcement learning
algorithms, balancing exploration and exploitation, for MDPs with
linearly parameterized dynamics,
including~\cite{jiang2017contextual,yang2019sample,jin2019provably,pmlr-v108-zanette20a,ayoub2020,zhou2020provably,cai2019provably}.

The motivation for our work is to develop algorithms and guarantees
which are more robust to violations in the underlying modeling assumptions; indeed, the
primary practical motivation for policy gradient methods is that the
overall methodology is disentangled from modeling
(and Markovian) assumptions, since they are an ``end-to-end'' approach, directly
optimizing the cost function of interest.
Furthermore, in support of these empirical findings, there is a body of theoretical
results, both on direct policy optimization approaches~\citep{kakade2002approximately,NIPS2003_2378,
  Scherrer:API,scherrer2014local} and
more recently on policy gradient
approaches~\citep{agarwal2019optimality}, which show that such incremental
policy improvement approaches are amenable to function
approximation and violations of modeling assumptions,
under certain coverage assumptions over the state
space.

This work focuses on
how policy gradient methods can be extended to handle exploration, while also
retaining their favorable properties with regards to how they handle function approximation and
model misspecification.
The practical relevance of answering these
questions is evident by the growing body of empirical techniques for
exploration in policy gradient methods such as pseudocounts
~\citep{bellemare2016pseudocounts}, dynamics model errors
~\citep{pathak2017curiosity}, or random network distillation
(RND)~\citep{burda2018exploration}.

\begin{table*}[t!]
\centering
\aboverulesep=0ex
 \belowrulesep=0ex
\ra{2}
\begin{tabular}{|>{\centering\arraybackslash}m{9.5cm}|>{\centering\arraybackslash}m{2.8cm}|>{\centering\arraybackslash}m{3cm}|}
\toprule Algorithm & Sample Complexity & Misspecified State Aggregation\\ \midrule
E$^3$, Rmax, UCBVI \hspace{9cm}\hfill \footnotesize{\citep{kearns2002optimal,brafman2002r,jaksch2010optimal,azar2017minimax}} & $\text{poly}(S,A, H, \frac{1}{\epsilon})$  & $\ell_\infty$\\ \midrule
Thompson Sampling \hspace{9cm}\hfill ~\footnotesize{\citep{osband2014generalization,agrawal2017optimistic,russo2019worst}} & $\text{poly}\left(S,A, H, \frac{1}{\epsilon}\right)$  & $\ell_\infty$\\ \midrule
Q-learning ($\epsilon$ greedy) & $\Omega(A^{H})$  & $\ell_\infty$\\ \midrule
delayed/UCB Q-learning \hspace{9cm}\hfill \footnotesize{\citep{strehl2006pac,li2009unifying,jin2018q,dong2019provably}} & $\text{poly}\left(S,A, H, \frac{1}{\epsilon}\right)$
 & $\ell_\infty$ for $Q^\star$\\ \midrule
Policy Optimization \hspace{9cm}\hfill \footnotesize{(PG\citep{williams1992simple,sutton1999policy}, NPG \citep{Kakade01, agarwal2019optimality}, MD-MPI \cite{geist2019theory})} & $\Omega(A^h)$  & ?\\ \midrule
Optimistic Policy Optimization in the Empirical Model
  ~\footnotesize{\cite{cai2019provably,efroni2020optimistic}}
& $\text{poly}\left(S, A, H, \frac{1}{\epsilon}\right)$ & $\ell_\infty$\\ \midrule
\alg (this paper) & $\text{poly}\left(S,A, H, \frac{1}{\epsilon}\right)$  & local $\ell_\infty$ \\ \bottomrule
\end{tabular}
\caption{Comparison of algorithms in tabular
  (and state-aggregation) settings. For the last column, state-aggregation
provides a means to compare
tabular approaches when the aggregated MDP may only approximately be an
MDP (i.e. when there is a model misspecification).
 We assume the agent
  starts at a fixed starting state $s_0$ and only has the ability to do rollouts from
  the state $s_0$. Sample complexity is for the number of samples
  required to learn an $\epsilon$-optimal policy. $Q$-learning and standard
  policy optimization have an exponential sample complexity in $H :=
  1/(1-\gamma)$ due to that they do not actively explore. If the
  starting state distribution had coverage (as opposed to starting at
  a single state $s_0$), then stronger guarantees exist for policy
  optimization
  methods~\citep{kakade2002approximately,agarwal2019optimality}, both
  with regards to sample complexity and state-aggregation. The optimistic policy optimization approaches
 of~\citep{cai2019provably,efroni2020optimistic} build an empirical
 model of the transition dynamics and do optimistic policy updates in
 this empirical model; as such, the can also viewed as being model based, unlike $Q$-learning and \alg
 which do not store and use prior data.
 \alg removes the initial state distribution assumptions~\citep{kakade2002approximately,agarwal2019optimality} from prior
 policy gradient results through incorporating strategic exploration;
 this is done via learning
 an ensemble of policies, the policy cover. \alg extends to linear MDPs
 with linear function approximation as well, and it also works under a weaker
 error
 condition when state aggregation is performed as the type of
 function approximation.
}


\label{tbl:tabular}
\vspace{-5pt}
\end{table*}

\subsection{Our Contributions}
This work introduces the \AlgLong algorithm
(\alg), a
direct, model-free, policy optimization approach which addresses
exploration through the use of a learned ensemble of policies, the latter
provides a policy cover over the state space. The use of a learned policy
cover addresses exploration, and also addresses what is the ``catastrophic forgetting'' problem in policy gradient
approaches (which use reward bonuses); while the on-policy nature avoids the ``delusional bias''
inherent to Bellman backup-based approaches, where approximation
errors due to model misspecification amplify  (see~\citep{lu2018non} for discussion).

It is a conceptually different approach from the predominant prior
(and provable) RL
algorithms, which are either model-based --- variants
of
UCB~\cite{kearns2002optimal,brafman2002r,jaksch2010optimal,azar2017minimax}
or based on Thompson sampling~\cite{agrawal2017optimistic,russo2019worst} --- or model-free and value based, such as
Q-learning~\cite{jin2018q,strehl2006pac}. Our work adds policy optimization methods to
this list, as a direct alternative: the use of learned covers permits a
\emph{a model-free approach} by allowing the algorithm to plan in the real
world, using the cover for initializing the underlying policy optimizer. We remark that only a handful of prior
(provable) exploration algorithms~\cite{jin2018q,strehl2006pac} are model-free in
the tabular setting, and these are largely value based.

Table~\ref{tbl:tabular} shows the relative
landscape of results for the tabular case. Here, we can compare
tabular approaches when the MDP may only approximately be an
MDP. For the latter, we consider the question of
\emph{state-aggregation}, where states are aggregated into
``meta-states'' due to some given state-aggregation function~\citep{li2006towards}. The hope
is that the aggregated MDP is also approximately an MDP (with a
smaller number of aggregated state). Table~\ref{tbl:tabular} compares the
effectiveness of tabular algorithms in this case, where the state-aggregation function
introduces model misspecification. Importantly, \alg provides a local guarantee, in a more model
agnostic sense, unlike model-based and Bellman-backup based methods.

Our main results show that \alg is provably sample and
computationally efficient for \emph{both tabular and linear
  MDPs}, where \alg finds a near optimal policy with a polynomial
sample complexity in all the relevant parameters in the (linear) MDP.
Furthermore, we give theoretical support that the direct approach is
particularly favorable with regards to function approximation and
model misspecification. Highlights are as follows:

\paragraph{RKHS in Linear MDPs:} For the linear MDPs proposed by
\cite{jin2019provably}, 
our results hold when the linear MDP features
live in an infinite dimensional Reproducing Kernel Hilbert Space
(RKHS).  It is not immediately evident how to extend the prior work on linear
MDPs (e.g.~\citep{jin2019provably})
to this setting (due to concentration issues with data re-use).
The following informal theorem summarizes
this contribution.

\begin{theorem}[Informal theorem for \alg on linear MDPs] With high
  probability, \alg finds an $\epsilon$ near optimal policy with
  number of samples $\widetilde{O}\left(\text{poly}\left(1/(1-\gamma), \mathcal{I}_N,
      1/\epsilon, W \right)\right)$, where $W$ is related to the maximum RKHS
  norm of any policy's Q function and $\mathcal{I}_N$ is the maximum
  information gain defined with respect to the kernel. Here,
  $\mathcal{I}_N$ implicitly measures the effective dimensionality of the
  problem, and  $\mathcal{I}_N =
  \widetilde{O}(d)$ for a linear kernel with $d$-dimensional features.
\end{theorem}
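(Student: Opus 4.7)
The plan is to analyze \alg as an epoch-based algorithm in which each epoch $t$ (i) rolls out the policies currently in the cover to generate fresh data, (ii) fits a critic $\widehat{Q}_t$ for the current policy via kernel ridge regression in the RKHS, (iii) constructs an exploration bonus $b_t(s,a)$ equal to the elliptical width of the kernel ridge confidence set at $(s,a)$, and (iv) updates the policy via NPG-style steps on the bonus-augmented reward $r + b_t$, appending the resulting policy to the cover. The heart of the analysis is to show that $\widehat{Q}_t + b_t$ is (on-average) an optimistic surrogate for $Q^{\star}$ of the NPG iterate, so that improving against it improves the true value, while simultaneously the total bonus accumulated across epochs is controlled by the maximum information gain $\mathcal{I}_N$.

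Concretely, I would first use the linear-MDP structure to write the Bellman backup of any bounded target value function as an inner product with the RKHS feature $\phi(s,a)$ whose RKHS norm is bounded by $W$. Kernel ridge regression on on-policy data then yields a confidence ball whose radius is the RKHS analogue of the Abbasi-Yadkori self-normalized bound, with the log-determinant term $\log\det(I+\lambda^{-1}K)$ in place of a dimension factor. Defining $b_t$ as the induced elliptical width gives pointwise optimism with high probability. Second, an elliptical-potential argument, lifted to the RKHS via the standard identity between cumulative log-determinants and information gain, bounds the sum of bonuses along cover rollouts by $\widetilde{O}(\sqrt{N \mathcal{I}_N})$. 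Third, combining this with an NPG regret bound on the bonus-augmented MDP (so that the policy cover is near-optimal for the exploration reward after $N$ epochs) and a performance-difference decomposition converts the bonus bound into an $\epsilon$-suboptimality guarantee with the claimed $\mathrm{poly}(1/(1-\gamma), \mathcal{I}_N, 1/\epsilon, W)$ sample complexity.

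The main obstacle, flagged in the paper itself, is concentration under data reuse in infinite dimensions: the RKHS self-normalized bound needs the rollout noise to be a martingale difference with respect to the filtration that generated the feature matrix, which breaks if epoch-$t$ regression reuses rollouts generated by policies chosen based on earlier regressions. My plan to avoid this is structural: each epoch draws an independent fresh batch for the critic fit, so the confidence ellipsoid in that epoch is built from data independent of all prior bonuses, and cross-epoch coupling only enters through the occupancy measures of the cover, which are handled by the potential argument. A secondary subtlety is that the NPG update only requires an \emph{on-average} optimistic critic under the cover distribution, not an $\ell_\infty$ guarantee; the performance-difference-lemma style analysis of \citep{agarwal2019optimality} is exactly the tool to convert this weaker guarantee into a policy-suboptimality bound, and this is also what lets the argument go through in the RKHS without covering numbers over the whole value-function class.
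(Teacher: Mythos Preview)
Your proposal has a real gap in the optimism step, and the overall architecture diverges from the paper in ways that matter for the RKHS result. You propose confidence-interval optimism: fit $\widehat{Q}_t$ by kernel ridge regression, let $b_t$ be the elliptical width, and assert that $\widehat{Q}_t+b_t$ is an optimistic surrogate. But an upper confidence bound on the \emph{current iterate's} $Q^{\pi^t}$ is not optimism relative to $V^\star$; UCB-style optimism for $V^\star$ normally comes from Bellman-backup (LSVI) regression, whose targets are themselves data-dependent, and in infinite dimensions that is exactly the uniform-concentration obstacle the paper flags when it says the prior linear-MDP results do not obviously extend. Your fresh-batch device removes cross-epoch coupling but not this within-backup dependence. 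A second, related issue is the critic target: if NPG runs on $r+b_t$, the relevant object is $Q^{\pi^t}(\cdot\,; r+b_t)$, which is \emph{not} $Q^{\pi^t}(\cdot\,; r)+b_t$ (they differ by all future bonuses along the trajectory), and with a nonlinear elliptical-width bonus, $Q^{\pi^t}(\cdot\,; r+b_t)$ is not in the span of $\phi$ either.

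The paper's route avoids both problems and uses none of the self-normalized machinery. The bonus is a \emph{threshold} indicator $b^n(s,a)=\tfrac{1}{1-\gamma}\,\one\{\phi(s,a)^\top(\widehat\Sigma^n_{\mathrm{mix}})^{-1}\phi(s,a)\geq\beta\}$, held fixed through episode $n$. Optimism is obtained \emph{structurally}, not via confidence sets: one augments to an MDP $\mathcal{M}^n$ by adding a self-looping action $a^\dagger$ with reward $1$ at every state outside the known set $\mathcal{K}^n$, and the modified comparator $\tilde\pi^n$ (which plays $a^\dagger$ off $\mathcal{K}^n$) satisfies $V^{\tilde\pi^n}_{\mathcal{M}^n}\geq V^{\tilde\pi}$ by construction. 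The crucial linearity fact is that $Q^{\pi}(s,a;r+b^n)-b^n(s,a)$ lies in the span of $\phi$ for \emph{any} bounded $b^n$ (it is one Bellman backup of $V^\pi(\cdot;r+b^n)$), so the critic regresses on this \emph{bonus-subtracted} target via norm-constrained least squares solved by SGD, giving a dimension-free $O(1/\sqrt{M})$ excess-risk bound with no self-normalized inequality and no covering. The elliptical norm enters only to turn on-policy $\ell_2$ critic error into a pointwise bound on $\mathcal{K}^n$; covariance concentration is handled by matrix Bernstein with intrinsic dimension. The NPG mirror-descent regret on $\mathcal{M}^n$ then reduces to the transfer error (zero for linear MDPs), and a log-det potential argument bounds $\sum_n \Pr_{d^{n+1}}[(s,a)\notin\mathcal{K}^n]$ by $O(\mathcal{I}_N(\lambda)/\beta)$, which converts bonus-augmented value back to true value. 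The augmented-MDP device, the bonus-subtraction trick, and the SGD-based dimension-free regression are the three ingredients your plan is missing, and they are precisely what lets the argument go through in the RKHS.
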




\paragraph{Bounded transfer error and state aggregation:} When specialized to a state aggregation setting, we show
that \alg provides a different approximation guarantee in
comparison to
prior works. In particular, the aggregation need only be good locally, under the visitations of the comparison policy. This
means that quality of the aggregation need only be good in the
regions where a high value policy tends to visit. More generally, we
analyze \alg under a notion of a small transfer error in critic
fitting~\citep{agarwal2019optimality}---a condition on the error of a
best on-policy critic under a comparison policy's state
distribution---which generalizes the special case of state
aggregation, and show that \alg enjoys a favorable sample complexity
whenever this transfer error is small. We also instantiate the general
result with other concrete examples where \alg is effective, and where we argue
prior approaches will not be provably accurate.
The following is an informal statement for the special case of state-aggregation with
model-misspecification.

\begin{theorem}[Informal theorem for state aggregation]  With high
  probability, \alg finds an $\epsilon + \epsilon_{misspec}$ near
  optimal policy with
  $\widetilde{O}\left(\text{poly}\left(|\mathcal{Z}| , 1/(1-\gamma),
      1/\epsilon\right) \right)$ many samples, where $\mathcal{Z}$ is
  the set of abstracted states; $\epsilon_{misspec} =
  O\left({\EE_{s\sim d^\star}[\max_{a} \epsilon_{misspec}(s,a)}] / (1-\gamma)^3 \right)$ where $d^\star$ is the state
  visitation distribution of an optimal policy (the distribution of
  which states an optimal policy tends to visit),
  and $\epsilon_{misspec}(s,a)$ is a measure of the model-misspecification error
  at state action $s,a$ (a disagreement measure
  between dynamics and rewards of state-action pairs aggregated to the same abstract state as $s,a$).
\end{theorem}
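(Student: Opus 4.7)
The plan is to reduce state aggregation to a (mis-specified) linear-MDP instance and then invoke the transfer-error version of the general \alg guarantee that also underlies the preceding RKHS theorem. First I would encode the aggregation map $\zeta : \mathcal{S} \to \mathcal{Z}$ as a feature map $\phi(s,a) = e_{(\zeta(s),a)} \in \mathbb{R}^{|\mathcal{Z}|\cdot|\mathcal{A}|}$. If $\zeta$ were a perfect aggregation (so that $r(s,a)$ and $P(\cdot\mid s,a)$ depended on $s$ only through $\zeta(s)$), then $\phi$ would make the MDP exactly linear of dimension $|\mathcal{Z}|\cdot|\mathcal{A}|$, and the claimed $\widetilde{O}(\mathrm{poly}(|\mathcal{Z}|,1/(1-\gamma),1/\epsilon))$ sample complexity would follow directly from the linear-MDP instantiation with no residual misspecification term.

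Next I would quantify the model misspecification of this reduction. Define $\epsilon_{misspec}(s,a)$ as the maximum disagreement between $(r(s,a),P(\cdot\mid s,a))$ and $(r(s',a),P(\cdot\mid s',a))$ over all $s'$ with $\zeta(s')=\zeta(s)$. A standard simulation-lemma argument then shows that for any policy $\pi$ there exists an aggregation-respecting linear critic $\widehat{Q}_w$ with
\[
| \widehat{Q}_w(s,a) - Q^\pi(s,a) | \;\lesssim\; \frac{1}{1-\gamma}\, \max_{a'}\, \epsilon_{misspec}(s,a'),
\]
since errors in $r$ and $P$ inside an abstraction class propagate into $Q^\pi$ with at most a single horizon factor.

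Then I would invoke the transfer-error variant of the \alg guarantee: the suboptimality of the returned policy is bounded by $\epsilon + \widetilde{O}(\epsilon_{transfer})$ up to horizon factors, where $\epsilon_{transfer}$ is the best achievable on-policy critic error measured under the state-action distribution of the comparison policy $\pi^\star$. Plugging in the aggregation-respecting $\widehat{Q}_w$ above, taking $\pi^\star$ to be optimal, and converting the on-policy critic error into a value suboptimality via the performance-difference lemma, one arrives at
\[
\epsilon_{misspec} \;=\; O\!\left(\frac{\mathbb{E}_{s\sim d^\star}[\max_a \epsilon_{misspec}(s,a)]}{(1-\gamma)^3}\right),
\]
where one power of $1/(1-\gamma)$ comes from the simulation lemma above, one from converting average on-policy critic error into value error, and one from the performance-difference lemma comparing against $\pi^\star$. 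The sample-complexity term then follows from the tabular/linear instantiation with effective dimension $|\mathcal{Z}|\cdot|\mathcal{A}|$.

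The main obstacle is the tightness of the transfer-error step: one must show that fitting an aggregation-respecting critic under $d^\star$ incurs only the \emph{locally averaged} misspecification rather than its $\ell_\infty$ norm. This is where the on-policy, policy-cover-based nature of \alg is essential---\alg never pushes errors through a Bellman backup on states that its learned policy cover cannot reach, so misspecification at states irrelevant to $d^\star$ is suppressed. This local behavior is what distinguishes the state-aggregation guarantee here from the $\ell_\infty$ guarantees of the model-based and $Q$-learning rows of Table~\ref{tbl:tabular}, and it is precisely why the $\max_a$ sits \emph{inside} the expectation and why that expectation is taken over the optimal policy's visitation rather than over an adversarially chosen $s$.
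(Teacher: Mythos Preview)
Your high-level plan is right and matches the paper: encode the aggregation as one-hot features, bound the critic error pointwise via a simulation-lemma argument, and push this through the NPG/transfer-error machinery to get an average-case (under $d^\star$) misspecification bound.

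There is one real subtlety, though. You write that you will ``invoke the transfer-error variant of the \alg guarantee'' with ``suboptimality \ldots bounded by $\epsilon + \widetilde{O}(\epsilon_{transfer})$.'' The general agnostic theorem (Theorem~\ref{thm:agnostic}) does \emph{not} give a linear dependence on the transfer error: it gives $\sqrt{2A\,\epsilon_{bias}}/(1-\gamma)$, because the step passing from the squared critic loss under $d^\star$ to an advantage error uses Cauchy--Schwarz. Black-boxing that theorem with your pointwise bound would yield
\[
O\!\left(\frac{\sqrt{A\,\EE_{(s,a)\sim d^\star}\big[\epsilon_{misspec}(\phi(s,a))^2\big]}}{(1-\gamma)^3}\right),
\]
i.e.\ a second-moment bound with a spurious $\sqrt{A}$ factor, rather than the first-moment bound $\EE_{s\sim d^\star}[\max_a \epsilon_{misspec}(\phi(s,a))]/(1-\gamma)^3$ stated in the theorem. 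The paper avoids this by \emph{not} going through Assumption~\ref{ass:transfer_bias}: it re-derives the bias--variance lemma directly for aggregation features (Lemma~\ref{lem:variance_bias_state_agg}), bounding $\EE_{\widetilde d_{\calM^n}}[(A^t_{b^n}-\widehat A^t_{b^n})\one\{s\in\calK^n\}]$ by the \emph{absolute} (not squared) pointwise error and then switching from $\widetilde d_{\calM^n}$ to $d^{\widetilde\pi}$ via Lemma~\ref{lem:prob_absorb}. That is the step that places the expectation outside and eliminates both the square root and the $\sqrt{A}$.

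One smaller point on your $(1-\gamma)^{-3}$ accounting: the paper's breakdown is one factor from the Bellman/simulation bound on $Q$ (Claim~\ref{claim:state_agg}), one from the fact that the bonus inflates rewards to $r+b^n\in[0,1/(1-\gamma)]$ so that $r_{\max}=1/(1-\gamma)$ in that claim, and one from the $1/(1-\gamma)$ in the NPG performance-difference step (Lemma~\ref{lem:npg_construction}). Your middle factor (``converting average on-policy critic error into value error'') is not quite the right attribution; the bonus scaling is where the extra horizon sneaks in.
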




\paragraph{Empirical evaluation:} We provide experiments showing the viability of \alg in settings where prior bonus based approaches such as Random Network Distillation~\citep{burda2018exploration} do not
  recover optimal policies with high probability. Our experiments show
  our basic approach complements and leverages existing deep learning
  approaches, implicitly also verifying the robustness of \alg outside the regime where the sample complexity bounds provably hold.


\subsection{Related Work}
\label{sec:related}

We first discuss work with regards to policy gradient methods and
incremental policy optimization; we then discuss work with regards to
exploration in the context of explicit (or implicit) assumptions on
the MDP (which permit sample complexity that does not explicitly
depend on the number of states); and then ``on-policy'' exploration
methods. Finally, we discuss the recent and concurrent work of
\citet{cai2019provably,efroni2020optimistic}, which provide an
optimistic policy optimization approach which uses off-policy data.

Our line of work seeks to extend the recent line of provably correct
policy gradient
methods~\cite{agarwal2019optimality,fazel2018global,russoGlobal,caiTRPO,even-dar2009online,
DBLP:journals/corr/NeuJG17,Azar:2012:DPP:2503308.2503344,abbasi2019politex}
to incorporate exploration. As discussed in the intro, our focus is
that policy gradient methods, and more broadly ``incremental''
methods --- those methods which make gradual policy changes such as Conservative Policy Iteration (CPI)
\citep{kakade2002approximately,scherrer2014local,Scherrer:API}, Policy
Search by Dynamic Programming (PSDP)~\citep{NIPS2003_2378}, and
MD-MPI~\cite{geist2019theory} --- have guarantees with function
approximation that are stronger than the more abrupt approximate
dynamic programming methods, which rely on the boundedness of the more
stringent concentrability coefficients~\cite{munos2005error,
szepesvari2005finite, antos2008learning}; see
\citet{Scherrer:API,agarwal2019optimality,geist2019theory,chen2019information,shani2019adaptive} for
further discussion.  Our main agnostic result shows how \alg is more robust than all
extant bounds with function approximation in terms of both
concentrability coefficients and distribution mismatch coefficients;
as such, our results require
substantially weaker assumptions, building on the recent work of~\citet{agarwal2019optimality} who develop a similar notion of robustness in the policy optimization setting without exploration. Specifically, when specializing to linear MDPs and tabular MDPs, our algorithm is PAC while algorithms such as CPI and NPG are not PAC without further assumption on the reset distribution \citep{{agarwal2019optimality}}.

We now discuss results with regards to exploration in the
context of explicit (or implicit) assumptions on the underlying
MDP. To our knowledge, all prior works only provide provable
algorithms, under either realizability assumptions or under well
specified modelling assumptions; the violations tolerated in these
settings are, at best, in an $\ell_\infty$-bounded, worst case sense.
The most general set of results are those in
\cite{jiang2017contextual}, which proposed the concept of Bellman Rank
to characterize the sample complexity of value-based learning methods
and gave an algorithm that has polynomial sample complexity in terms
of the Bellman Rank, though the proposed algorithm is not
computationally efficient.  Bellman rank is bounded for a wide range
of problems, including MDPs with small number of hidden states,  linear
MDPs, LQRs, etc.  Later work gave computationally efficient algorithms
for certain special
cases~\citep{dann2018polynomial,du2019provably,yang2019reinforcement,jin2019provably, homer}.
Recently, Witness rank, a generalization of Bellman rank to model-based methods, was proposed by~\cite{sun2019model} and was later extended to model-based reward-free exploration by \cite{henaff2019explicit}.
We focus on the linear MDP model, studied in~\cite{yang2019reinforcement,jin2019provably}.  We note
that \citet{yang2019reinforcement} also prove a result for a type of linear
MDPs, though their model is significantly more restrictive than the
model in ~\citet{jin2019provably}. Another notable result is due to
\citet{wen2013efficient}, who showed that in deterministic systems, if the
optimal $Q$-function is within a pre-specified function class which
has bounded Eluder dimension (for which the class of linear functions
is a special case), then the agent can learn the optimal policy using
a polynomial number of samples; this result has been generalized by \cite{du2019provably}
to deal with stochastic rewards, using further assumptions such as low
variance transitions and strictly positive optimality gap.

With regards to ``on-policy'' exploration
methods, to our knowledge, there are relatively few provable results
which are limited to the tabular case. These are all based on
Q-learning with uncertainty bonuses in the tabular setting, including
the works in ~\cite{strehl2006pac,jin2018q}. More generally, there are
a host of results in the tabular MDP setting that handle exploration,
which are either model-based or which re-use data (the re-use of data
is often simply planning in the empirical model), which include
~\cite{brafman2003r,kearns2002optimal,azar2017minimax,kakade2003sample,jaksch2010optimal,agrawal2017optimistic,
lattimore2012pac,lattimore2014near, dann2015sample, szita2010model}.

\citet{cai2019provably,efroni2020optimistic} recently study algorithms
based on exponential gradient updates for tabular MDPs, utilizing the
mirror descent analysis first developed in ~\cite{even-dar2009online}
along with idea of optimism in the face of uncertainty. Both
approaches use a critic computed from
off-policy data and can be viewed as model-based, since the algorithm
stores all previous off-policy data and plans in what is effectively
the empirically estimated model (with appropriately chosen uncertainty
bonuses); in constrast, the model-free approaches such as
$Q$-learning do not store the empirical model and have a substantially
lower memory
footprint (see~\cite{jin2018q} for discussion on this latter point).
\citet{cai2019provably} further analyze their algorithm in the linear
kernel MDP model~\citep{zhou2020provably}, which is a different model
from what is referred to as the linear MDP
model~\citet{jin2019provably}. Notably, neither model is a special case of
the other. It is worth observing that the linear kernel MDP model of \cite{zhou2020provably} is characterized by at most $d$
parameters, where $d$ is the feature dimensionality, so that model-based learning
is feasible; in contrast, the linear MDP model of
~\citet{jin2019provably} requires a number of parameter that is
$S\cdot d$ and so it is not
describable using a small number of parameters (and yet, sample efficient RL
is still possible). See
~\citet{jin2019provably} for further discussion.

\section{Setting}
\label{section:setting}

A Markov Decision Process (MDP) $\mathcal{M} = (\Scal, \Acal, P, r, \gamma,s_0)$
is specified by a state space $\Scal$; an action space $\Acal$; a
transition model $P: \mathcal{S} \times \mathcal{A} \rightarrow \Delta(\mathcal{S})$ (where $\Delta(\mathcal{S})$ denotes a distribution over states),
a reward function $r: \Scal\times \Acal \to [0,1]$,
a discount factor $\gamma \in [0, 1)$, and a starting state
 $s_0$. We assume $\Acal$ is discrete and denote $A = \lvert\Acal\rvert$. Our results generalize to a starting state distribution
 $\mu_0\in\Delta(\Scal)$ but we use a single starting state $s_0$ to
 emphasize the need to perform exploration. A policy $\pi: \Scal \to
 \Delta(\Acal)$
specifies a decision-making strategy in which the agent chooses
actions based on the current state, i.e., $a \sim\pi(\cdot | s)$.

The value function $V^\pi(\cdot,r): \Scal \to \mathbb{R}$ is
defined as the expected discounted sum of future rewards, under reward
function $r$, starting at state $s$
and executing $\pi$, i.e.
\begin{align*}
V^\pi(s;r) := \EE \left[\sum_{t=0}^\infty \gamma^t  r(s_t, a_t)
  | \pi, s_0 = s\right],
\end{align*}
where the expectation is taken with respect to the randomness of the policy and environment $\mathcal{M}$.
The \emph{state-action} value function $Q^\pi(\cdot,\cdot;r): \Scal
\times \Acal \to \mathbb{R}$
is defined as
\begin{align*}
Q^\pi(s,a;r) := \EE\left[\sum_{t=0}^\infty \gamma^t  r(s_t, a_t) | \pi,
  s_0 = s, a_0 = a \right].
  \end{align*}

We define the discounted state-action
distribution $d_{s}^\pi$ of a policy $\pi$:
\begin{center}
\mbox{$d_{s'}^\pi(s,a) := (1-\gamma) \sum_{t=0}^\infty \gamma^t {\Pr}^\pi(s_t=s,a_t=a|s_0=s')$},
\end{center}
where $\Pr^\pi(s_t=s,a_t=a|s_0=s')$ is the 
probability that $s_t=s$ and $a_t=a$, after we execute $\pi$ from $t=0$ onwards starting at state
$s'$ in model $\mathcal{M}$. Similarly, we define $d^{\pi}_{s',a'}(s,a)$ as:
\begin{align*}
d^{\pi}_{s',a'}(s,a) := (1-\gamma) \sum_{t=0}^{\infty} \gamma^t {\Pr}^{\pi}(s_t = s, a_t = s | s_0=s', a_0 = a').
\end{align*}For any state-action distribution $\nu$, we write $d^{\pi}_{\nu}(s,a):= \sum_{(s',a')\in\mathcal{S}\times\mathcal{A}} \nu(s',a') d^{\pi}_{s',a'}(s,a)$. For ease of presentation, we assume that the agent can reset to $s_0$ at any point in the trajectory.\footnote{This can be replaced with a termination at each step with probability $1-\gamma$.} We denote $d^{\pi}_{\nu}(s) = \sum_{a}d^{\pi}_{\nu}(s,a)$.

The goal of the agent is to find a policy $\pi$
that maximizes the expected value from the starting state $s_0$, i.e. the optimization problem is:
$  \max_\pi V^{\pi}(s_0)$,
where the $\max$ is over some policy class.

For completeness, we specify a $d^{\pi}_{\nu}$-sampler and an unbiased estimator of $Q^{\pi}(s,a; r)$ in Algorithm~\ref{alg:sampler_est}, which are standard in discounted MDPs. The $d^{\pi}_\nu$ sampler samples $\sa$ i.i.d from $d^{\pi}_{\nu}$, and the $Q^{\pi}$ sampler returns an unbiased estimate of $Q^{\pi}(s,a;r)$ for a given triple $(s,a,r)$ by a single roll-out from $\sa$.

\paragraph{Notation.}
When clear from context, we write $d^\pi(s,a)$ and $d^\pi(s)$ to denote
$d_{s_0}^\pi(s,a)$ and $d^{\pi}_{s_0}(s)$ respectively, where $s_0$ is the starting state in our
MDP.
For iterative algorithms which obtain policies at each episode, we let $V^{n}$,$Q^{n}$ and $A^{n}$ denote the
corresponding quantities associated with episode $n$.
For a vector $v$, we denote $\|v\|_2=\sqrt{\sum_i v_i^2}$, $\|v\|_1=\sum_i |v_i|$,
and $\|v\|_\infty=\max_i |v_i|$. For a matrix $V$, we define $\|V\|_2 =
\sup_{x:\|x\|_2\leq 1}\| V x \|_2$,
and $\det(V)$ as the determinant of $V$. We use $\text{Uniform}(\Acal)$ (in short $\text{Unif}_{\Acal}$) to represent a uniform distribution over the set $\Acal$.

\begin{algorithm}[!t]
\caption{$d^{\pi}_\nu$ sampler and $Q^{\pi}$ estimator}
\label{alg:sampler_est}
\begin{algorithmic}[1]
\setcounter{algorithm}{-1}
\Function{$d_{\nu}^\pi$-sampler}{}
\State  \hspace*{-0.1cm}\textbf{Input}:  $\nu\in\Delta(\Scal\times\Acal), \pi, r\sa$
\State Sample $s_0,a_0 \sim \nu$
\State Execute $\pi$ from $s_0, a_0$; at any step $t$ with $(s_t,a_t)$, terminate the episode with probability $1-\gamma$
\State  \hspace*{-0.1cm}\textbf{Return}: $s_t,a_t$
\caption{$d^{\pi}$ sampler and $Q^{\pi}$ estimator}
\EndFunction
\Function{$Q^\pi$-estimator}{}
\State  \hspace*{-0.1cm}\textbf{Input}:  current state-action $\sa$, reward $r\sa$, $\pi$
\State Execute $\pi$ from $(s_0,a_0) = (s, a)$; at step $t$ with $(s_t,a_t)$, terminate with probability $1-\gamma$
\State  \hspace*{-0.1cm}\textbf{Return}: $\widehat{Q}^{\pi}\sa = \sum_{i=0}^t r(s_i,a_i)$ where $(s_0,a_0) = (s,a)$
\caption{$d^{\pi}$ sampler and $Q^{\pi}$ estimator}
\EndFunction
\setcounter{algorithm}{1}
\end{algorithmic}
\end{algorithm}



\section{The \AlgLong (\alg) Algorithm}
\label{section:alg}

To motivate the algorithm, first consider the original objective function:
\begin{equation}\label{eq:obj1}
\textrm{Original objective:} \quad \max_{\pi\in\Pi} V^\pi(s_0;r)
\end{equation}
where $r$ is the true cost function. Simply doing policy gradient ascent on
this objective function may easily lead to poor stationary points due
to lack of coverage (i.e. lack of exploration). For example, if
the initial visitation measure $d^{\pi^0}$ has poor coverage over the
state space (say $\pi^0$ is a random initial policy), then $\pi^0$ may already being a
stationary point of poor quality (e.g see Lemma 4.3 in~\cite{agarwal2019optimality}).

In such cases, a more desirable objective function is of the form:
\begin{equation}\label{eq:obj2}
\textrm{A wide coverage objective:} \quad \max_{\pi\in\Pi} \EE_{s_0,a_0\sim\rho_{\mix}}\left[ Q^\pi(s_0,a_0;r)\right]
\end{equation}
where $\rho_{\mix}$ is some initial state-action distribution which has wider
coverage over the state space. As argued
in~\citep{agarwal2019optimality,kakade2002approximately,scherrer2014local,Scherrer:API},
wide coverage initial distributions $\rho_{\mix}$ are critical to the success of policy
optimization methods. However, in the RL setting, our agent can only
start from $s_0$.

\begin{algorithm}[!t]
\begin{algorithmic}[1]
\State \hspace*{-0.1cm}\textbf{Input}: iterations $N$, threshold $\beta$, regularizer $\lambda$
\State \hspace*{-0.1cm}Initialize $\pi^0(a|s)$ to be uniform 
\For{episode $n = 0, \dots N-1$}
\State  Estimate the covariance of $\pi^n$ as
$\widehat{\Sigma}^n = \sum_{i=1}^K \phi(s_i,a_i)\phi(s_i,a_i)^{\top}/K$ with $\{s_i,a_i\}_{i=1}^K \sim d^n$ \label{line:feature_cov}
\State Estimate the covariance of the policy cover as \label{line:feature_cov_mix}
$\widehat\Sigma_\mix^n  :=  \sum_{i=0}^n \widehat{\Sigma}^i + \lambda I $ \label{line:feature_cov_mix}
\State Set the exploration bonus $b^n$ to reward infrequently visited state-action under $\rho^n_{\mix}$\label{line:bonus} (\ref{eq:cover_def})
\begin{equation*}
b^n(s,a) = \frac{\one\{\sa~:~ \phi\sa^\top (\widehat{\Sigma}_{\mix}^n)^{-1}\phi\sa \geq \beta\}}{1-\gamma}.
\end{equation*}
\State Update $\pi^{n+1} = \textrm{NPG-Update}(\rho^n_{\mix}, b^n)$ (\pref{alg:npg})
\EndFor
\end{algorithmic}
\caption{\AlgLong (\alg)}
\label{alg:epoc}
\end{algorithm}

The idea of our iterative algorithm, \alg (Algorithm~\ref{alg:epoc}), is to successively improve \emph{both} the
current policy $\pi$ and the coverage distribution $\rho_{\mix}$. The
algorithm starts with some policy $\pi^0$ (say random), and
works in episodes. At episode $n$, we have $n+1$ previous
policies $\pi^0, \ldots \pi^n$. Each of these policies $\pi^i$ induces
a distribution $d^i := d^{\pi^i}$ over the state space. Let us
consider the average state-action visitation measure over \emph{all} of
these previous policies:
\begin{align}
\rho_{\mix}^n(s,a) = \sum_{i=0}^{n} d^i(s,a)/(n+1)
\label{eq:cover_def}
\end{align}
Intuitively, $\rho_{\mix}^n$ reflects the coverage the
algorithm has over the state-action space at the start of the $n$-th episode. \alg then
uses $\rho_{\mix}^n$ in the previous
objective~\eqref{eq:obj2} with two modifications: \alg
modifies the instantaneous reward function $r$ with a bonus $b^n$ in
order to encourage the algorithm to find a policy $\pi^{n+1}$ which
covers a novel part of the state-action space.
It also modifies the policy class from $\Pi$ to $\Pi_{\text{bonus}}$,
where all policies $\pi \in \Pi_{\text{bonus}}$ are constrained to
simply take a  random rewarding action for those states where the
bonus is already large (random exploration
is reasonable when the exploration bonus is already large, see Eq~\ref{eq:pg_update} in Alg.~\ref{alg:npg}).
With this, \alg's objective at the
$n$-th episode is:
\begin{equation}\label{eq:obj3}
\textrm{\alg's objective:} \quad \max_{\pi\in\Pi_{\text{bonus}}} \EE_{s_0,a_0\sim\rho^n_{\mix}}\left[ Q^\pi(s_0,a_0;r+b^n)\right]
\end{equation}
The idea is that \alg can effectively optimize
over the region where $\rho^n_{\mix}$ has coverage. Furthermore, by
construction of the bonus, the algorithm is encouraged to escape the
current region of coverage to discover novel parts of the state-action
space. We now describe the bonus and optimization steps in more
detail.

\begin{algorithm}[!t]
\begin{algorithmic}[1]
\State \hspace*{-0.1cm}\textbf{Input}
$\rho^n_{\mix}$, $b^n$, learning rate $\eta$, sample size $M$ for critic fitting, iterations $T$
\State Define $\Kcal^n = \{s: \forall a\in\Acal, b^n\sa = 0\}$ \label{line:known}
\State Initialize policy $\pi^0: \Scal\to\Delta(\Acal)$, such that  
\begin{align*}
\pi^0(\cdot | s) = \begin{cases}
\text{Uniform}(\Acal) &  s\in\Kcal^n \\
\text{Uniform}(\{a\in\Acal: b^n\sa > 0\}) & s\not\in\Kcal^n.
\end{cases}
\end{align*}
\For{$ t = 0 \to T-1$} 
\State Draw $M$ i.i.d samples $\left\{s_i,a_i, \widehat{Q}^{\pi^t}(s_i,a_i; r+b^n)\right\}_{i=1}^M$ with $s_i,a_i\sim {\rho^n_\mix}$ (see Alg~\ref{alg:sampler_est})
	\State \textbf{Critic} fit: \label{line:learn_critic}
	\begin{align*}
		\theta^t = \argmin_{\|\theta\|\leq W} \sum_{i=1}^M \left( \theta\cdot  \phi(s_i,a_i) - \left(\widehat{Q}^{\pi^t}(s_i,a_i;r + b^n) - b^n(s_i,a_i)\right) \right)^2
	\end{align*}
	\State \textbf{Actor} update 
	\begin{equation}
	\pi^{t+1}(\cdot |s) \propto \pi^t(\cdot |s) \exp\left( \eta \left(b^n(s,\cdot) + \theta^t\cdot \phi(s,\cdot)  \right)    \one\{s\in\Kcal^n\}   \right)
	\label{eq:pg_update}
	\end{equation}
\EndFor
\State \Return $\pi := \argmax_{\pi\in\{\pi^0,\dots, \pi^{T-1}\}} V^{\pi}(s_0; r+b^n)$ 
\end{algorithmic}
\caption{Natural Policy Gradient (NPG) Update}
\label{alg:npg}
\end{algorithm}

\paragraph{Reward bonus construction.}
At each episode $n$, \alg maintains an estimate of feature covariance of the policy cover $\rho^n_\mix$
(Line \ref{line:feature_cov_mix} of \pref{alg:epoc}). Next we use this covariance matrix to identify state-action pairs which are adequately
covered by $\rho^n_\mix$.
The goal of the reward bonus is to identify state,
action pairs whose features are less explored by $\rho^n_{\mix}$ and
incentivize visiting them. The bonus $b^n(s,a)$ defined
in Line~\ref{line:bonus} achieves this.
If  $\widehat{\Sigma}^n_{\mix}$ has a small eigenvalue along $\phi\sa$, then we assign
the largest possible reward-to-go (i.e., $1/(1-\gamma)$) for this $\sa$ pair to encourage
exploration.\footnote{For an infinite dimensional RKHS,
  the bonus can be computed in the dual using the kernel
  trick (e.g., \cite{valko2013finite}).}

\paragraph{Policy Optimization.}  With the bonus, we update the policy via $T$ steps of natural policy gradient (\pref{alg:npg}). In the NPG update, we first approximate the
value function $Q^{\pi^t}(s,a; r+ b^n)$ under the policy cover $\rho^n_\mix$ (\pref{line:learn_critic}).  Specifically, we use linear function approximator to approximate $Q^{\pi^t}(s,a; r+b^n) - b^n\sa$ via constrained linear regression (\pref{line:learn_critic}), and then approximate $Q^{\pi^t}(s,a;r+b^n)$ by adding bonus back:
\begin{align*}
\overline{Q}^t_{b^n}\sa :=  b^n\sa + \theta^t\cdot \phi\sa,
\end{align*} Note that the error of $\overline{Q}^t_{b^n}\sa$ to $Q^{\pi^t}(s,a;r+b^n)$ is simply the prediction error of $\theta^t\cdot \phi\sa$ to the regression target $Q^{\pi^t}(s,a;r+b^n) - b^n$. The purpose of structuring the value function estimation this way, instead of directly approximating $Q^t(s,a;r+b^n)$ with a linear function, for instance, is that the regression problem defined in \pref{line:learn_critic} will have a good linear solution for the special case linear MDPs, while we cannot guarantee the same for $Q^t(s,a;r+b^n)$ due to the non-linearity of the bonus. 

 We then use the critic $\overline{Q}^t_{b^n}$  for updating policy (Eq.~\pref{eq:pg_update}).
These are the exponential gradient updates (as in ~\cite{Kakade01,agarwal2019optimality}), but are constrained for $s\in\Kcal^n$ (see \pref{line:known} for the definition of $\Kcal^n$). 
The initialization and the update ensure that $\pi^t$ chooses actions uniformly from $\{a: b^n\sa > 0\}\subseteq\mathcal{A}$ at any state $s$ with $\left\lvert  \{ a: b^n\sa > 0\}\right\vert > 0$ (the policy is restricted to act uniformly among positive bonus actions).

\paragraph{Intuition for tabular setting.}
In tabular MDPs (with ``one-hot'' features for each state-action pair), $(\widehat \Sigma^n_{\mix})^{-1}$ is a diagonal matrix
with entries proportional to $1/n_{s,a}$, where $n_{s,a}$ is the
number of times  $\sa$ is observed in the data collected to form the
matrix $\widehat\Sigma^n_{\mix}$. Hence the bonus simply rewards
infrequently visited state-action pairs,
and thus encourages reaching new state-action pairs.

\section{Theory and Examples}
\label{sec:analysis}

For the analysis, we first state sample complexity results for linear MDPs. Specifically, we focus on analyzing linear MDPs with infinite dimensional features (i.e., the transition and reward live in an RKHS) and show that \alg's sample complexity scales polynomially with respect to the maximum information gain \citep{srinivas2010gaussian}.

We then demonstrate the robustness of \alg to model misspecification in two concrete ways. We first provide a result for state aggregation, showing that error incurred is only an average model error from aggregation averaged over the fixed comparator's abstracted state distribution, as opposed to an $\ell_{\infty}$ model error (i.e., the maximum possible model error over the entire state-action space due to state aggregation). We then move to a more general agnostic setting and show that our algorithm is robust to model-misspecification which is measured in a new concept of \emph{transfer error} introduced by \cite{agarwal2019optimality} recently.  Compared to the Q-NPG analysis from \cite{agarwal2019optimality}, we show that \alg eliminates the assumption of having access to a well conditioned initial distribution (recall in our setting agent can only reset to a fixed initial state $s_0$), as our algorithm actively maintains a policy cover.

We also provide other examples where the linear MDP assumption is only valid for a sub-part of the MDP, and the algorithm competes with the best policy on this sub-part, while most prior approaches fail due to the delusional bias of Bellman backups under function approximation and model misspecification~\citep{lu2018non}.

\subsection{Well specified case: Linear MDPs}
\label{sec:linear}
Let us define linear MDPs first \citep{jin2019provably}. Rather
than focusing on finite feature dimension as \citet{jin2019provably}
did, we directly work on linear MDPs in a general Reproducing Kernel
Hilbert space (RKHS).
\begin{definition}[Linear MDP] Let $\Hcal$ be a Reproducing Kernel Hilbert Space (RKHS), and define a
  feature mapping $\phi:\Scal\times\Acal\to \Hcal$. An MDP $(\Scal, \Acal, P, r, \gamma, s_0)$ is called a linear MDP if the reward
  function lives in $\Hcal$: $r\sa = \langle \theta, \phi\sa
  \rangle_{\Hcal}$, and the transition operator $P(s'|s,a)$ also lives in
  $\Hcal$: $P(s'|s,a) = \langle \mu(s'), \phi\sa \rangle_{\Hcal}$ for
  all $(s,a,s')$. Denote $\mu$ as a matrix whose each row corresponds to $\mu(s)$. We assume the parameter norms\footnote{The norms are induced by the inner product in the Hilbert space $\Hcal$, unless stated otherwise.} are bounded as $\|\theta\| \leq \omega$, $ \|v^{\top}\mu\| \leq \xi $ for all $v\in\mathbb{R}^{|\Scal|}$ with $\|v\|_{\infty} \leq 1$.
  \label{def:linear_mdp}
\end{definition}

As our feature vector $\phi$ could be infinite dimensional, to measure the sample complexity, we define the \emph{maximum information gain} of the underlying MDP $\mathcal{M}$.
First, denote the covariance matrix of any policy $\pi$ as $\Sigma^{\pi} = \EE_{\sa\sim d^{\pi}}\left[\phi\sa\phi\sa^{\top}\right]$.%
We define the maximum information gain below:
\begin{definition}[Maximum Information Gain $\mathcal{I}_N(\lambda)$] We define the maximum information gain as:
\begin{align*}
\mathcal{I}_N(\lambda) := \max_{\{\pi^i\}_{i=0}^{N-1}} \log\det\left( \frac{1}{\lambda}\sum_{i=0}^{N-1} \Sigma^{\pi^i} +  I \right),
\end{align*} where $\lambda \in\mathbb{R}^+$.
\label{def:int_dim}
\end{definition}
\begin{remark}
This quantity is identical to the maximum information gain in Gaussian
Process bandits  \citep{srinivas2010gaussian} from a Bayesian perspective. 
A related quantity occurs in a more
restricted linear MDP model, in \citet{yang2019reinforcement}. Note
that when $\phi\sa\in \mathbb{R}^d$, we have that
$\log\det\left(\sum_{i=1}^n \Sigma^{\pi^i} + \mathbf{I}\right) \leq d
\log(nB^2/\lambda + 1)$ assuming $\|\phi(s,a)\|_2\leq B$, which means that the
information gain is always at most $\widetilde{O}(d)$.  Note that $\mathcal{I}_N(\lambda)\ll d$ if the
covariance matrices from a sequence of policies are concentrated in a
low-dimensional subspace (e.g., $\phi$ is infinite dimensional while all policies only visits a two dimensional subspace).
\end{remark}




For linear MDPs, we leverage the following key observation: we have that $Q^{\pi}(s,a;r+b^n) - b^n\sa$ is linear with respect to $\phi\sa$ for any possible bonus function $b^n$ and policy $\pi$, which we prove in \pref{claim:linear_property}. The intuition is that the transition dynamics are still linear (as we do not modify the underlying transitions) with respect to $\phi$, so a Bellman backup $r\sa + \EE_{s'\sim P_{\sa}} V^{\pi}(s' ; r+b^n)$ is still linear with respect to $\phi\sa$ (recall that linear MDP has the property that a Bellman backup on any function $f(s')$ yields a linear function in features $\phi\sa$). This means that we can successfully find a linear critic to approximate $Q^{\pi^t}(s,a; r+b^n) - b^n\sa$ under $\rho^n_\mix$ up to a statistical error, i.e.,
\begin{align*}
\EE_{\sa\sim \rho^n_\mix} \left(\theta^t \cdot \phi\sa - \left(Q^{\pi^t}(s,a;r+b^n)  -b^n\sa  \right) \right)^2 = O\left( 1/\sqrt{M} \right),
\end{align*} where $M$ is number of samples used for constrained linear regression (\pref{line:learn_critic}). This further implies that $\theta^t\cdot \phi\sa + b^n\sa$ approximates $Q^{\pi^t}(s,a;r+b^n)$ up to the same statistical error.


With this intuition, the following theorem states the sample complexity of \alg under the linear MDP assumption.




\begin{theorem}[Sample Complexity of \alg for Linear MDPs]
Fix $\epsilon, \delta \in (0,1)$ and an arbitrary comparator policy $\pi^\star$ (not necessarily an optimal policy). Suppose that $\mathcal{M}$ is a linear MDP (\ref{def:linear_mdp}). 
There exists a setting of the parameters such that \alg
uses a number of samples at most $\text{poly}\left( \frac{1}{1-\gamma},\log(A), \frac{1}{\epsilon}, \mathcal{I}_N(1), \omega, \xi, \ln\left(\frac{1}{\delta}\right) \right)$
and, with probability greater than $1-\delta$, returns a policy $\widehat \pi$ such that:
\begin{align*}
V^{\widehat\pi}(s_0) \geq V^{\pi^\star}(s_0) - \epsilon.
\end{align*} 
\label{thm:linear_mdp}
\end{theorem}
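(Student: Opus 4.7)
The plan is to use a performance-difference decomposition that isolates a ``bonus-value'' term, combine it with per-episode NPG convergence on the augmented reward $r+b^n$, and use an elliptical-potential / information-gain argument to force the bonus-value to be small for at least one episode $n^\star\in\{0,\dots,N-1\}$. The algorithm (or, equivalently, the analyst) then selects $\widehat\pi^{n^\star}$ as the output, after a Monte-Carlo evaluation of each $\widehat\pi^n$ at $s_0$ under $r$.

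\textbf{Decomposition and NPG per episode.} For any comparator $\pi^\star$ and any inner-loop output $\widehat\pi^n$, the nonnegativity of $b^n$ gives
\[
V^{\pi^\star}(s_0;r) - V^{\widehat\pi^n}(s_0;r) \;\le\; \bigl[V^{\pi^\star}(s_0;r+b^n) - V^{\widehat\pi^n}(s_0;r+b^n)\bigr] + V^{\widehat\pi^n}(s_0;b^n).
\]
For the first term, the linearity claim referenced in the excerpt ($Q^\pi(\cdot;r+b^n)-b^n$ is linear in $\phi$ under the linear-MDP assumption, because the only source of non-linearity in $Q^\pi$ is the instantaneous bonus itself) lets me reduce the critic fit in Algorithm~\ref{alg:npg} to realizable constrained linear regression with target-norm polynomial in $\omega,\xi,1/(1-\gamma)$, yielding $L_2(\rho^n_{\mix})$ excess risk $\tilde O(W^2/\sqrt M)$. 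I then invoke the mirror-descent / NPG regret analysis of \cite{agarwal2019optimality} for $T$ inner iterations, with the policy class restricted to uniform over positive-bonus actions on $\Scal\setminus\Kcal^n$ so that the exponentiated-gradient iterates are well-defined and the correspondingly restricted $\pi^\star$ is no worse than $\pi^\star$ itself on those states (every action delivers the maximal $1/(1-\gamma)$ reward). This produces a bound of the form
\[
\tfrac{\log A}{\eta T(1-\gamma)} + \eta\,\mathrm{poly}\!\bigl(W,\tfrac{1}{1-\gamma}\bigr) + \tfrac{W\sqrt{\beta}}{(1-\gamma)^2} + \tilde O\!\Bigl(\tfrac{W}{(1-\gamma)^2 M^{1/4}}\Bigr),
\]
where the $\sqrt\beta$ factor arises from change-of-measure between $\rho^n_{\mix}$ and $d^{\pi^\star}$ restricted to $\Kcal^n$, controlled via $\phi(s,a)^\top(\widehat\Sigma^n_{\mix})^{-1}\phi(s,a)<\beta$ on $\Kcal^n$.

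\textbf{Potential argument.} With $b^n(s,a) = \mathbf{1}\{\phi^\top(\widehat\Sigma^n_{\mix})^{-1}\phi \ge \beta\}/(1-\gamma)$, the bonus-value rewrites as
\[
V^{\widehat\pi^n}(s_0;b^n) \;=\; \tfrac{1}{(1-\gamma)^2}\,\mathbb{E}_{(s,a)\sim d^{\widehat\pi^n}}\!\bigl[\mathbf{1}\{\phi(s,a)^\top(\widehat\Sigma^n_{\mix})^{-1}\phi(s,a) \ge \beta\}\bigr].
\]
If this exceeds $\epsilon$ at episode $n$, then $\mathrm{tr}\bigl((\widehat\Sigma^n_{\mix})^{-1}\Sigma^{\widehat\pi^n}\bigr) \ge \beta\epsilon(1-\gamma)^2$, and a matrix-determinant / elliptical-potential computation forces $\log\det(\widehat\Sigma^{n+1}_{\mix}) - \log\det(\widehat\Sigma^n_{\mix}) \ge \Omega(\beta\epsilon(1-\gamma)^2)$. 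Since the total growth of $\log\det(\widehat\Sigma^n_{\mix})$ over $N$ episodes is bounded by $\mathcal{I}_N(1)$ (Definition~\ref{def:int_dim}, applied to $\widehat\Sigma$ after a concentration step), the number of ``bad'' episodes with bonus-value $>\epsilon$ is at most $\tilde O(\mathcal{I}_N(1)/(\beta\epsilon(1-\gamma)^2))$. Taking $N$ above this threshold guarantees some $n^\star$ with bonus-value at most $\epsilon$; jointly tuning $T,M,\eta,\beta$ so that the NPG bound at $n^\star$ is also $O(\epsilon)$ requires only polynomial sample complexity in the quantities stated in the theorem.

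\textbf{Main obstacles.} The delicate work is entirely in the RKHS setting: (a) concentration of the sample covariance $\widehat\Sigma^n$ around $\Sigma^{\pi^n}$ must be dimension-free, which calls for a kernel-matrix / effective-dimension argument in the spirit of \cite{valko2013finite}; (b) the elliptical-potential and critic-norm bounds must be phrased in terms of $\mathcal{I}_N(\lambda)$ rather than an ambient dimension, with the regularizer $\lambda I$ essential to make log-dets well-defined in infinite dimensions; and (c) the adaptive dependence of $\widehat\Sigma^n_{\mix}$ on earlier trajectories requires a union bound (or martingale argument) across the $N$ episodes, costing only a $\log N$ factor. Conceptually, the trickiest point is the two-regime dichotomy outside $\Kcal^n$: excursions of $\pi^\star$ outside $\Kcal^n$ must be either absorbed by the bonus-value term or used to drive covariance expansion, and making this airtight is why the uniform-over-positive-bonus-actions modification in Algorithm~\ref{alg:npg} is built in.
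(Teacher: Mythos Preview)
Your high-level strategy (per-episode NPG on $r+b^n$, elliptical potential over episodes, selecting the best episode) is exactly the paper's, and the potential-function and RKHS-concentration pieces line up with the paper's Lemmas on the potential argument and covariance estimation. There is, however, a real gap in your NPG step.

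Your decomposition puts $V^{\pi^\star}(s_0;r+b^n)$ inside the bracket, and you then claim that replacing $\pi^\star$ by its restriction (uniform over positive-bonus actions on $\Scal\setminus\Kcal^n$) is ``no worse'' because ``every action delivers the maximal $1/(1-\gamma)$ reward.'' That justification is incorrect: $1/(1-\gamma)$ is only the \emph{instantaneous} bonus, whereas the $Q$-value under $r+b^n$ can be of order $1/(1-\gamma)^2$. At a state $s\notin\Kcal^n$, the unrestricted $\pi^\star$ may take an action $a$ with $(s,a)\in\Kcal^n$ (so zero bonus now) but transition to a region where it harvests bonuses indefinitely, giving $Q^{\pi^\star}_{b^n}(s,a)\approx 1/(1-\gamma)^2$; the restricted policy collects only $1/(1-\gamma)$ at that step and may then wander into zero-reward territory. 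So $V^{\pi^\star}(s_0;r+b^n)\le V^{\pi^\star_{\mathrm{restricted}}}(s_0;r+b^n)$ fails, and your NPG cannot close the bracket as written.

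The paper resolves this by never comparing to $V^{\pi^\star}(s_0;r+b^n)$. It introduces an auxiliary MDP $\mathcal M^n$ with an \emph{absorbing} action $a^\dagger$ at $s\notin\Kcal^n$ that self-loops with reward $1$. The comparator $\widetilde\pi^n$ takes $a^\dagger$ there, so from the first unknown state it collects exactly $1/(1-\gamma)$ total --- the maximum value attainable under the \emph{original} reward $r$ --- which yields $V^{\widetilde\pi^n}_{\mathcal M^n}\ge V^{\pi^\star}(s_0;r)$. Meanwhile the NPG iterates $\pi^t$ (uniform over positive-bonus actions at $s\notin\Kcal^n$) satisfy $V^{\pi^t}_{\mathcal M^n}(s)\ge 1/(1-\gamma)$ from the first-step bonus, so $A^{\pi^t}_{\mathcal M^n}(s,a^\dagger)=1-(1-\gamma)V^{\pi^t}_{\mathcal M^n}(s)\le 0$ and the performance-difference term on $\Scal\setminus\Kcal^n$ drops out. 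The easiest fix to your outline is to skip the step $V^{\pi^\star}(s_0;r)\le V^{\pi^\star}(s_0;r+b^n)$ and instead argue directly that $V^{\pi^\star}(s_0;r)\le V^{\pi^\star_{\mathrm{restricted}}}(s_0;r+b^n)$ via a first-hitting-time coupling (the bonus at the first unknown state already dominates $\pi^\star$'s entire remaining return under $r$); this is equivalent to the paper's absorbing-MDP construction.
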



A few remarks are in order:
\begin{remark}For tabular MDPs, as $\phi$ is a $|\Scal||\Acal|$ indictor vector, the theorem above immediately extends to tabular MDPs with $\mathcal{I}_N(1)$ being replaced by $|\Scal|\Acal| \log(N+1)$. 
\end{remark}
\begin{remark}
In contrast with LSVI-UCB \citep{jin2019provably}, \alg works for
infinite dimensional $\phi$ with a polynomial dependency on the
maximum information gain $\mathcal{I}_N(1)$. To the best of our knowledge,
this is the first efficient model-free on-policy policy gradient result for linear MDPs and also the first
 infinite dimensional result for the linear MDP model proposed
by \citet{jin2019provably}.
\end{remark}

Instead of proving \pref{thm:linear_mdp} directly, we will state and prove a general theorem of \alg for general MDPs with model-misspecification measured in a new concept \emph{transfer error} (\pref{ass:transfer_bias}) introduced by \cite{agarwal2019optimality} in \pref{sec:agnostic_result}.
\pref{thm:linear_mdp} can be understood as a corollary of a more general agnostic theorem (\pref{thm:agnostic}). Detailed proof of \pref{thm:linear_mdp} is included in \pref{app:app_to_linear_mdp}.

\subsection{State-Aggregation under Model Misspecification}

Consider a simple model-misspecified setting where the model error is introduced due to state action aggregation. Suppose we have an aggregation function $\phi:
\Scal\times\Acal \rightarrow\mathcal{Z}$, where $\mathcal{Z}$ is a finite
categorical set, the ``state abstractions", which we typically
think of as being much smaller than the (possibly infinite) number of state-action pairs. Intuitively, we aggregate state-action pairs that have similar transitions and rewards to an abstracted state $z$. 
This aggregation introduces model-misspecification, defined below.
\begin{definition}[State-Action Aggregation Model-Misspecification]
We define model-misspecification $\epsilon_{misspec}(z)$ for any $z\in\mathcal{Z}$ as
\begin{align*}
\epsilon_{misspec}(z) := \max_{(s,a),(s',a') \textrm{
    s.t. }\phi(s,a)=\phi(s',a')=z} \Big\{ \left\|P(\cdot|s,a) - P(\cdot|s',a')\right\|_1, \left\lvert r(s,a) - r(s',a') \right\rvert \Big\}.
\end{align*}
\end{definition}The model-misspecification measures the maximum possible disagreement in terms of transition and rewards of two state-action pairs which are mapped to the same abstracted state. 

We now argue that \alg provides a unique and stronger guarantee in the
case of error in our state aggregation. 
The folklore result is that with the definition
$\|\epsilon_{misspec}\|_\infty=\max_{z\in\mathcal{Z}} \epsilon_{misspec}(z) $, algorithms
such as UCB and $Q$-learning succeed with an additional additive
error of $\|\epsilon_{misspec}\|_\infty/(1-\gamma)^2$, and will have sample complexity
guarantees that are polynomial in only $|\mathcal{Z}|$.  Interestingly, see
~\cite{li2009unifying,dong2019provably} for conditions
which are limited to only $Q^\star$, but which are still \emph{global} in nature.
The
following theorem shows that \alg only requires a more local
guarantee where our aggregation needs to be only good
under the distribution of abstracted states where an optimal policy
tends to visit.


\begin{theorem}[Misspecified, State-Aggregation Bound]\label{thm:state_aggregation}
 Fix
  $\epsilon, \delta\in (0,1)$.
Let $\pi^\star$ be an arbitrary comparator policy. 
There exists a setting of the
  parameters such that \alg (\pref{alg:epoc})
uses a total number of samples at most  $\text{poly}\left(
  |\mathcal{Z}|, \log(A), \frac{1}{1-\gamma}, \frac{1}{\epsilon},
  \ln\left(\frac{1}{\delta}\right) \right)$
and, with probability greater than $1-\delta$, returns a policy
$\widehat \pi$ such that,
\[
V^{\widehat \pi}(s_0)  \geq V^{\pi^\star}(s_0) - \epsilon - \frac{2  \EE_{s \sim d^{\pi^\star} } \max_a \left[\epsilon_{misspec}(\phi(s,a))\right]  }{(1-\gamma)^3}.
\]
\label{thm:state_aggregation}
\end{theorem}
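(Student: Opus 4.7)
The plan is to view the state-aggregation setting as a special instance of the general agnostic framework of \alg introduced in \pref{sec:linear} (i.e.\ \pref{thm:agnostic}), by taking the feature map $\phi(s,a) \in \mathbb{R}^{|\mathcal{Z}|}$ to be the one-hot indicator $e_{\phi(s,a)}$ over the abstract states. With this choice the per-policy covariance matrices $\Sigma^{\pi^i}$ are diagonal in the standard basis, so the information-gain quantity collapses to $\mathcal{I}_N(1) = O(|\mathcal{Z}|\log N)$, which yields the polynomial dependence on $|\mathcal{Z}|$ in the sample complexity. The known set $\mathcal{K}^n$ used inside \pref{alg:npg} then coincides with the abstract states whose empirical visitation count under $\rho^n_{\mix}$ has exceeded the threshold $\Omega(1/\beta)$, matching the tabular intuition given at the end of \pref{section:alg}.

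The technical crux is to bound the \emph{transfer error} of the linear critic at every iteration under the comparator distribution $d^{\pi^\star}$. Specifically, I would show that for every policy $\pi^t$ and every bonus $b^n$ produced by \alg there exists $\theta\in\mathbb{R}^{|\mathcal{Z}|}$---namely the class-conditional mean of the target $Q^{\pi^t}(s,a;r+b^n) - b^n(s,a)$---such that
\[
\EE_{s\sim d^{\pi^\star}} \max_a \bigl(\theta\cdot\phi(s,a) - (Q^{\pi^t}(s,a;r+b^n) - b^n(s,a))\bigr)^2 \;\leq\; \frac{c}{(1-\gamma)^2}\,\EE_{s\sim d^{\pi^\star}}\max_a \epsilon_{misspec}(\phi(s,a))^2.
\]
This is proved by a short simulation argument: if $(s,a)$ and $(s',a')$ both map to the same abstract state $z$, their rewards differ by at most $\epsilon_{misspec}(z)$, their next-state distributions differ by at most $\epsilon_{misspec}(z)$ in total variation, and $\|V^{\pi^t}(\cdot;r+b^n)\|_\infty \leq 2/(1-\gamma)$ since $r+b^n\in[0,2]$; unrolling the Bellman identity for $Q^{\pi^t}$ gives $|Q^{\pi^t}(s,a;r+b^n) - Q^{\pi^t}(s',a';r+b^n)| \leq c\,\epsilon_{misspec}(z)/(1-\gamma)$, and subtracting $b^n$ preserves the bound because $b^n$ depends only on $\phi(s,a)$ and is therefore constant on each class.

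With this transfer-error bound in hand, the plan is to plug directly into \pref{thm:agnostic}: its conclusion converts an $\ell_2$ bias of magnitude $\sqrt{\epsilon_{\text{bias}}}$ into additive suboptimality that carries an extra horizon factor $1/(1-\gamma)$ from the performance difference lemma, and composing this with the $1/(1-\gamma)$ already inside the square root---then applying Jensen/Cauchy--Schwarz to pull the square root outside the expectation---reproduces exactly the claimed $\EE_{s\sim d^{\pi^\star}}\max_a\epsilon_{misspec}(\phi(s,a))/(1-\gamma)^3$ scaling. The additive $\epsilon$ term comes from the NPG optimization regret and the concentration of $\widehat\Sigma^n$ and the empirical $Q$-estimates produced by \pref{alg:sampler_est}, handled exactly as in the proof of \pref{thm:linear_mdp}. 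The main obstacle is the \emph{locality} of the transfer-error bound: although the critic is fit in-sample under $\rho^n_{\mix}$, the agnostic theorem demands an $\ell_2$ bound under the comparator $d^{\pi^\star}$. Transporting the in-sample control to $d^{\pi^\star}$ is exactly what the reward bonus $b^n$ and the well-conditioned covariance $\widehat{\Sigma}^n_{\mix}$ are designed to enable---the bonus forces $d^{\pi^\star}$ either to be covered on $\mathcal{K}^n$ or else to pay at most $1/(1-\gamma)$ in value per step---and it is this change-of-measure step that yields the strengthening from the folklore $\|\epsilon_{misspec}\|_\infty/(1-\gamma)^2$ guarantee to the localized bound stated in the theorem.
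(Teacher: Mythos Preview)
Your high-level plan---take one-hot features over $\mathcal{Z}$, bound the critic's per-class approximation error via a one-step simulation argument, and feed this into the agnostic machinery---is exactly the paper's route. Two issues, one minor and one substantive.

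The minor one: the bonus takes values in $\{0,\,1/(1-\gamma)\}$, so $r+b^n \in [0,\,1 + 1/(1-\gamma)]$, not $[0,2]$. Hence $\|V^{\pi^t}(\cdot;\, r+b^n)\|_\infty = O(1/(1-\gamma)^2)$ and the per-class $Q$-deviation is $O(\epsilon_{misspec}(z)/(1-\gamma)^2)$, one horizon factor larger than you wrote. With this correction the $1/(1-\gamma)^3$ you target is recovered.

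The substantive one: the step ``apply Jensen/Cauchy--Schwarz to pull the square root outside the expectation'' goes the wrong way. Bounding $\epsilon_{bias}$ in $\ell_2$ and invoking \pref{thm:agnostic} yields additive suboptimality of order
\[
\frac{\sqrt{A\,\EE_{s\sim d^{\pi^\star}}\max_a \epsilon_{misspec}(\phi(s,a))^2}}{(1-\gamma)^3},
\]
and since $\sqrt{\EE[\epsilon^2]} \geq \EE[\epsilon]$ you cannot deduce the first-moment bound stated in the theorem from this; you also collect an extraneous $\sqrt{A}$ from the action-importance-weighting inside the proof of \pref{thm:agnostic}. The paper does \emph{not} route through the $\ell_2$ transfer error of \pref{ass:transfer_bias}. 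Instead it re-proves the bias--variance tradeoff lemma directly for aggregation (\pref{lem:variance_bias_state_agg}): with $\theta^t_\star$ the class-conditional mean of $Q^t_{b^n}-b^n$ under $\rho^n_{\mix}$, one has the pointwise bound $|\theta^t_{\star,z} - (Q^t_{b^n}(s,a)-b^n(s,a))| \leq \epsilon_{misspec}(z)/(1-\gamma)^2$ for every $(s,a)$ mapping to $z$, and the bias term in the NPG regret is controlled by $\EE_{s\sim d^{\widetilde\pi}}\max_a |\cdot|$ directly---no square, no Cauchy--Schwarz---after using \pref{lem:prob_absorb} to pass from $\widetilde{d}_{\mathcal{M}^n}$ to $d^{\widetilde\pi}$ on $\mathcal{K}^n$. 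This replaces $\sqrt{A\epsilon_{bias}}$ in \pref{lem:variance_bias_n} by $\EE_{s\sim d^{\widetilde\pi}}\max_a \epsilon_{misspec}(\phi(s,a))/(1-\gamma)^2$, after which the argument is identical to the proof of \pref{thm:detailed_bound_rmax_pg}.
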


Here, it could be that
$ {\EE_{s\sim d^{\pi^\star}}\max_{a}[\epsilon_{misspec}(\phi(s,a))] }  \ll
\|\epsilon_{misspec}\|_\infty$
due to that our error notion is an average case one under the  comparator.  We refer readers
to \pref{app:state_agg} for detailed proof of the above theorem which can also be regarded as a corollary of a more general agnostic theorem (\pref{thm:agnostic}) that we present in the next section. Note that here we pay an additional $1/(1-\gamma)$ factor in the approximation error due to the fact that after reward bonus, we have $r\sa+b^n\sa \in [0,1/(1-\gamma)]$. \footnote{We note that instead of using reward bonus, we could construct absorbing MDPs to make rewards scale $[0,1]$. This way we will pay $1/(1-\gamma)^2$ in the approximation error instead.}


One point worth reflecting on is how few guarantees there are in the
more general RL setting (beyond dynamic programming),
which address model-misspecification in a manner that goes beyond global
$\ell_\infty$ bounds.  Our conjecture is that this is not merely an
analysis issue but an algorithmic one, where incremental algorithms
such as \alg are required for strong misspecified algorithmic
guarantees. We return to this point in \pref{sec:example}, with
an example showing why this might be the case.

\subsection{Agnostic Guarantees with Bounded Transfer Error}

\label{sec:agnostic_result}

We now consider a general MDP in this section, where we do not assume
the linear MDP modeling assumptions hold. As $Q - b^n$ may not
be linear with respect to the given feature $\phi$, we need to
consider model misspecification due to the linear function
approximation with features $\phi$.  We use the new concept of transfer
error from \citep{agarwal2019optimality} below. We use the shorthand notation:
\[
Q^t_{b^n}(s,a) =Q^{\pi^t}(s,a; r+b^n)
\] below.
We capture model misspecification using the following assumption. 
\begin{assum}[Bounded Transfer Error] \label{ass:transfer_bias}
With respect to a target function $f:\Scal\times\Acal \rightarrow \mathbb{R}$,
define the critic loss function $L(\theta; d, f)$ with $d\in\Delta(\Scal\times\Acal)$ as:
\begin{align*}
L\left(\theta; d, f\right) :=  \EE_{\sa\sim d}\left( \theta\cdot \phi\sa - f \right)^2,
\end{align*} which is the square loss of using the critic $\theta\cdot\phi$
to predict a given target function $f$, under distribution $d$.
Consider an arbitrary comparator policy $\pi^\star$ (not necessarily an optimal policy)
and denote the state-action distribution $d^\star(s,a) := d^{\pi^\star}(s) \circ \text{Unif}_{\Acal}(a)$.
For all episode $n$ and all iteration $t$ inside episode $n$, define:
\begin{align*}
\theta^t_\star \in \argmin_{\|\theta\|\leq W} L\left( \theta; \rho^n_{\mix}, Q^t_{b^n} - b^n \right)
\end{align*}
Then we assume that (when running Algorithm~\ref{alg:epoc}),
$\theta^t_\star$ has a bounded prediction error when transferred to $d^\star_{}$ from
$\rho^n_{\mix}$; more formally:
\begin{align*}
L\left( \theta^t_\star; d^\star_{}, Q^t_{b^n} - b^n \right) \leq \epsilon_{bias} \in \mathbb{R}^+.
\end{align*}
\end{assum}
Note that the transfer error $\epsilon_{bias}$ measures the
prediction error, at episode $n$ and iteration $t$, of a best on-policy fit $\overline{Q}^t_{b^n}(s,a) :=
b^n\sa + \theta^t_\star\cdot \phi\sa$ measured under a fixed distribution
$d^\star$ from the fixed comparator (note $d^\star$ is different from
the training distribution $\rho^n_\mix$ hence the name
\emph{transfer}).

This assumption first appears in the recent work of~\citet{agarwal2019optimality} in order to analyze policy optimization methods under linear function approximation. As our subsequent examples illustrate in the following section, this is a milder notion of model misspecification than $\ell_\infty$-variants more prevalent in the literature, as it is an average-case quantity which can be significantly smaller in favorable cases. We also refer the reader to~\citet{agarwal2019optimality} for further discussion on this assumption.

With the above assumption on the transfer error, the next theorem states an agnostic result for the sample complexity of \alg:

\begin{theorem}[Agnostic Guarantee of \alg]  Fix $\epsilon, \delta \in (0,1)$ and consider an arbitrary comparator policy $\pi^\star$ (not necessarily an optimal policy).
Assume \pref{ass:transfer_bias} holds.
There exists a setting of the parameters ($\beta, \lambda, K, M, \eta, N, T$) such that \alg
uses a number of samples at most $\text{poly}\left( \frac{1}{1-\gamma},\log(A), \frac{1}{\epsilon}, \mathcal{I}_N(1), W, \ln\left(\frac{1}{\delta}\right) \right)$
and, with probability greater than $1-\delta$, returns a policy $\widehat \pi$ such that:
\begin{align*}
V^{\widehat\pi}(s_0) \geq  V^{\pi^\star}(s_0) - \epsilon - \frac{\sqrt{2A\epsilon_{bias}}}{1-\gamma}.
\end{align*}
\label{thm:agnostic}
\end{theorem}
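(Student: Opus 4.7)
The plan is to follow a decomposition that splits the suboptimality of the returned policy into an inner NPG optimization error, a statistical error from critic fitting, and an exploration deficit controlled by an elliptic-potential/information-gain argument; the transfer error then flows through the critic step. Throughout, let $\mathcal{K}^n = \{s : \forall a,\; b^n(s,a)=0\}$ be the ``known'' set at episode $n$, and let $\widehat\pi$ be the returned policy. My goal is to show, for each episode $n$, a per-episode guarantee of the form
\begin{equation*}
V^{\pi^\star}(s_0) - V^{\widehat\pi^n}(s_0;r+b^n) \;\le\; \underbrace{\tfrac{1}{1-\gamma}\sqrt{\tfrac{2A\epsilon_{bias}}{1}}}_{\text{transfer}} \;+\; \underbrace{\text{NPG-opt}(T)}_{\text{inner iter}} \;+\; \underbrace{\text{stat}(M)}_{\text{critic samples}} \;+\; \tfrac{1}{1-\gamma}\Pr_{s\sim d^{\pi^\star}}[s\notin\mathcal{K}^n],
\end{equation*}
and then bound the last term on average over $n$ by the maximum information gain $\mathcal{I}_N(1)$.

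\textbf{Step 1 (Performance difference with bonuses).} I would apply the performance difference lemma to the bonus-augmented reward $r+b^n$, writing $V^{\pi^\star}(s_0;r+b^n) - V^{\pi^t}(s_0;r+b^n) = \tfrac{1}{1-\gamma}\EE_{s\sim d^{\pi^\star}}\EE_{a\sim\pi^\star}[A^{\pi^t}(s,a;r+b^n)]$. Since $b^n\ge 0$, we have $V^{\pi^\star}(s_0;r+b^n)\ge V^{\pi^\star}(s_0;r)$, so it suffices to control $V^{\pi^t}(s_0;r+b^n)$ versus $V^{\pi^\star}(s_0;r+b^n)$. Subtracting out the bonus on the ``known'' set $\mathcal{K}^n$ (where $b^n\equiv 0$) gives $V^{\pi^\star}(s_0;r) \le V^{\pi^t}(s_0;r+b^n) + \tfrac{1}{1-\gamma}\EE_{s\sim d^{\pi^\star}}\EE_{a\sim\pi^\star}[A^{\pi^t}(s,a;r+b^n)]$ plus a term that is controlled by $\Pr_{d^{\pi^\star}}[s\notin\mathcal{K}^n]/(1-\gamma)$ (the value the comparator could have collected off $\mathcal{K}^n$, which is absorbed into the exploration-deficit term).

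\textbf{Step 2 (Inner NPG regret on the known set).} On $\mathcal{K}^n$ the update (\ref{eq:pg_update}) is exactly an exponential-weights/mirror-descent step on $\overline Q^t_{b^n}(s,\cdot)=b^n(s,\cdot)+\theta^t\!\cdot\!\phi(s,\cdot)$. The standard per-state mirror-descent regret bound yields, averaged over $T$ iterates and over $s\sim d^{\pi^\star}$,
\begin{equation*}
\tfrac{1}{T}\sum_{t=0}^{T-1} \EE_{s\sim d^{\pi^\star}}\EE_{a\sim\pi^\star}\!\big[ A^{\pi^t}(s,a;r+b^n)\,\mathbf{1}\{s\in\mathcal{K}^n\}\big] \le \tfrac{\log A}{\eta T} + \tfrac{\eta G^2}{2} + \text{bias}_t,
\end{equation*}
where $G=O(1/(1-\gamma))$ bounds the per-step drift, and $\text{bias}_t$ is the gap between the true advantage and the linear critic. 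I would expand this gap as $(Q^{\pi^t}-b^n)-\theta^t\cdot\phi$ and use a change-of-measure $d^{\pi^\star}\!\to\rho^n_{\mix}$ via Cauchy--Schwarz combined with Assumption~\ref{ass:transfer_bias}: first compare $\theta^t$ to the best on-policy fit $\theta^t_\star$ under $\rho^n_{\mix}$ (standard regression concentration yields $L(\theta^t;\rho^n_\mix,Q^t_{b^n}-b^n) \le L(\theta^t_\star;\rho^n_\mix,\cdot)+O(W/\sqrt{M})$, with the transfer from $\rho^n_{\mix}$ to $d^\star$ via the action importance ratio of at most $A$ giving the $\sqrt{A\epsilon_{bias}}$ term), and then apply the transfer-error assumption to bound $L(\theta^t_\star;d^\star,\cdot)\le\epsilon_{bias}$. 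Choosing $\eta\propto 1/\sqrt{T}$ gives an inner optimization error of $O(\sqrt{\log A/T\,}/(1-\gamma))$.

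\textbf{Step 3 (Exploration deficit via information gain).} For the ``unknown'' portion, I would argue that whenever $\Pr_{d^{\pi^\star}}[s\notin\mathcal{K}^n] \ge \epsilon_0$, the NPG round on $r+b^n$ finds a policy $\pi^{n+1}$ with $\EE_{d^{\pi^{n+1}}}[\mathbf{1}\{\phi^\top(\widehat\Sigma^n_\mix)^{-1}\phi\ge\beta\}] \ge \Omega(\epsilon_0)$, because the algorithm optimizes the bonus-augmented return and the comparator itself achieves at least $\epsilon_0/(1-\gamma)$ from the bonus. I then invoke the classical elliptic potential / log-det potential argument: for such a policy, $\log\det(\widehat\Sigma^{n+1}_\mix/\lambda+I) - \log\det(\widehat\Sigma^n_\mix/\lambda+I) \ge \Omega(\beta\epsilon_0)$, so the total number of such ``exploratory'' episodes is bounded by $O(\mathcal{I}_N(1)/(\beta\epsilon_0))$. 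Concentration arguments (relating the empirical covariance $\widehat\Sigma^n_\mix$ used in defining the bonus to the population one, based on $K$ rollouts) need to be chained in for this to be rigorous.

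\textbf{Step 4 (Combine and select $\widehat\pi$).} Summing over $N$ episodes and choosing $N,T,M,K,\beta,\lambda,\eta$ appropriately in terms of $\epsilon,\delta,\mathcal{I}_N(1),W,\gamma,A$, at least one episode $n^\star$ satisfies both $\Pr_{d^{\pi^\star}}[s\notin\mathcal{K}^{n^\star}]\le (1-\gamma)\epsilon/4$ and small inner NPG error. Since \alg returns the best of the $\pi^n$'s (evaluated at $s_0$ using $r$; one picks the best over the cover using additional Monte-Carlo evaluations which add only lower-order samples), the final guarantee is $V^{\widehat\pi}(s_0)\ge V^{\pi^\star}(s_0) - \epsilon - \sqrt{2A\epsilon_{bias}}/(1-\gamma)$.

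\textbf{Main obstacle.} The delicate step is the change-of-measure in Step 2: we need to transfer the critic's squared error from the \emph{training} distribution $\rho^n_{\mix}$ (under which regression gives a guarantee) to the comparator's $d^{\pi^\star}$ (which appears in the performance-difference lemma), without ever having a bounded density ratio. The clean way is to pay a factor of $A$ in switching from $d^{\pi^\star}\!\circ\!\pi^\star$ to $d^\star=d^{\pi^\star}\!\circ\mathrm{Unif}_\mathcal{A}$, then to invoke Assumption~\ref{ass:transfer_bias} to replace the empirical minimizer with the true best fit $\theta^t_\star$ that is assumed to transfer. Coordinating this with the elliptic potential bound (which governs when the covariance is well-conditioned along $\phi(s,\pi^\star(s))$) and with the restriction of NPG to $\mathcal{K}^n$ is where the analysis will require the most care.
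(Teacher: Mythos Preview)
Your decomposition has the right high-level pieces but misses two load-bearing ideas that the paper's proof relies on.

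\textbf{The optimism step requires an augmented MDP, not a comparator-escape term.} In Step~1 you leave the off-$\mathcal{K}^n$ portion of the performance-difference sum as a residual ``exploration deficit'' $\Pr_{d^{\pi^\star}}[s\notin\mathcal{K}^n]/(1-\gamma)$, and in Step~3 you try to convert a large \emph{comparator} escape probability $p_n$ into a large \emph{algorithm} escape probability $q_n$ via the NPG guarantee. This is circular: the NPG guarantee you need in Step~3 is exactly the inequality from Steps~1--2, which already carries the unbounded $p_n$ term. Concretely, a crude bound on $A^{\pi^t}(s,\pi^\star(s);r+b^n)$ for $s\notin\mathcal{K}^n$ is $O(1/(1-\gamma)^2)$, giving a contribution $O(p_n/(1-\gamma)^3)$; this swamps the positive $p_n/(1-\gamma)^2$ you gain from $V^{\pi^\star}(r+b^n)\ge V^{\pi^\star}(r)$, so you cannot deduce $q_n\gtrsim p_n$. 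The paper instead introduces an augmented MDP $\mathcal{M}^n$ with an extra self-looping action $a^\dagger$ (reward $1$) available at $s\notin\mathcal{K}^n$, and compares against a modified $\tilde\pi^n$ that equals $\pi^\star$ on $\mathcal{K}^n$ and plays $a^\dagger$ off it. Because $\pi^t$ always collects the full bonus $1/(1-\gamma)$ at $s\notin\mathcal{K}^n$, one gets $A^{\pi^t}_{\mathcal{M}^n}(s,a^\dagger)=1-(1-\gamma)V^{\pi^t}_{\mathcal{M}^n}(s)\le 0$, which kills the off-$\mathcal{K}^n$ advantage entirely and yields the clean optimism $V^{\tilde\pi^n}_{\mathcal{M}^n}\ge V^{\pi^\star}$. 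Only the algorithm's own escape probability $q_n=\Pr_{d^{\pi^{n+1}}}[(s,a)\notin\mathcal{K}^n]$ remains, and \emph{that} is what the elliptic-potential argument bounds.

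\textbf{The statistical error transfers via a pointwise bound on $\mathcal{K}^n$, not via a density ratio.} In Step~2 you write ``transfer from $\rho^n_{\mix}$ to $d^\star$ via the action importance ratio of at most $A$.'' The factor $A$ only handles the \emph{action} change $\pi^\star\to\mathrm{Unif}_{\mathcal A}$ and applies to the $\theta^t_\star$-vs-target term (yielding $\sqrt{A\epsilon_{bias}}$ from Assumption~\ref{ass:transfer_bias}). It does not move the \emph{statistical} gap $(\theta^t-\theta^t_\star)\cdot\phi$ from $\rho^n_{\mix}$ to $d^{\pi^\star}$; there is no state-level density ratio available. The paper's mechanism is: first-order optimality of $\theta^t_\star$ plus the regression guarantee give $\|\theta^t-\theta^t_\star\|_{\Sigma^n_{\mix}}^2\le n\epsilon_{stat}+\lambda W^2$; then the \emph{definition} of $\mathcal{K}^n$ gives $\|\phi(s,a)\|_{(\Sigma^n_{\mix})^{-1}}^2\le\beta$ for every $(s,a)\in\mathcal{K}^n$, whence by Cauchy--Schwarz $|\phi(s,a)^\top(\theta^t-\theta^t_\star)|\le\sqrt{\beta(n\epsilon_{stat}+\lambda W^2)}$ \emph{pointwise} on $\mathcal{K}^n$. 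This pointwise bound is what lets you integrate against the (unknown) comparator distribution without any density ratio, and is the reason $\beta$ and $\epsilon_{stat}$ appear coupled in the final sample complexity. You allude to covariance conditioning in your obstacle paragraph, but the proof needs this identity explicitly, and it is the place where the known-set threshold $\beta$ actually enters the critic-error analysis.
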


The precise polynomial of the sample complexity, along with the settings of all the
hyperparameters --- $\beta$ (threshold for bonus), $\lambda$, $K$ (samples for estimating cover's covariance), $M$ (samples for fitting critic), $\eta$ (learning rate in
NPG),  $N$ (number of episodes), and $T$ (number of NPG iterations per episode) --- is provided in \pref{thm:detailed_bound_rmax_pg} (\pref{app:rmaxpg_sample}), {where we also discuss two specific examples of $\phi$---finite dimensional $\phi\in\mathbb{R}^d$ with bounded norm, and infinite dimensional $\phi$ in RKHS with RBF kernel (Remark \pref{remark:kernel_discussion}).}

The above theorem indicates that if the transfer error
$\epsilon_{bias}$ is small, then \alg finds a near optimal policy
in polynomial sample complexity without any further assumption on the
MDPs. Indeed, for well-specified cases such as tabular MDPs and linear
MDPs, due to that the regression target $Q^{\pi}(\cdot,\cdot;r+b^n) - b^n$ function is always a
linear function with respect to the features, one can easily show that
$\epsilon_{bias}=0$ (which we show in
Appendix~\ref{app:app_to_linear_mdp}), as one can pick the best on-policy fit $\theta^t_\star$ to be the exact linear representation of $Q^{\pi}(s,a;r+b^n) - b^n\sa$.   Further, in the state-aggregation example, we can show that $\epsilon_{bias}$ is upper bounded by the expected model-misspecification with respect to the comparator policy's distribution (\pref{app:state_agg}).   

A few remarks are in order to illustrate how the notion of transfer
error compares to prior work.

\begin{remark}[Comparison with concentrability assumptions
  \citep{kakade2002approximately,Scherrer:API,agarwal2019optimality}]
  In the theory for policy gradient methods without explicit
  exploration, a standard device to obtain global optimality
  guarantees for the learned policies is through the use of some
  exploratory distribution $\nu_0$ over initial states and actions in
  the optimization algorithm. Given such a distribution, a key
  quantity that has been used in prior analysis is the maximal density
  ratio to a comparator policy's state
  distribution~\citep{kakade2002approximately,Scherrer:API}:
  $\max_{s\in \Scal} \frac{d^\star(s)}{\nu_0(s)}$, where we use
  $d^\star(s)$ to refer to the probability of state $s$ under the
  comparator $\pi^\star$. It is easily seen that if \alg is run with a
  similar exploratory initial distribution, then the transfer error is
  always bounded as well:
\begin{align*}
\epsilon_{bias} \leq
\left\|\frac{ d^\star_{}  }{ \nu_0}\right\|_{\infty} L\left( \theta^t_\star; \nu_0, Q^t_{b^n} - b^n \right).
\end{align*}
In this work, we do not assume access to a such an exploratory measure
(with coverage); our goal is finding a policy with only access to
rollouts from $s_0$. This makes this concetrability-style analysis
inapplicable in general as the starting measure $\nu_0$ for the
algorithm is potentially the delta measure over the initial state
$s_0$, which
can induce an arbitrarily large density ratio.
In contrast, the transfer
error is always bounded and is zero in well-specified cases such as
tabular MDPs and linear MDPs which we show in
Appendix~\ref{app:app_to_linear_mdp}.
\end{remark}

\begin{remark}[Comparison with the NPG guarantees in \citet{agarwal2019optimality}] The bounded transfer error
  assumption (\pref{ass:transfer_bias}) stated here is developed in the
  recent work of~\citet{agarwal2019optimality}. Their work focuses on
  understanding the global convergence properties of policy gradient
  methods, including the specific NPG algorithm used here; it does
  not consider the design of exploration strategies. Consequently,
  Assumption~\ref{ass:transfer_bias} alone is not sufficient to
  guarantee convergence in their setting; ~\cite{agarwal2019optimality} make an
  additional assumption on a relative condition number between the covariance
  matrices of the comparator distribution $d^\star$ and the initial
  exploratory distribution $\nu_0$:
\[
    \kappa = \sup_{w \in \mathbb{R}^d} \frac{w^\top \Sigma_{d^\star}w}{w^\top \Sigma_{\nu_0} w}, \quad \mbox{where} \quad \Sigma_\upsilon = \E_{s,a\sim \upsilon} [\phi(s,a)\phi(s,a)^\top].
\]
Note that we consider a finite $d$-dimensional feature space for this
discussion to be consistent with the prior results. Under the
assumption that $\kappa < \infty$, \citet{agarwal2019optimality}
provide a bound on the iteration complexity of NPG-style updates with
an explicit dependence on $\sqrt{\kappa}$. Related (stronger) assumptions on the relative condition numbers for all possible policies or the initial distribution $\nu_0$ also appear in the recent works~\citep{abbasi2019politex} and~\citep{abbasi2019exploration} respectively (the latter work still assumes access to an exploratory initial policy). Our result does not have
any such dependence. In contrast, the distribution $\rho^n_{\mix}$
designed by the algorithm, serves as the initial
distribution at episode $n+1$, and the reward bonus explicitly
encourages our algorithm to visit places where the relative condition
number with the current distribution $\rho^n_{\mix}$ is large.
\label{remark:compare_to_Q_npg}
\end{remark}

\begin{figure}[t]
\centering
\includegraphics[width=.6\textwidth]{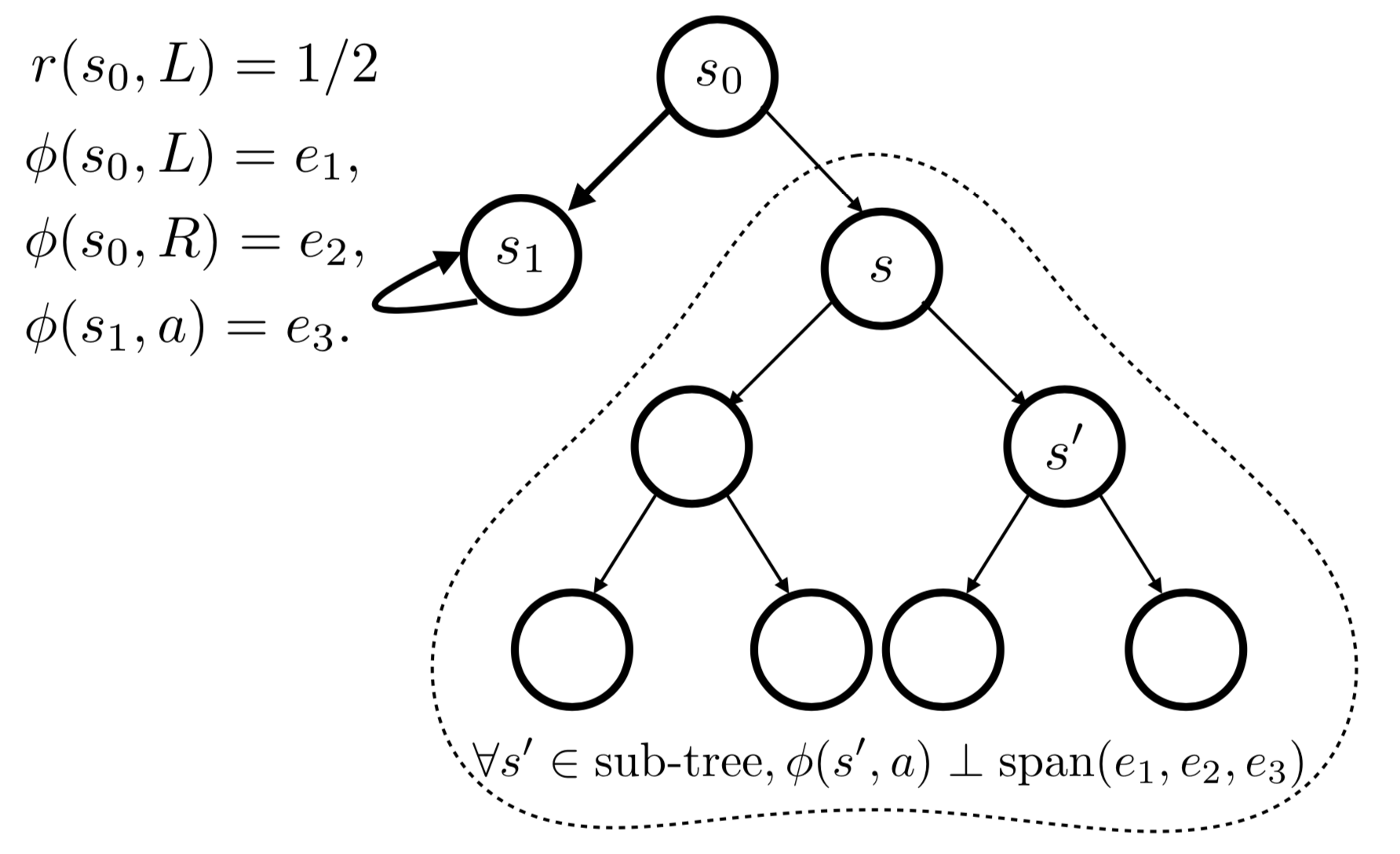}
\caption{The binary tree example. Note that here $s_0$ and $s_1$ have features only span in the first three standard basis, and the features for states inside the binary tree (dashed) contains features in the null space of the first three standard bases. Note that the features inside the binary tree could be arbitrary complicated. Unless the feature dimension scales $\exp(H)$ with $H$ being the depth of the tree, we cannot represent this problem in linear MDPs. The on-policy nature of \alg ensures that it succeeds in this example. Due to the complex features and large $\ell_{\infty}$ model-misspecification inside the binary tree, Bellman-backup based approaches (e.g., Q-learning) cannot guarantee successes.}
\label{fig:binary_tree}
\end{figure}

\subsection{Robustness to ``Delusional Bias'' with Partially Well-specified Models}
\label{sec:example}

In this section, we provide an additional example of model
misspecification where we show that \alg succeeds while Bellman backup
based algorithms do not. The basic spirit of the example is that if
our modeling assumption holds for a sub-part of the MDP, then \alg can
compete with the best policy that only visits states in this sub-part
with some additional assumptions. In contrast, prior model-based and
$Q$-learning based approaches heavily rely on the modeling assumptions
being globally correct, and bootstrapping-based methods
fail in particular due to their
susceptibility to the delusional bias problem~\citep{lu2018non}.

We emphasize that this constructed MDP and class of features have the following properties:

\begin{itemize}
\item It is not a linear MDP; we would need the dimension to be exponential in the depth $H$, i.e. $d=\Omega(2^H)$, in order to even approximate the MDP as a linear MDP.
\item We have no reason to believe that value based methods (that rely on Bellman
  backups, e.g., Q learning) or model based algorithms will provably succeed
  for this example (or simple variants of it).
\item Our example will have large worst case function approximation
  error, i.e. the $\ell_{\infty}$ error in approximating $Q^\star$
  will be (arbitrarily) large.
\item The example can be easily modified so that the concentrability
  coefficient (and the distribution mismatch coefficient) of the
  starting distribution (or a random initial policy)   will be
  $\Omega(2^H)$.
\end{itemize}

Furthermore, we will see that \alg succeeds on this example, provably.

We describe the construction below (see \pref{fig:binary_tree} for an example). There are two actions, denoted by $L$ and $R$. At initial state $s_0$, we have
$P(s_1 | s_0, L) = 1$; $P(s_1 | s_1, a) = 1$ for any $a\in \{L, R\}$. We set the reward of taking the left action at $s_0$ to be $1/2$, i.e.  $r(s_0,L)=1/2$.
This implies that there exists a policy which is guaranteed to obtain at least reward $1/2$.
When taking action $a= R$ at $s_0$, we deterministically transition
into a depth-H completely balanced binary tree.     We can further
constrain the MDP so that the optimal value is $1/2$ (coming from left
most branch), though, as we see later, this is not needed.


The feature construction of $\phi\in \mathbb{R}^d$ is as follows: For
$s_0, L$, we have $\phi(s_0, L) = e_1$ and
$\phi(s_0, R) = e_2$, and $\phi(s_1, a) = e_3$ for any $a\in\{L,R\}$,
where $e_1, e_2, e_3$ are the standard basis vectors. For all other
states $s\not\in\{s_0,s_1\}$, we have that $\phi\sa$ is constrained to be
orthogonal to $e_1$, $e_2$, and $e_3$, but otherwise arbitrary. In
other words, $\phi\sa$ has the first three coordinates equal to zero
for any $s\not\in \{s_0,s_1\}$ but can otherwise be pathological.

The intuition behind this construction is that the features $\phi$
are allowed to be arbitrary complicated for states inside the
depth-H binary tree, but are uncoupled with the features
on the left path. This implies that both \alg and any other algorithm do
not have access to a good global function approximator.

Furthermore, as discussed in the following remark, these features do not provide a good approximation of the true dynamics as a linear MDP.

\begin{remark}(Linear-MDP approximation failure). As the MDP is
  deterministic, we would need dimension $d =
\Omega(2^H)$ in order to approximate the MDP as a linear MDP (in the
sense required in~\cite{jin2019provably}). This is due to that the rank
of the transition matrix is $O(2^H)$.
\end{remark}

However the on-policy nature of \alg ensures that there always exists a best linear predictor that can predict $Q^{\pi}$ well under the optimal trajectory (the left most path) due to the fact that the features on $s_0$ and $s_1$ are decoupled from the features in the rest of the states inside the binary tree. Thus it means that the transfer error is always zero. This is formally stated in the following lemma.
\begin{corollary}[Corollary of Theorem~\ref{thm:agnostic}]
\alg is guaranteed to find a policy with value greater than $1/2-\epsilon$
with probability greater than $1-\delta$, using a number of samples
that is $O\left(\textrm{poly}(H,d, 1/\epsilon,\log(1/\delta))\right)$. This is due to
the transfer error being zero.
\label{cora:agnostic}
\end{corollary}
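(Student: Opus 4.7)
The plan is to instantiate Theorem~\ref{thm:agnostic} with the comparator $\pi^\star$ taken to be the policy that plays $L$ at $s_0$ (and any action at $s_1$). This policy deterministically collects $r(s_0,L)=1/2$ and then remains in the absorbing state $s_1$, so $V^{\pi^\star}(s_0)\geq 1/2$. If I can verify Assumption~\ref{ass:transfer_bias} with $\epsilon_{bias}=0$ under this comparator, Theorem~\ref{thm:agnostic} immediately yields $V^{\widehat\pi}(s_0) \geq 1/2 - \epsilon$. The stated sample complexity follows because for a finite $d$-dimensional feature map with bounded norm we have $\mathcal{I}_N(1)=\widetilde{O}(d)$, while the remaining problem parameters ($W$, $\omega$, $\xi$) can all be chosen polynomial in $H$ and $d$.

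To establish $\epsilon_{bias}=0$, note that $d^{\pi^\star}$ is supported on $\{s_0,s_1\}$, so $d^\star = d^{\pi^\star}(s)\circ\text{Unif}_{\Acal}(a)$ is supported only on the four pairs $\{s_0,s_1\}\times\{L,R\}$. The features at these pairs are $\phi(s_0,L)=e_1$, $\phi(s_0,R)=e_2$, and $\phi(s_1,L)=\phi(s_1,R)=e_3$, while by construction $\phi(s,a)\perp\text{span}\{e_1,e_2,e_3\}$ for every $(s,a)$ with $s$ inside the depth-$H$ tree. Decomposing any candidate $\theta=\theta_{123}+\theta_{rest}$ with $\theta_{123}\in\text{span}\{e_1,e_2,e_3\}$ and $\theta_{rest}$ in its orthogonal complement, Pythagoras gives $\|\theta\|^2 = \|\theta_{123}\|^2+\|\theta_{rest}\|^2$ and the critic loss splits as
\[
L(\theta;\rho^n_{\mix},Q^t_{b^n}-b^n) \;=\; L_1(\theta_{123}) + L_2(\theta_{rest}),
\]
where $L_1$ collects contributions from states in $\{s_0,s_1\}$ and $L_2$ those inside the tree.

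Since $e_1,e_2,e_3$ are distinct basis directions and the regression target is constant across $(s_1,L),(s_1,R)$---which holds because the bonus $b^n(s_1,\cdot)$ and the absorbing-state value $Q^{\pi^t}(s_1,\cdot;r+b^n)$ depend on the action only through the shared feature $e_3$---the unconstrained minimizer of $L_1$ hits zero by setting $\theta_{123}\cdot e_i$ equal to the target value at the corresponding pair. Choosing $W$ polynomially large in $d$ and $1/(1-\gamma)$ so that this exact three-dimensional fit coexists with the tree-side minimizer of $L_2$, the Pythagorean separability of both the loss and the norm constraint forces the constrained minimizer $\theta^t_\star$ to have its $\text{span}\{e_1,e_2,e_3\}$-projection equal to the zero-error fit. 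Because $d^\star$ is supported on points whose features lie entirely in this span, $L(\theta^t_\star;d^\star,Q^t_{b^n}-b^n)=0$ at every episode $n$ and iteration $t$, establishing $\epsilon_{bias}=0$.

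The main obstacle is the coupling induced by the single norm ball $\|\theta\|\leq W$: a priori the constrained minimizer could sacrifice fit quality on $\{s_0,s_1\}$ to save norm budget for the tree. The orthogonality of the features in and outside $\text{span}\{e_1,e_2,e_3\}$ is precisely what breaks this coupling---at the loss level via the additive split above and at the constraint level via Pythagoras---so that choosing $W$ sufficiently large decouples the two regression subproblems. A secondary technicality is the action-symmetry of the regression target at $s_1$; this follows because $s_1$ is absorbing with action-independent reward and because the bonus depends on $(s,a)$ only through $\phi(s,a)$.
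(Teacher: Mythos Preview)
Your approach is essentially the same as the paper's: both establish $\epsilon_{bias}=0$ by exploiting the orthogonality between $\mathrm{span}\{e_1,e_2,e_3\}$ and the tree features, so that fitting on $\{s_0,s_1\}$ is decoupled from fitting inside the tree. The paper's phrasing is slightly more direct and constructive: it takes \emph{any} minimizer $\widetilde\theta$ of the critic loss and simply overwrites its first three coordinates to the exact target values $Q^\pi(s_0,L)-b(s_0,L)$, $Q^\pi(s_0,R)-b(s_0,R)$, $Q^\pi(s_1,a)-b(s_1,a)$; by orthogonality this leaves every tree prediction unchanged and drives the $\{s_0,s_1\}$ error to zero, so the modified vector is again a minimizer and has zero loss under $d^\star$. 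Your Pythagorean split $L=L_1(\theta_{123})+L_2(\theta_{\mathrm{rest}})$ and $\|\theta\|^2=\|\theta_{123}\|^2+\|\theta_{\mathrm{rest}}\|^2$ is exactly the same observation, just stated structurally rather than constructively.

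One small over-reach: you assert that separability ``forces the constrained minimizer $\theta^t_\star$ to have its $\mathrm{span}\{e_1,e_2,e_3\}$-projection equal to the zero-error fit.'' That universal claim is stronger than what is needed and not literally true when the ball constraint binds---a unique constrained minimizer can sacrifice accuracy on $\{s_0,s_1\}$ to free norm for the tree side. Assumption~\ref{ass:transfer_bias} only requires \emph{some} element of the argmin set to have zero transfer error, and the paper's coordinate-overwriting argument is precisely an existence construction. Your flagged concern about norm-ball coupling is legitimate (and the paper is equally informal about it); adopting the modification viewpoint rather than arguing about ``the'' minimizer would tighten your write-up. You are, on the other hand, more careful than the paper in verifying the action-symmetry of the regression target at $s_1$.
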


We provide a proof of the corollary in~\pref{app:examples}.

\paragraph{Intuition for the success of \alg.} Since the corresponding
features of the binary
subtree have no guarantees in the worst-case, \alg may not successfully find the
best global policy in general. However, it does succeed in finding a
policy competitive with the best policy that remains in the
\emph{favorable} sub-part of the MDP satisfying the modeling
assumptions (e.g., the left most trajectory in
\pref{fig:binary_tree}). We do note that the feature orthogonality is
important (at least for a provably guarantee), otherwise the errors in fitting value functions on the
binary subtree can damage our value estimates on the favorable parts
as well; this behavior effect may be less mild in practice.

\paragraph{Delusional bias and challenges with Bellman backup (and Model-based)
  approaches.} While we do not explicitly construct algorithm
dependent lower bounds in our construction, we now discuss why
obtaining guarantees similar to ours with Bellman backup-based
(or even model-based) approaches may be challenging with the current
approaches in the literature. We are not assuming any
guarantees about the quality of the features in the
right subtree (beyond the aforementioned
orthogonality). Specifically, for Bellman backup-based approaches, the
following two observations (similar to those stressed
in~\citet{lu2018non}), when taken together, suggest difficulties for
algorithms which enforce consistency by assuming the Markov property holds:

\begin{itemize}
\item (Bellman Consistency) The algorithm does value based backups, with the property that
  it does an exact backup if this is possible. Note that due to our
  construction, such algorithms will seek to do an exact backup for $Q(s_0,R)$,
  where they estimate $Q(s_0,R)$ to be their value estimate on the right subtree.
  This is due to that the feature $\phi(s_0,R)$ is orthogonal to all other
  features, so a $0$ error, Bellman backup is possible, without altering
  estimation in any other part of the tree.
\item (One Sided Errors) Suppose the true value of
  the  subtree is less than $1/2-\Delta$,  and suppose that there
  exists a set of features where the algorithm approximates the value
  of the subtree to be larger than $1/2$. Current algorithms are not guaranteed
  to return values with one side error; with an arbitrary
  featurization, it is not evident why such a property would hold.
\end{itemize}

More generally, what is interesting about the state aggregation featurization is 
that it permits us to run \emph{any} tabular RL learning
algorithm. Here, it
is not evident that \emph{any} other current tabular RL algorithm,
including model-based approaches, can
achieve guarantees similar to our average-case guarantees, due to their
strong reliance on how they use the Markov property.
In this sense, our work provides a unique guarantee with respect to model misspecification in the RL
setting.

\paragraph{Failure of concentrability-based approaches} Some of the
prior results on policy optimization algorithms, starting from the
Conservative Policy Iteration
algorithm~\citet{kakade2002approximately} and further studied in a
series of subsequent
papers~\citep{Scherrer:API,geist2019theory,agarwal2019optimality}
provide the strongest guarantees in settings without exploration, but
considering function approximation. As remarked in
Section~\ref{sec:linear}, most works in this literature make
assumptions on the maximal density ratio between the initial state
distribution and comparator policy to be bounded. In the MDP
of~\pref{fig:binary_tree}, this quantity seems fine since the ratio is
at most $H$ for the comparator policy that goes on the left path (by
acting randomly in the initial state). However, we can easily change
the left path into a fully balanced binary tree as well, with $O(H)$
additional features that let us realize the values on the leftmost
path (where the comparator goes) exactly, while keeping all the other
features orthogonal to these. It is unclear how to design an initial
distribution to have a good concentrability coefficient, but \alg
still competes with the comparator following the leftmost path since
it can realize the value functions on that path exactly and the
remaining parts of the MDP do not interfere with this estimation.

\section{Experiments}
\label{section:experiments}

We provide experiments illustrating \alg's
performance on problems requiring exploration, and focus on showing the algorithm's flexibility to leverage existing policy gradient algorithms with neural networks (e.g., PPO~\citep{schulman2017proximal}).
Specifically, we show that for challenging exploration tasks, our algorithm combined with PPO significantly outperforms both vanilla PPO as well as PPO augmented with the popular RND exploration bonus~\cite{burda2018exploration}.

Specifically, we aim to show the following two properties of \alg:
\begin{enumerate}
\item \alg can build a policy cover that explores the state space widely; hence \alg is able to find near optimal policies even in tasks that have obvious local minimas and sparse rewards. 
\item the policy  cover in \alg avoids catastrophic forgetting issue one can experience in  policy gradient methods due to the possibility that the policy can become deterministic quickly.
\end{enumerate}

For all experiments, we use policies parameterized by fully-connected or convolutional neural networks.
We use a kernel $\phi(s, a)$ to compute bonus as \mbox{$b(s, a) = \phi(s, a)^\top \hat{\Sigma}^{-1}_{\mix} \phi(s, a)$}, where $\hat{\Sigma}_{\mix}$ is the empirical covariance matrix of the policy cover.
In order to prune any redundant policies from the cover, we use a rebalancing scheme to select a policy cover which induces maximal coverage over the state space.
This is done by finding weights $\alpha^{(n)}=(\alpha_1^{(n)}, ..., \alpha_n^{(n)})$ on the simplex at each episode which solve the optimization problem:
$\alpha^{(n)} = \argmax_{\alpha} \log \det \big[ \sum_{i=1}^n \alpha_i \hat{\Sigma}_i \big]$
where $\hat{\Sigma}_i$ is the empirical covariance matrix of $\pi_i$. Details of the implemented algorithm, network architectures and kernels can be found in Appendix \ref{app:exp}.

\subsection{Bidirectional Diabolical Combination Lock}
\begin{figure}[t!]
    \centering
    \begin{tabular}{cc}
    \begin{minipage}{0.47\columnwidth}
    \begin{figure}[H]
    \centering
    \includegraphics[width=\columnwidth]{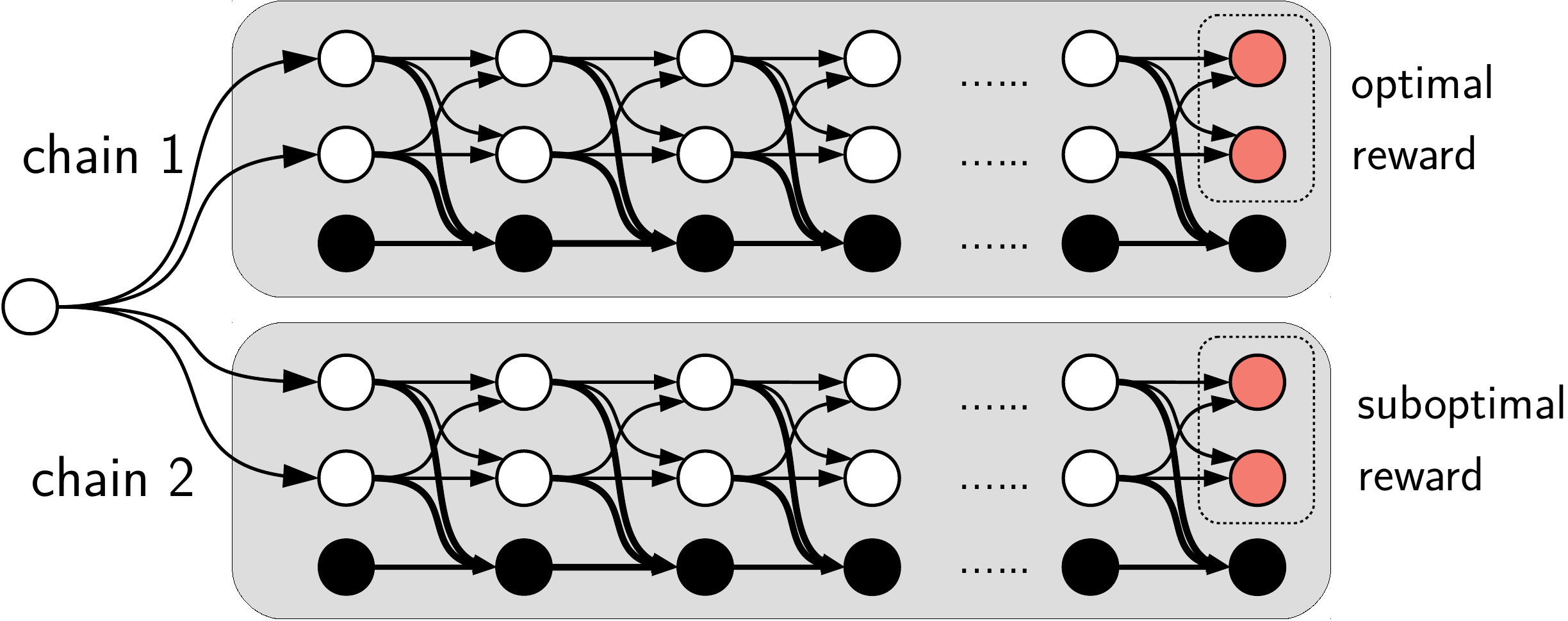} \end{figure}
    \end{minipage} &
  \begin{minipage}{0.4\columnwidth}
 \begin{tabular}{rrrrr}
    \toprule
    \multirow{2}*{Algorithm} & \multicolumn{4}{c}{Horizon}\\\cline{2-5} &$2$ & $5$ & $10$ & $15$ \\
    \midrule
      PPO & 1.0 & 0.0 & 0.0 & 0.0 \\
      PPO+RND & 0.75 & 0.40 & 0.50 & 0.55 \\
      \alg & 1.0 & 1.0 & 1.0 & 1.0 \\
      \bottomrule
  \end{tabular}
  \end{minipage}
  \end{tabular}
  \caption{\textbf{Left} panel shows the Bidirectional Diabolical Combination Lock domain (see text for details). \textbf{Right} panel shows success rate of different algorithms averaged over 20 different seeds.}
  \label{fig:mixture_visitations}
\end{figure}

We first provide experiments on an exploration problem designed to be
particularly difficult: the Bidirectional Diabolical Combination Lock (a harder version of the problem in \cite{homer}, see Figure \ref{fig:mixture_visitations}).
In this problem, the agent starts at an initial state $s_0$ (left most state), and based on its first action, transitions to one of two combination locks of length $H$.
Each combination lock consists of a chain of length $H$, at the end of
which are two states with high reward. At each level in the chain, $9$
out of $10$ actions lead the agent to a dead state (black) from which it
cannot recover and lead to zero reward.
The problem is challenging for exploration for several reasons: (1) \textit{Sparse positive rewards}: Uniform exploration has a $10^{-H}$ chance of reaching a high reward state; (2) \textit{Dense antishaped rewards}: The agent receives a reward of $-1/H$ for transitioning to a good state and $0$ to a dead state. A locally optimal policy is to transition to a dead state quickly; (3) \textit{Forgetting}: At the end of one of the locks, the agent receives a maximal reward of $+5$, and at the end of the other lock it receives a reward of $+2$.
Since there is no indication which lock has the optimal reward, if the agent does not explore to the end of both locks it will only have a $50\%$ chance of encountering the globally optimal reward. If it makes it to the end of one lock, it must remember to still visit the other one. 


\begin{figure}[h]
\centering
  \begin{subfigure}[b]{0.7\textwidth}
\centering
  \includegraphics[width=\textwidth]{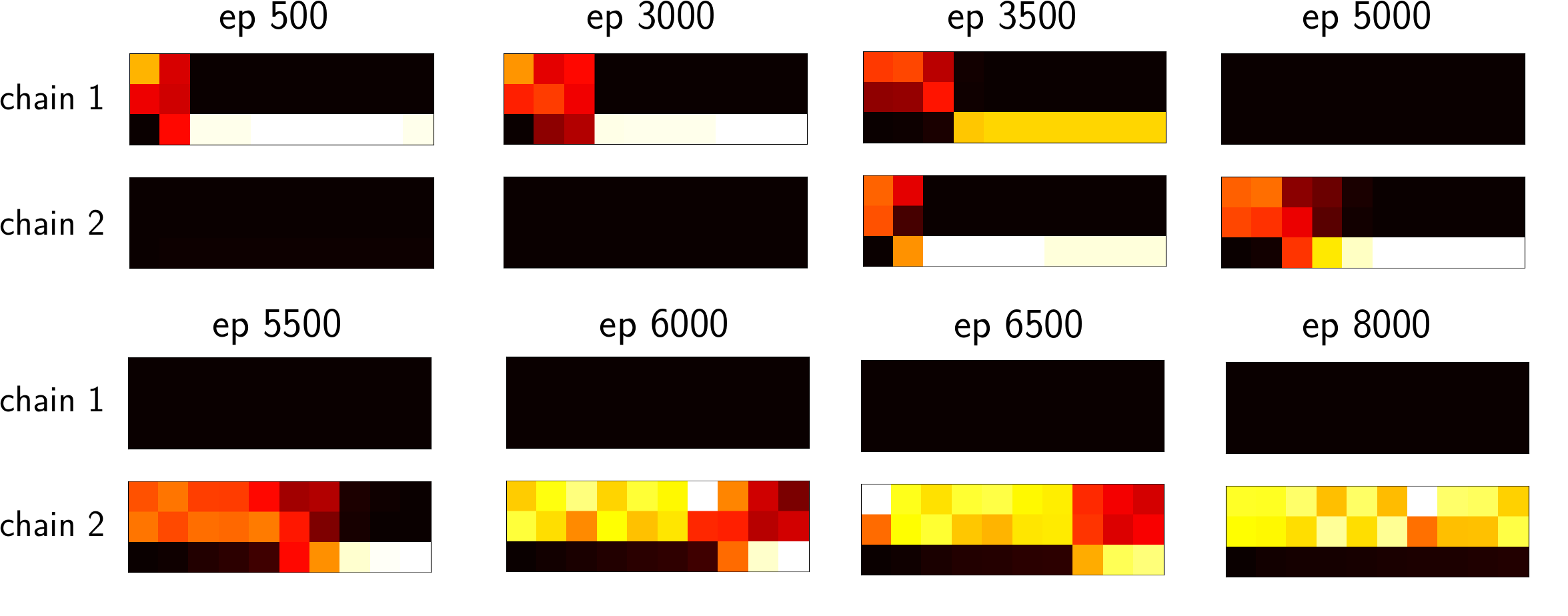}
  \caption{RND trace during training}
  \end{subfigure}
  \begin{subfigure}[b]{0.7\textwidth}
\centering
  \includegraphics[width=\textwidth]{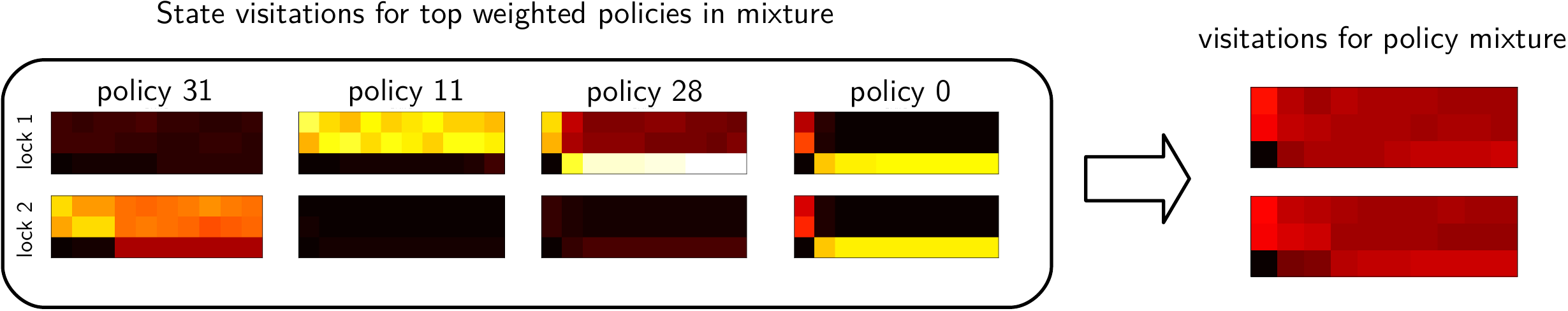}
  \caption{\alg final trace}
  \end{subfigure}
  \caption{ 
     \textbf{(a)} shows the state visitation frequencies (brighter color depicts higher
    visitation frequency) when the RND
    bonus~\cite{burda2018exploration} is applied to a policy gradient method throughout training
    on the above problem. 'Ep' denotes epoch number showing the
    progress during a single training run. Although the agent manages to explore to the
    end of one chain (chain 2 in this case), its policy quickly becomes deterministic and it
    ``forgets'' to explore the remaining chain, missing the optimal
    reward. RND obtains the optimal reward on roughly half of the
    initial seeds.
    \textbf{(b)} panel shows the traces of policies in the policy cover of \alg. Together the policy cover provides a near uniform coverage over both chains.}.
  \label{fig:lock}
\end{figure}

For both the policy network input and the kernel we used a binary vector encoding the current lock, state and time step as one-hot components. We compared to two other methods: a PPO agent, and a PPO agent with a RND exploration bonus, all of which used the same representation as input.

Performance for the different methods is shown in Figure \ref{fig:mixture_visitations} (left).
The PPO agent succeeds for the shortest problem of horizon $H=2$, but fails for longer horizons due to the antishaped reward leading it to the dead states.
The PPO+RND agent succeeds roughly $50\%$ of the time: due to its exploration bonus, it avoids the local minimum and explores to the end of one of the chains. However, as shown in Figure \ref{fig:lock} (a), the agent's policy quickly becomes deterministic and the agent forgets to go back and explore the other chain after it has reached the reward at the end of the first. \alg  succeeds over all seeds and horizon lengths.  We found that the policy cover provides near uniform coverage over both chains. In Figure~\ref{fig:lock} (b) we demonstrate the traces of some individual policies in the policy cover and the trace of the policy cover itself as a whole.


\subsection{Reward-free Exploration in Mazes}

  \begin{figure}[h!]
  \centering
    \includegraphics[width=1.\columnwidth]{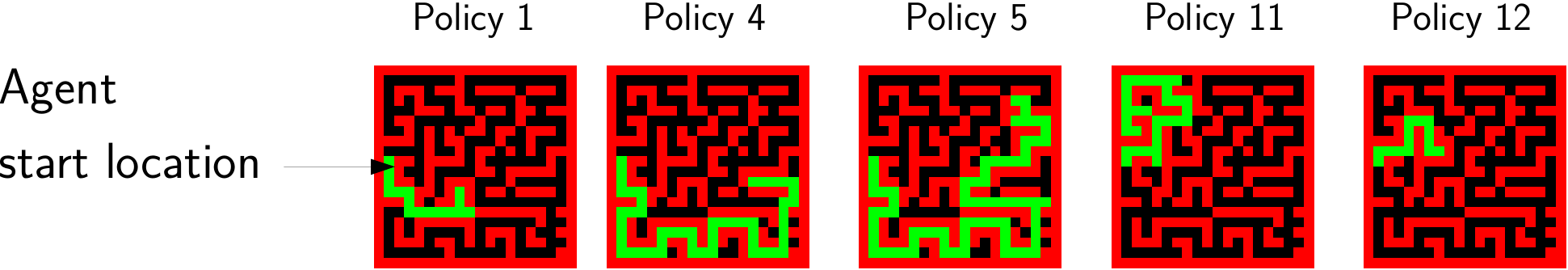}
    \caption{Different policies in the policy cover for the maze environment. All the locations visited by the agent during the policy execution are marked in green.}
    \label{fig:rpg_maze_policies}
  \end{figure}

\begin{figure}[t]
  \centering
  \begin{subfigure}[t]{0.48\columnwidth}
    \includegraphics[width=\columnwidth]{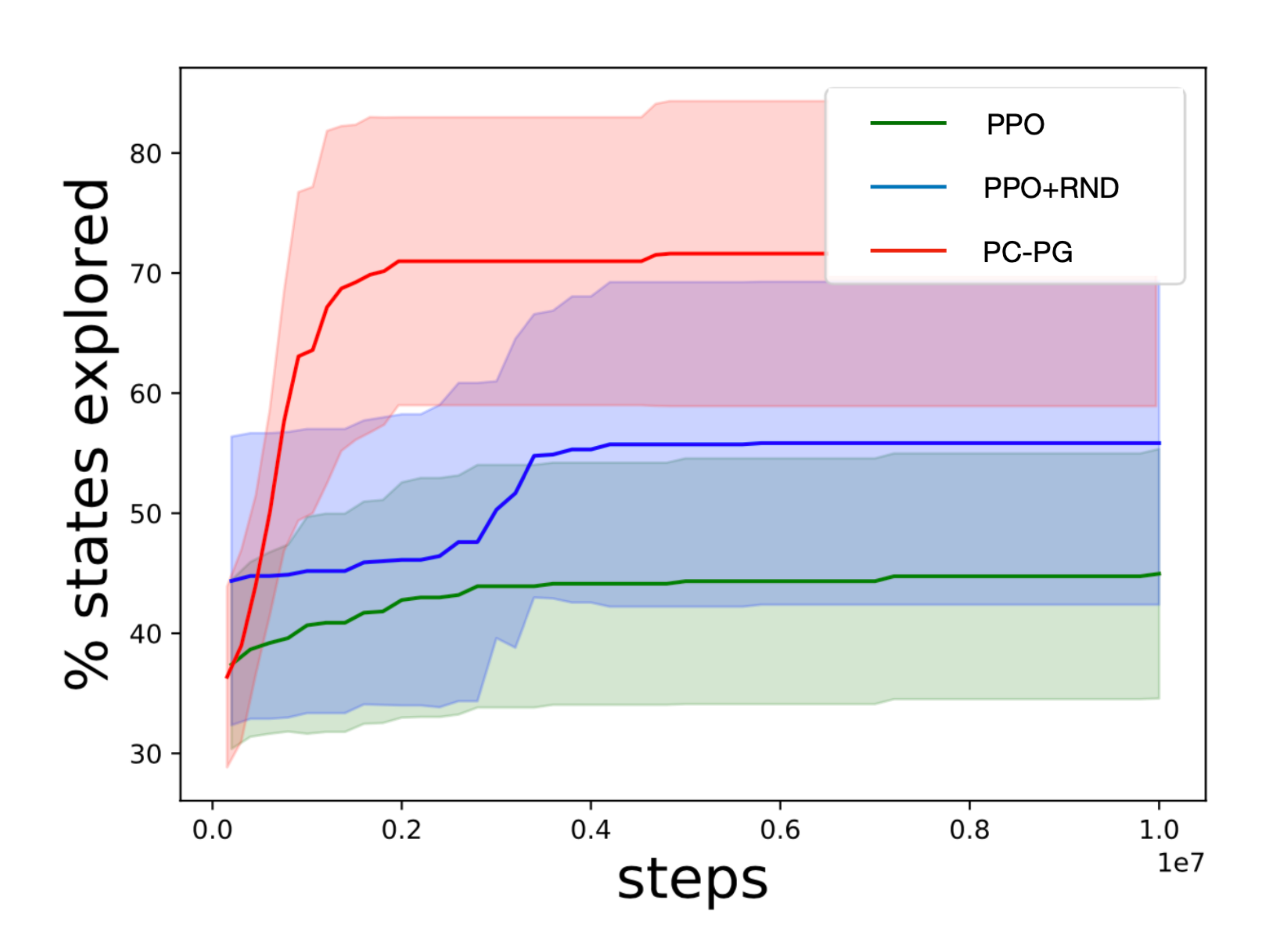}
    \end{subfigure}
    \begin{subfigure}[t]{0.48\columnwidth}
    \includegraphics[width=\columnwidth]{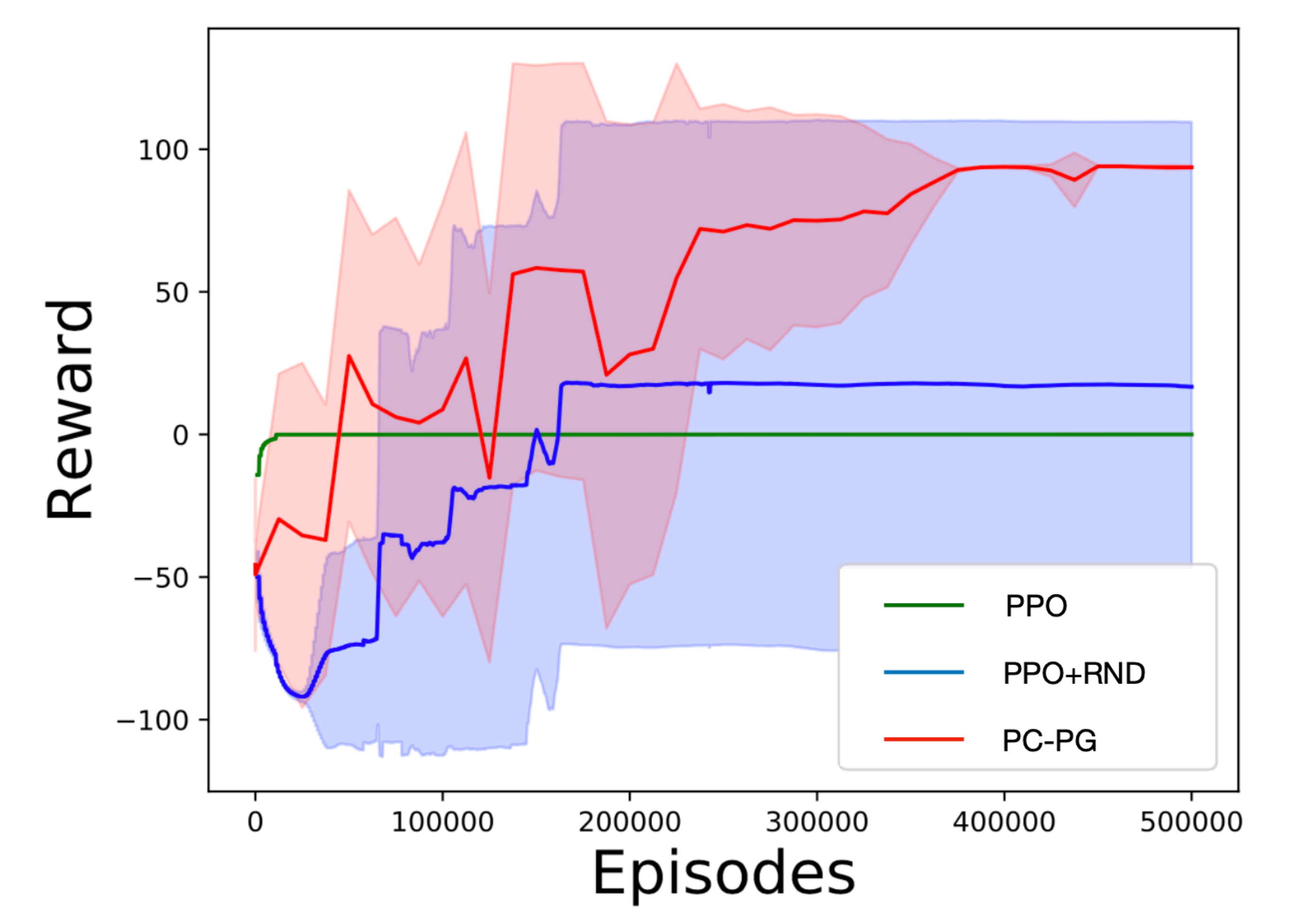}
    \end{subfigure}
  \caption{ \label{fig:reward_free_results}Results for maze (left) \& control (right). Solid line is mean and shaded region is standard deviation over $5$ seeds.}
%
\end{figure}

We next evaluated \alg in a reward-free exploration setting using maze environments adapted from \citep{VPN}.
At each step, the agent's observation consists of an RGB-image of the maze with the red channel representing the walls and the green channel representing the location of the agent (an example is shown in Figure \ref{fig:rpg_maze_policies}). 

We compare \alg, PPO and PPO+RND in the reward-free setting where the agent receives a constant environment reward of $0$ (note that PPO receives zero gradient; \alg and PPO+RND learn from their  reward bonus).
Figure \ref{fig:reward_free_results} (left) shows the percentage of locations in the maze visited by each of the agents over the course of 10 million steps.
The proportion of states visited by the PPO agent stays relatively constant, while the PPO+RND agent is able to explore to some degree. \alg quickly visits a significantly higher proportion of locations than the other two methods. Visualizations of traces from different policies in the policy cover can be seen in Figure \ref{fig:rpg_maze_policies} where we observe the diverse coverage of the policies in the policy cover.

\subsection{Continuous Control}

\begin{figure}[h!]
  \centering
  \includegraphics[width=0.5 \columnwidth]{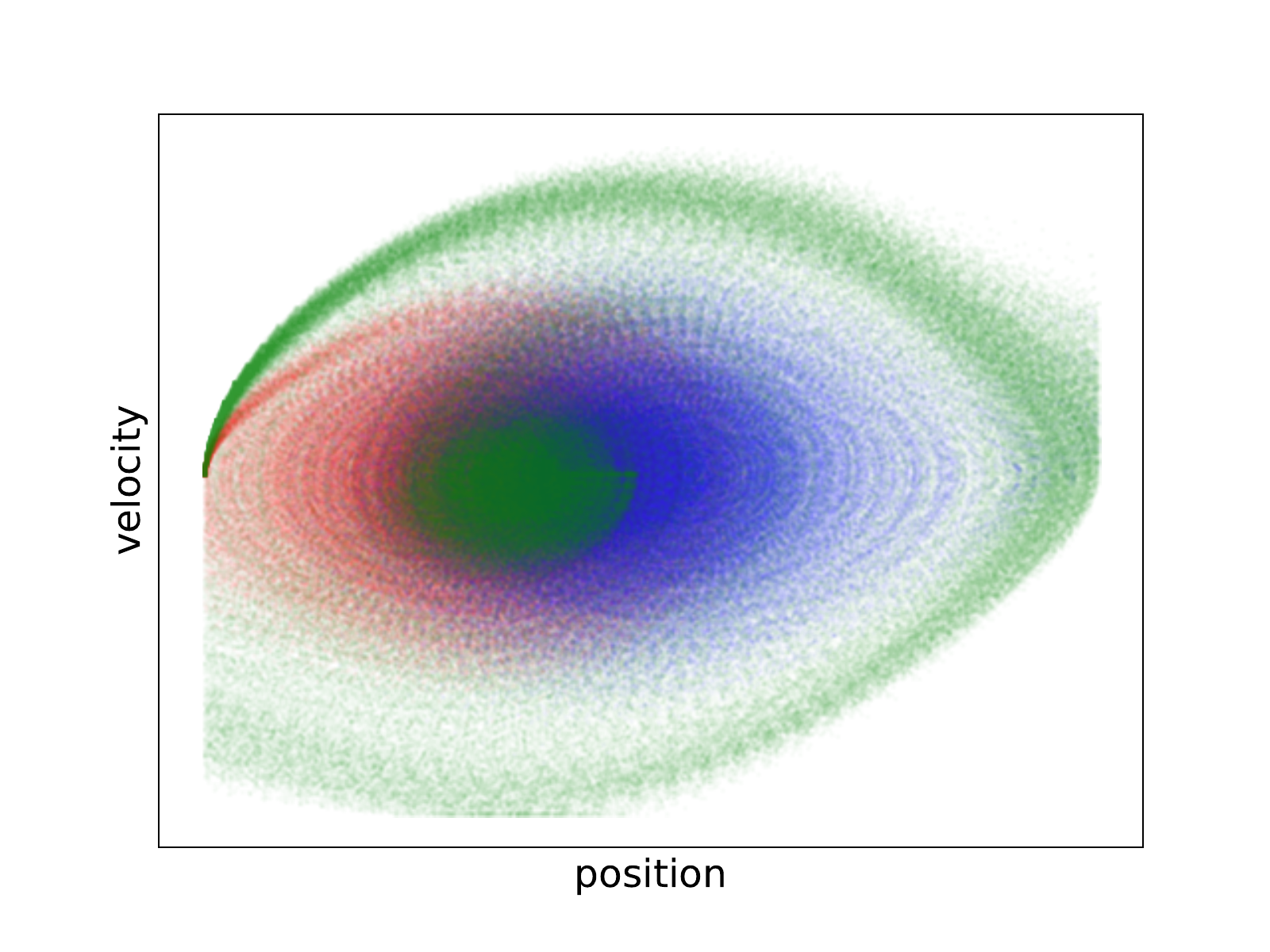}
  \caption{
  State visitations of different policies in \alg's policy cover on MountainCar.}
\label{fig:visitations}
\end{figure}

We further evaluated \alg on a continuous control task which requires exploration: continuous control MountainCar from OpenAI Gym \cite{brockman2016openai}.
Note here actions are continuous in $[-1, 1]$ and incur a small negative reward. Since the agent only receives a large reward $(+100)$ if it reaches the top of the hill, a locally optimal policy is to do nothing and avoid the action cost (e.g., PPO never escapes this local optimality in our experiments).  Results for PPO, PPO+RND and \alg are shown in Figure \ref{fig:reward_free_results}(right). The PPO agent quickly learns the locally optimal policy of doing nothing. 
The PPO+RND agent exhibits wide variability across seeds: some seeds solve the task while others not. 
The \alg agent consistently discovers a good policy across all seeds. In Figure~\ref{fig:visitations}, we show the traces of policies in the policy cover constructed by \alg.

\section{Discussion}

This work proposes a new policy gradient algorithm for balancing the
exploration-exploitation tradeoff in RL, which enjoys provable sample
efficiency guarantees in the linear and kernelized settings. Our
experiments provide evidence that the algorithm can be combined with
neural policy optimization methods and be effective in practice. An interesting direction for future work
would be to combine our approach with unsupervised feature learning
methods such as autoencoders~\cite{JarrettKRL09, DAE} or
noise-contrastive estimation~\cite{pmlr-v9-gutmann10a, CPC} in rich
observation settings to learn a good feature representation.

\section*{Acknowledgement}
The authors would like to thank Andrea Zanette and Ching-An Cheng for carefully reviewing the proofs, and Akshay Krishnamurthy for helpful discussions. 

\bibliography{refs,references,sref}
\bibliographystyle{plainnat}

\onecolumn
\appendix

\tableofcontents
\newpage


\newcommand{\wt}{\widetilde}
\newcommand{\wh}{\widehat}
\newcommand{\calM}{\mathcal{M}}

\section{NPG Analysis (\pref{alg:npg})}
\label{app:npg_analysis}

In this section, we analyze \pref{alg:npg} for a particular episode $n$.


In order to carry out our analysis, we first set up some auxiliary MDPs which are needed in our analysis. Throughout this section, we focus on episode $n$.

\subsection{Set up of Augmented MDPs}
\label{app:setup_mdps}



Denote
\begin{equation}
\mathcal{K}^n := \left\{\sa:  \phi\sa^{\top} \left(\Sigma^n_{\mix}\right)^{-1}\phi\sa \leq \beta  \right\}.
\label{eqn:known_set}
\end{equation}

That is, $\mathcal{K}^n$ contains state-action pairs that obtain positive reward bonuses. We abuse notation a bit by denoting $s\in\mathcal{K}^n$ if and only if $\sa\in\calK^n$ for all $a\in\Acal$.

We also add an extra action denoted as $a^{\dagger}$ in $\mathcal{M}^n$. For any $s\not\in\calK^n$, we add $a^\dagger$ to the set of available actions one could take at $s$. We set rewards and transitions as follows:
\begin{align}
r^n\sa = r\sa + b^n\sa + \one\{a = a^\dagger\}; \quad P^n(\cdot | s,a) = P(\cdot | s,a), \forall \sa, \quad P^n(s|s,a^\dagger) = 1,
\label{eq:constructed_mdp}
\end{align} where $r(s,a^\dagger) = b^n(s,a^\dagger) = 0$ for any $s$.

Note that at this point, we have three different kinds of MDPs that we will cross during the analysis:
\begin{enumerate}
\item the original MDP $\calM$---the one that \alg is ultimately optimizing;
\item the MDP with reward bonus $b^n\sa$---the one is optimized by NPG in each episode $n$ \emph{in the algorithm}, which we denote as $\calM_{b^n} = \{P, r\sa + b^n\sa\}$ with $P$ and $r$ being the transition and reward from $\calM$;  \label{item:mdp_2}
\item the MDP $\calM^n$ that is constructed in Eq.~\pref{eq:constructed_mdp} which is \emph{only used in analysis but not in algorithm}. \label{item:mdp_3}
\end{enumerate}

The relationship between $\calM_{b^n}$ (item \pref{item:mdp_2}) and $\calM^n$ (item \pref{item:mdp_3}) is that NPG \pref{alg:npg} runs on $\calM_{b^n}$ (NPG is not even aware of the existence of $\calM^n$) but we use $\calM^n$ to analyze the performance of NPG below.

\paragraph{Additional Notations.} We are going to focus on a fixed comparator policy $\widetilde\pi \in \Pi$. We denote $\widetilde\pi^n$ as the policy such that $\widetilde{\pi}(\cdot |s) = \widetilde\pi^n(\cdot |s)$ for $s\in\calK^n$, and $\widetilde\pi^n(a^\dagger |s ) = 1$ for $s\not\in\calK^n$.  This means that the comparator policy $\wt\pi^n$ will self-loop in a state $s\not\in\calK^n$ and collect maximum rewards.
We denote $\widetilde{d}_{\calM_n}$ as the state-action distribution of $\widetilde{\pi}^n$ under $\mathcal{M}^n$, and $V^{\pi}_{\calM^n}, Q^\pi_{\calM^n}$, and $A^\pi_{\calM^n}$ as the value, Q, and advantage functions of $\pi$ under $\calM^n$.  We also denote $Q^{\pi}_{b^n}\sa$  in short of $Q^{\pi}(s,a ; r + b^n)$, similarly $A^{\pi}_{b^n}\sa$ in short of $A^{\pi}(s,a; r + b^n)$, and $V^\pi_{b^n}(s) $ in short of $V^{\pi}(s; r + b^n)$.

\begin{remark} \label{remark:relationship_two_mdps} Note that policies used in the algorithm do not pick $a^\dagger$ (i.e., algorithms does not even aware of $\Mcal^n$).  Hence for any policy $\pi$ that we would encounter during learning, we have $V^{\pi}_{\calM^n}(s) = V^{\pi}_{b^n}(s)$ for all $s$, $Q^{\pi}_{\calM^n}\sa = Q^{\pi}_{b^n}\sa$ and $A^{\pi}_{\calM^n}\sa = A^{\pi}_{b^n}\sa$ for all $s$  with $a\neq a^\dagger$. This fact is important as our algorithm is running on $\Mcal_{b^n}$ while the performance progress of the algorithm is tracked under $\Mcal^n$.
\end{remark}

\subsection{Performance of NPG (\pref{alg:npg}) on the Augmented MDP $\mathcal{M}^n$}

In this section, we focus on analyzing the performance of NPG (\pref{alg:npg}) on a specific episode $n$. Specifically we leverage the Mirror Descent analysis similar to~\citet{agarwal2019optimality} to show that regret between the sequence of learned policies $\{\pi^t\}_{t=1}^{T}$ and the comparator $\widetilde{\pi}^n$ on the constructed MDP $\mathcal{M}^n$.

Via performance difference lemma \cite{kakade2003sample}, we immediately have:
\begin{align*}
V^{\widetilde\pi^n}_{\mathcal{M}^n} - V^{\pi}_{\mathcal{M}^n}  = \frac{1}{1-\gamma} \EE_{\sa\sim \widetilde{d}_{\mathcal{M}^n}} \left[A^{\pi}_{\mathcal{M}^n}(s,a)\right].
\end{align*}
For notation simplicity below, given a policy $\pi$ and state $s$, we denote $\pi_s$ in short of $\pi(\cdot | s)$.

%

\begin{lemma}[NPG Convergence] %
Consider any episode $n$. Setting $\eta = \sqrt{\frac{\log(A)  }{ W^2 T}} $, assume NPG updates policy as:
\begin{align*}
\pi^{t+1}(\cdot | s) \propto \begin{cases}
\pi^t(\cdot | s) \exp\left( \eta \widehat{A}^t_{b^n}\sa \right), & s\in \Kcal^n, \\
\pi^t(\cdot |s), & \text{else},
\end{cases}
\end{align*} {with}  $\pi^0$ initialized as:
\begin{align*}
\pi^0(\cdot |s) = \begin{cases} \text{Uniform}(\Acal) & s\in\Kcal^n\\ \text{Uniform}(\{a\in\Acal: \sa\not\in\Kcal^n\}) & \text{else}.  \end{cases}
\end{align*}
Assume that $\sup_{s,a}\left\lvert\widehat{A}^t_{b^n}\sa\right\rvert \leq W$ and $\EE_{a'\sim \pi^t_s} \widehat{A}^t_{b^n}(s,a') = 0$ for all $t$. Then the NPG outputs a sequence of policies $\{\pi^t\}_{t=1}^T$ such that on $\mathcal{M}^n$, when comparing to $\wt\pi^n$: 
\begin{align*}
&\frac{1}{T}\sum_{t=1}^T \left(V^{\wt{\pi}^n}_{\mathcal{M}^n} - V^{t}_{\calM^n}  \right) = \frac{1}{T}  \sum_{t=1}^T \left(V^{\wt{\pi}^n}_{\mathcal{M}^n} - V^{t}_{{b^n}}  \right) \\
&\quad \leq  \frac{1}{1-\gamma}\left(2W\sqrt{\frac{\log(A)}{T}} + \frac{1}{T}\sum_{t=1}^T \left( \EE_{\sa\sim \wt{d}_{\mathcal{M}^n}}\left( A^t_{{b^n}}\sa - \wh{A}^t_{{b^n}}\sa \right) \one\{s\in\calK^n\} \right)\right),
\end{align*} 
\label{lem:npg_construction}
\end{lemma}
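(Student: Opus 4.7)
The plan is to prove this via the standard mirror-descent-for-NPG analysis (as in~\citet{agarwal2019optimality}), adapted to the augmented MDP $\mathcal{M}^n$. The first equality in the display is essentially bookkeeping, since $\pi^t$ never chooses $a^\dagger$, so by the remark above, $V^{\pi^t}_{\mathcal{M}^n}(s_0) = V^{\pi^t}_{b^n}(s_0)$. The real work is the inequality.

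The first step is to invoke the performance difference lemma on $\mathcal{M}^n$ for the learned policy $\pi^t$ against the comparator $\wt\pi^n$:
\[
V^{\wt\pi^n}_{\mathcal{M}^n}(s_0) - V^{\pi^t}_{\mathcal{M}^n}(s_0) \;=\; \frac{1}{1-\gamma}\,\mathbb{E}_{s\sim \wt d_{\mathcal{M}^n}}\bigl[\langle \wt\pi^n_s - \pi^t_s,\, Q^{\pi^t}_{\mathcal{M}^n}(s,\cdot)\rangle\bigr].
\]
I would then split the outer expectation according to whether $s\in\mathcal{K}^n$ or $s\notin\mathcal{K}^n$. These two cases are handled by completely different arguments: on $\mathcal{K}^n$ we run mirror descent, while off $\mathcal{K}^n$ we need the self-loop/bonus structure to kill the contribution.

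For $s\notin \mathcal{K}^n$, the comparator $\wt\pi^n$ puts all its mass on $a^\dagger$, whereas $\pi^t$ is supported on $\{a:b^n(s,a)>0\}$ (i.e., $b^n(s,a)=\tfrac{1}{1-\gamma}$). Using the construction in~\eqref{eq:constructed_mdp}, namely $r^n(s,a^\dagger)=1$ with a deterministic self-loop, one gets
$Q^{\pi^t}_{\mathcal{M}^n}(s,a^\dagger) = 1 + \gamma V^{\pi^t}_{\mathcal{M}^n}(s)$, and since $\pi^t$ at $s$ only chooses actions whose one-step bonus is already $1/(1-\gamma)$, one has $V^{\pi^t}_{\mathcal{M}^n}(s)\ge 1/(1-\gamma)$. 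Combining,
\[
\langle \wt\pi^n_s - \pi^t_s, Q^{\pi^t}_{\mathcal{M}^n}(s,\cdot)\rangle \;=\; Q^{\pi^t}_{\mathcal{M}^n}(s,a^\dagger) - V^{\pi^t}_{\mathcal{M}^n}(s) \;=\; 1 - (1-\gamma) V^{\pi^t}_{\mathcal{M}^n}(s) \;\le\; 0.
\]
So the off-$\mathcal{K}^n$ contribution is non-positive and can be dropped. This is precisely the role of the indicator $\mathbf 1\{s\in\mathcal{K}^n\}$ in the bound.

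For $s\in\mathcal{K}^n$, the NPG updates on the states in $\mathcal{K}^n$ are exactly exponentiated-gradient updates on the simplex $\Delta(\mathcal{A})$ with per-coordinate ``gain'' $\widehat A^t_{b^n}(s,\cdot)$. Using that $\widehat A^t_{b^n}$ has zero mean under $\pi^t_s$ and is uniformly bounded by $W$, the standard mirror-descent / online EG regret bound (applied pointwise in $s$ and then integrated against $\wt d_{\mathcal{M}^n}$, which is the role played by Lemma~5.2 of~\citet{agarwal2019optimality}) gives, for every $s\in\mathcal{K}^n$,
\[
\frac{1}{T}\sum_{t=1}^T \langle \wt\pi^n_s - \pi^t_s,\, \widehat A^t_{b^n}(s,\cdot)\rangle \;\le\; \frac{\mathrm{KL}(\wt\pi^n_s\,\|\,\pi^0_s)}{\eta T} + \frac{\eta W^2}{2}.
\]
Since $\pi^0_s$ is uniform on $\mathcal{A}$ for $s\in\mathcal{K}^n$, the KL term is at most $\log A$, and plugging in $\eta=\sqrt{\log A/(W^2 T)}$ yields the $2W\sqrt{\log A/T}$ term. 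Finally, on $\mathcal{K}^n$ we have $A^{\pi^t}_{\mathcal{M}^n}=A^{\pi^t}_{b^n}$ by the remark, so writing $\langle \wt\pi^n_s - \pi^t_s, A^{\pi^t}_{b^n}\rangle = \langle \wt\pi^n_s - \pi^t_s, \widehat A^t_{b^n}\rangle + \langle \wt\pi^n_s - \pi^t_s, A^{\pi^t}_{b^n}-\widehat A^t_{b^n}\rangle$ and noting that $\langle \pi^t_s, A^{\pi^t}_{b^n}-\widehat A^t_{b^n}\rangle = 0$ (both terms have zero mean under $\pi^t_s$), the remainder collapses into the stated estimation-error term $\mathbb{E}_{s,a\sim \wt d_{\mathcal{M}^n}}[(A^t_{b^n}-\widehat A^t_{b^n})\mathbf 1\{s\in\mathcal{K}^n\}]$.

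The main obstacle I anticipate is the off-$\mathcal{K}^n$ cancellation: one has to be careful that the comparator policy $\wt\pi^n$ is genuinely dominated by $\pi^t$ on these states in the augmented MDP, which relies on the precise choice of reward $+1$ at $a^\dagger$, the self-loop transition, and the fact that $\pi^t$ has been pinned to the set $\{a:b^n(s,a)>0\}$ by the initialization and the multiplicative update restricted to $s\in\mathcal{K}^n$. After this reduction, the remaining mirror-descent step is textbook.
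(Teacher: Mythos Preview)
Your proposal is correct and follows essentially the same route as the paper: performance difference on $\mathcal{M}^n$, dropping the $s\notin\mathcal{K}^n$ contribution via the self-loop/$a^\dagger$ argument showing $A^{\pi^t}_{\mathcal{M}^n}(s,a^\dagger)\le 0$, and then a pointwise exponentiated-gradient/KL telescoping bound on $\mathcal{K}^n$ combined with the decomposition $A^t_{b^n} = \widehat A^t_{b^n} + (A^t_{b^n}-\widehat A^t_{b^n})$ and the zero-mean properties under $\pi^t_s$. The only cosmetic difference is that the paper bounds the normalizer via $\exp(x)\le 1+x+x^2$ to get $\eta W^2$ per step rather than your $\eta W^2/2$, but both yield the stated $2W\sqrt{\log(A)/T}$.
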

\begin{proof}
First consider any policy $\pi$ which uniformly picks actions among $\{a\in\Acal: \sa\not\in\calK^n\}$ at any $s\not\in\calK^n$. Via performance difference lemma, we have:
\begin{align*}
V^{\wt\pi^n}_{\calM^n} - V^{\pi}_{\calM^n} = \frac{1}{1-\gamma} \sum_{\sa} \wt{d}_{\calM^n}\sa A_{\calM^n}^{\pi}\sa \leq \frac{1}{1-\gamma} \sum_{\sa} \wt{d}_{\calM^n}\sa A_{\calM^n}^{\pi}\sa \one\{s \in \calK^n\},
\end{align*} where the last inequality comes from the fact that $A^{\pi}_{\calM^n}\sa\one\{s\not\in\calK^n\} \leq 0$. To see this, first note that that for any $s\not\in\calK^n$,  $\wt\pi^n$ will deterministically pick $a^\dagger$, and $Q^\pi_{\calM^n}(s,a^\dagger) = 1 + \gamma V^{\pi}_{\mathcal{M}^n}(s)$ as taking $a^\dagger$ leads the agent back to $s$. Second, since $\pi$ uniformly picks actions among $\{a: \sa\not\in\calK^n\}$, we have $V^{\pi}_{\mathcal{M}_n} \geq 1/(1-\gamma)$ as the reward bonus $b^n\sa$ on $\sa\not\in\calK^n$ is $1/(1-\gamma)$. Hence, we have \begin{align*}
A^{\pi}_{\calM^n}(s,a^\dagger) = Q^\pi_{\calM^n}(s,a^\dagger) - V^\pi_{\calM^n}(s) = 1 - (1-\gamma) V^{\pi}_{\calM^n}(s) \leq 0, \quad \forall s\not\in\calK^n.
\end{align*} %

Recall \pref{alg:npg}, $\pi^t$ chooses actions uniformly randomly among $\{a: \sa\not\in\calK^n\}$ for $s\not\in\calK^n$, thus we have:
\begin{align*}
(1-\gamma)\left(V^{\wt\pi^n}_{\calM^n} - V^{t}_{\calM^n}\right) \leq \sum_{\sa} \wt{d}_{\calM^n}\sa A_{\calM^n}^{t}\sa \one\{s \in \calK^n\} = \sum_{\sa} \wt{d}_{\calM^n}\sa A_{b^n}^{t}\sa \one\{s \in \calK^n\},
\end{align*} where the last equation uses the fact that $A^t_{b^n}(s,a) = A^t_{\Mcal^n}(s,a)$ for $a\neq a^\dagger$ and the fact that for $s\in\Kcal^n$, $\wt\pi^n$ never picks $a^\dagger$ (i.e., $\wt{d}_{\Mcal^n}(s,a^\dagger) = 0$ for $s\in\Kcal^n$). 

Recall the update rule of NPG,
\begin{align*}
{\pi}^{t+1}(\cdot |s) \propto \pi^t(\cdot | s) \exp\left(\eta \left(\widehat{A}^t_{b^n}(s,\cdot)\right)\one\{s\in\calK^n\}   \right), \forall s,
\end{align*} where the centered feature is defined as $\bar\phi^t\sa = \phi\sa - \EE_{a'\sim \pi^t(\cdot |s)} \phi(s,a') $. %
This is equivalent to updating $s\in \Kcal^n$ while holding $\pi(\cdot |s)$ fixed for $s\not\in\Kcal^n$, i.e.,
\begin{align*}
\pi^{t+1}(\cdot | s) \propto \begin{cases}
\pi^t(\cdot | s) \exp\left( \eta\widehat{A}^t_{b^n}(s,\cdot) \right), & s\in\Kcal^n, \\
\pi^t(\cdot | s), & else.
\end{cases}
\end{align*}
Now let us focus on any $s\in\Kcal^n$.
Denote the normalizer $z^t = \sum_{a}\pi^t(a | s) \exp\left(\eta \widehat{A}^t_{b^n}(s,a) \right) $. We have that:
\begin{align*}
\kl(\wt\pi^n_s, \pi^{t+1}_s) - \kl(\wt\pi^n_s, \pi^t_s) = \EE_{a\sim \wt\pi^n_s } \left[ -\eta \wh{A}^t_{b^n}\sa + \log(z^t) \right],
\end{align*}
where we use $\pi_s$ as a shorthand for the vector of probabilities $\pi(\cdot|s)$ over actions, given the state $s$. For $\log(z^t)$, using the assumption that $\eta \leq 1/W$, we have that $\eta \wh{A}^t_{b^n}\sa \leq 1$, which allows us to use the inequality $\exp(x) \leq 1 + x + x^2$ for any $x\leq 1$ and leads to the following inequality:
\begin{align*}
&\log(z^t)  = \log\left( \sum_{a} \pi^t(a|s) \exp(\eta\widehat{A}^t_{b^n}\sa) \right) \\
&\leq \log\left( \sum_{a}\pi^t(a|s) \left( 1 + \eta\wh{A}^t_{b^n}\sa   + \eta^2 \left(\widehat{A}^t_{b^n}\sa \right)^2  \right) \right) \\
&   = \log\left( 1 + \eta^2 W^2 \right) \leq \eta^2 W^2,
\end{align*} where we use the fact that $\sum_a \pi^t(a|s) \widehat{A}^t_{b^n}\sa = 0$.

Hence, for $s\in\Kcal^n$ we have:
\begin{align*}
\kl(\wt\pi^n_s, \pi^{t+1}_s) - \kl(\wt\pi^n_s, \wt\pi^t_s) \leq -\eta \EE_{a\sim \wt\pi^n_s} \wh{A}^t_{b^n} + \eta^2 W^2.
\end{align*}  Adding terms across rounds, and using the telescoping sum, we get:
\begin{align*}
\sum_{t=1}^T \EE_{a\sim \wt\pi^n_s}\wh{A}^t_{b^n}\sa \leq \frac{1}{\eta}\kl(\wt\pi^n_s, \pi^1_s) + \eta T W^2 \leq \frac{\log(A)}{\eta} + \eta T W^2, \quad \forall s\in \Kcal^n.
\end{align*} 
Add $\EE_{s\sim \wt{d}_{\calM^n}}$, we have:
\begin{align*}
\sum_{t=1}^T \EE_{\sa\sim \wt{d}_{\calM^n}} \left[\wh{A}^t_{b^n}\sa\one\{s\in\calK^n\}\right] \leq \frac{\log(A)}{\eta} + \eta T W^2 \leq 2W \sqrt{\log(A) T }.
\end{align*}
Hence, for regret on $\mathcal{M}_n$, we have:
\begin{align*}
&\sum_{t=1}^T \left( V^{\wt\pi^n}_{\calM^n} - V_{\calM^n}^{t} \right)\\
& \leq \sum_{t=1}^T \EE_{\sa\sim \wt{d}_{\calM^n}} \left[\wh{A}^t_{b^n}\sa \one\{s\in\calK^n\} \right]+ \sum_{t=1}^T \left( \EE_{\sa\sim \wt{d}_{\calM^n}}\left( A^t_{b^n}\sa - \wh{A}^t_{b^n}\sa \right)\one\{s\in\calK^n\} \right)\\
& \leq 2W\sqrt{\log(A) T} + \sum_{t=1}^T \left( \EE_{\sa\sim \wt{d}_{\calM^n}}\left( A^t_{b^n}\sa - \wh{A}^t_{b^n}\sa \right)\one\{s\in\calK^n\} \right).
\end{align*}
Now using the fact that $\pi^t$ never picks $a^\dagger$, we have $V^{t}_{\calM^n} = V^{t}_{{b^n}}$. 
This concludes the proof.
\end{proof}

Note that the second term of the RHS of the inequality in the above lemma measures the  average estimation error of $\wh{A}^t_{{b^n}}$.  Below, for \alg's analysis, we bound the critic prediction error under $d^\star$.

\section{Relationship between $\Mcal^n$ and $\Mcal$}


We need the following lemma to relate the probability of a known state being visited by $\widetilde{\pi}^n$ under $\calM^n$ and the probability of the same state being visited by $\widetilde{\pi}$ under $\calM_{b^n}$.  Note that intuitively as $\widetilde{\pi}^n$ always picks $a^\dagger$ outside $\calK^n$, it should have smaller probability of visiting the states inside $\calK^n$ (once $\widetilde{\pi}^n$ escapes, it will be absorbed and will never return back to $\calK^n$). Also recall that $\Mcal_{b^n}$ and $\Mcal$ share the same underlying transition dynamics. So for any policy, we simply have $d^{\pi}_{\Mcal_{b^n}} = d^{\pi}$.

The following lemma formally states this.
\begin{lemma}
\label{lem:prob_absorb}
Consider any state $s\in\calK^n$, we have:
\begin{align*}
\widetilde{d}_{\Mcal^n}(s,a) \leq d^{\widetilde{\pi}}\sa, \forall a \in \Acal,
\end{align*} where recall $\wt{d}_{\Mcal^n}$ is the state-action distribution of $\wt\pi^n$ under $\Mcal^n$.
\end{lemma}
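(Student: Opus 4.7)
My approach is a direct coupling argument between trajectories generated by $\widetilde\pi^n$ in $\mathcal{M}^n$ and trajectories generated by $\widetilde\pi$ in the original MDP $\mathcal{M}$. Observe three key facts from the setup of $\mathcal{M}^n$ in Section~\ref{app:setup_mdps}: (i) the transition kernel of $\mathcal{M}^n$ coincides with that of $\mathcal{M}$ on every state--action pair $(s,a)$ with $a \neq a^\dagger$; (ii) by the definition of $\widetilde\pi^n$ in Remark following \eqref{eq:constructed_mdp}, $\widetilde\pi^n(\cdot\mid s)=\widetilde\pi(\cdot\mid s)$ for every $s\in\mathcal{K}^n$, and $\widetilde\pi^n(a^\dagger\mid s)=1$ for every $s\notin\mathcal{K}^n$; and (iii) action $a^\dagger$ is absorbing in $\mathcal{M}^n$, so once $\widetilde\pi^n$ visits a state outside $\mathcal{K}^n$, it never returns to $\mathcal{K}^n$.

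From these facts I would construct a coupling on the common starting state $s_0$: run both processes with the same random seed, choosing actions via $\widetilde\pi$ whenever the current state lies in $\mathcal{K}^n$, and using the identical realization of the next-state transition (which is legal by (i) since no $a^\dagger$ has been taken yet). Under this coupling, the two trajectories agree on the event $E_t := \{s_0, s_1, \ldots, s_t \in \mathcal{K}^n\}$. Hence for any $s\in\mathcal{K}^n$,
\begin{equation*}
\Pr^{\widetilde\pi^n}_{\mathcal{M}^n}(s_t = s \mid s_0) \;=\; \Pr^{\widetilde\pi}_{\mathcal{M}}\bigl(s_t = s,\, E_t \mid s_0\bigr) \;\leq\; \Pr^{\widetilde\pi}_{\mathcal{M}}(s_t = s \mid s_0),
\end{equation*}
since on the left-hand side reaching $s\in\mathcal{K}^n$ at time $t$ in $\mathcal{M}^n$ forces the entire history to lie in $\mathcal{K}^n$ (otherwise the trajectory would have been absorbed by $a^\dagger$ and could not be at $s\in\mathcal{K}^n$).

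Finally, since $s\in\mathcal{K}^n$ means $\widetilde\pi^n(a\mid s)=\widetilde\pi(a\mid s)$ for every $a\in\mathcal{A}$, multiplying the above inequality by that common factor and summing the discounted occupancies,
\begin{equation*}
\widetilde d_{\mathcal{M}^n}(s,a) \;=\; (1-\gamma)\sum_{t=0}^{\infty}\gamma^t \Pr^{\widetilde\pi^n}_{\mathcal{M}^n}(s_t=s\mid s_0)\,\widetilde\pi^n(a\mid s) \;\leq\; (1-\gamma)\sum_{t=0}^{\infty}\gamma^t\Pr^{\widetilde\pi}_{\mathcal{M}}(s_t=s\mid s_0)\,\widetilde\pi(a\mid s) \;=\; d^{\widetilde\pi}(s,a),
\end{equation*}
which is exactly the claim. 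The only subtle step is the coupling argument that expresses the occupancy in $\mathcal{M}^n$ as a restricted occupancy in $\mathcal{M}$; once this is in place the rest is bookkeeping. I do not anticipate any technical obstacle beyond carefully writing the coupling so that the matching of trajectories on $E_t$ and the absorption property on $E_t^c$ are both used.
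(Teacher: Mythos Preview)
Your proposal is correct and follows essentially the same approach as the paper: the paper proves the per-timestep inequality $\widetilde d_{\mathcal{M}^n,h}(s,a)\le d^{\widetilde\pi}_h(s,a)$ for $s\in\mathcal{K}^n$ by induction on $h$, using exactly the absorption observation you highlight (once $\widetilde\pi^n$ leaves $\mathcal{K}^n$ it self-loops via $a^\dagger$ and cannot return). Your coupling is just a probabilistic repackaging of that induction---the event $E_t$ plays the role of the inductive hypothesis---so the two arguments are equivalent.
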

\begin{proof}We prove by induction. Recall $\widetilde{d}_{\calM^n}$ is the state-action distribution of $\widetilde\pi^n$ under $\calM^n$, and $d^{\widetilde\pi}$ is the state-action distribution of $\widetilde\pi$ under both $\calM_{b^n}$ and $\calM$ as they share the same dynamics.

Starting at $h = 0$, we have:
\begin{align*}
\widetilde{d}_{\Mcal^n,0}(s_0, a) = d^{\widetilde{\pi}}_{0}(s_0,a),
\end{align*} as $s_0$ is fixed and $s_0\in\calK^n$, and $\widetilde{\pi}^n(\cdot | s_0) = \widetilde\pi(\cdot | s_0)$.

Now assume that at time step $h$, we have that for all $s\in \calK^n$, we have:
\begin{align*}
\widetilde{d}_{\Mcal^n, h}(s,a) \leq d^{\widetilde{\pi}}_{h}\sa, \forall a \in \Acal.
\end{align*} Now we proceed to prove that this holds for $h+1$. By definition, we have that for $s\in\calK^n$,
\begin{align*}
&\widetilde{d}_{\Mcal^n, h+1 }(s) = \sum_{s',a'} \widetilde{d}_{\Mcal^n, h}(s',a') P_{\calM^n}(s | s',a') \\
&= \sum_{s',a'} \one\{s'\in\calK^n\} \widetilde{d}_{\Mcal^n, h}(s',a') P_{\calM^n}(s | s',a') = \sum_{s',a'} \one\{s'\in\calK^n\} \widetilde{d}_{\Mcal^n, h}(s',a') P(s | s',a')
\end{align*} as if $s'\not\in\calK^n$, $\widetilde\pi^n$ will deterministically pick $a^\dagger$ (i.e., $a' = a^\dagger$) and $P_{\calM^n}(s | s' ,a^\dagger) = 0$.

On the other hand, for $d^{\widetilde{\pi}}_{h+1}\sa$, we have that for $s\in\calK^n$,
\begin{align*}
&d^{\widetilde{\pi}}_{h+1}\sa = \sum_{s',a'}d^{\widetilde{\pi}}_{h}(s',a') P(s|s',a') \\
&=\sum_{s',a'}\one\{s'\in\calK^n\} d^{\widetilde{\pi}}_{h}(s',a') P(s|s',a') + \sum_{s',a'}\one\{s\not\in\calK^n\}d^{\widetilde{\pi}}_{h}(s',a') P(s|s',a') \\
& \geq \sum_{s',a'}\one\{s'\in\calK^n\} d^{\widetilde{\pi}}_{h}(s',a') P(s|s',a') \\
& \geq \sum_{s',a'}\one\{s'\in\calK^n\} \widetilde{d}_{\calM^n,h}(s',a') P(s|s',a') = \widetilde{d}_{\Mcal^n, h+1}(s).
\end{align*} Using the fact that $\widetilde\pi^n(\cdot|s) = \widetilde\pi(\cdot | s)$ for $s\in\calK^n$, we conclude that the inductive hypothesis holds at $h+1$ as well. Thus it holds for all $h$.  Using the definition of average state-action distribution, we conclude the proof.
\end{proof}

We now establish a standard simulation lemma-style result to link the performance of policies on $\Mcal^n$ to the performance on the real MDP $\Mcal$, before bounding the error in the lemma using a linear bandits potential function argument as sketched above. These arguments allow us to translate the error bounds from Appendix~\ref{app:npg_analysis} from the augmented MDP $\mathcal{M}^n$ to the actual MDP $\calM$.

\begin{lemma}[Policy Performances on $\calM^n$, $\calM_{b^n}$ $\calM$]
At each episode $n$, denote $\{\pi^t\}_{t=1}^T$ as the sequence of policies generated from NPG in that episode. we have that for $\wt\pi^n$ and $\pi^t$ for any $t\in [T]$:
\begin{align*}
& V^{\wt\pi^n}_{\calM^n} \geq V^{\wt\pi}_{\calM},\\
& V^{t}_{\calM} \geq V^{t}_{{b^n}} - \frac{1}{1-\gamma}\left(\sum_{\sa\not\in\calK^n} d^t\sa\right) .
\end{align*}
\label{lem:perf_absorb}
\end{lemma}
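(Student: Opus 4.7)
The lemma packages two independent comparisons: one between the comparator $\wt\pi$ on $\Mcal$ and its modification $\wt\pi^n$ on the augmented MDP $\Mcal^n$, and one between the bonus-augmented value of the learner $\pi^t$ and its value on the real MDP. Both are straightforward coupling/accounting arguments that exploit three structural facts already established in Appendix~\ref{app:setup_mdps}: (i) the transition kernels of $\Mcal$, $\Mcal_{b^n}$, and $\Mcal^n$ agree on every state-action pair with $s\in\Kcal^n$, (ii) on $\Kcal^n$ the modified comparator satisfies $\wt\pi^n(\cdot\mid s)=\wt\pi(\cdot\mid s)$, and (iii) the bonus vanishes on $\Kcal^n$ since $s\in\Kcal^n$ means $b^n(s,a)=0$ for all $a$. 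No concentration or regression inequalities are needed; this is deterministic bookkeeping.

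For the first inequality, the plan is to couple the two processes (same randomness in transitions and actions) starting at $s_0$ and introduce the stopping time $\tau=\inf\{h:s_h\notin\Kcal^n\}$, with $\tau=\infty$ if the trajectory never leaves $\Kcal^n$. By (i)--(iii) above, the state sequences, action sequences, and instantaneous rewards of $\wt\pi$ on $\Mcal$ and $\wt\pi^n$ on $\Mcal^n$ coincide for all $0\le h<\tau$. At step $\tau$, $\wt\pi^n$ deterministically switches to $a^\dagger$, which self-loops in $\Mcal^n$ and delivers reward $1$ forever, contributing $\gamma^\tau/(1-\gamma)$ in expectation. Meanwhile, the continuation value of $\wt\pi$ in $\Mcal$ from $s_\tau$ is at most $1/(1-\gamma)$ since $r\in[0,1]$. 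Taking the coupled expectation, $V^{\wt\pi^n}_{\Mcal^n}(s_0)-V^{\wt\pi}_{\Mcal}(s_0) = \E[\gamma^\tau(1/(1-\gamma) - V^{\wt\pi}_{\Mcal}(s_\tau))]\geq 0$.

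For the second inequality, the observation is that $\Mcal_{b^n}$ differs from $\Mcal$ only in the reward function, so the discounted state-action occupancy $d^t$ induced by $\pi^t$ is identical in both MDPs. Consequently
\[
V^{t}_{b^n}(s_0)-V^{t}_{\Mcal}(s_0) \;=\; \E^{\pi^t}\!\left[\sum_{h=0}^\infty \gamma^h\, b^n(s_h,a_h)\,\Big|\,s_0\right] \;=\; \frac{1}{1-\gamma}\sum_{s,a} d^t(s,a)\,b^n(s,a).
\]
Plugging in $b^n(s,a)=\tfrac{1}{1-\gamma}\one\{(s,a)\notin\Kcal^n\}$ and rearranging yields the claimed bound (modulo the exact power of $1/(1-\gamma)$, which drops out of the calculation). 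The entire second part is one line after one sets up the occupancy-measure identity.

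There is no real obstacle here: the only thing to verify carefully is the alignment of the three MDPs on $\Kcal^n$ so that the coupling in part (1) is exact up to $\tau$, and in particular that $\wt\pi^n$ never selects $a^\dagger$ before $\tau$ (ensuring the pre-$\tau$ trajectories are literally identical) and that $b^n$ truly vanishes on $\Kcal^n$ (ensuring the pre-$\tau$ rewards match). Both follow directly from the definitions in Appendix~\ref{app:setup_mdps}.
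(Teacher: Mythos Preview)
Your proposal is correct and mirrors the paper's proof almost exactly: the paper handles part one by the same absorption observation (once $\wt\pi^n$ reaches $s\notin\Kcal^n$ it self-loops and collects reward $1$, which dominates whatever $\wt\pi$ can earn in $\Mcal$), and part two by the same occupancy-measure identity (since $\pi^t$ never takes $a^\dagger$, $d^t$ is unchanged across $\Mcal$ and $\Mcal_{b^n}$, so the value gap is exactly the expected bonus). Your coupling/stopping-time formulation for part one is just a slightly more explicit rendering of the paper's one-line argument.
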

\begin{proof}
Note that when running $\wt\pi^n$ under $\calM^n$, once $\wt\pi^n$ visits $s\not\in\calK^n$, it will be absorbed into $s$ and keeps looping there and receiving the maximum reward $1$.  Note that $\wt\pi$ receives reward no more than $1$ and in $\calM$ we do not have reward bonus.

Recall that $\pi^t$ never takes $a^\dagger$. Hence $d^t\sa = d^{t}_{\calM_{b^n}}\sa$ for all $\sa$.  Recall that the reward bonus is defined as $\frac{1}{1-\gamma}\one\{\sa\not\in\calK^n\}$. Using the definition of $b^n\sa$ concludes the proof.
\end{proof}

The lemma below relates the escaping probability to an elliptical potential function and quantifies the progress made by the algorithm by the maximum information gain quantity. 

\begin{lemma}[Potential Function Argument]
Consider the sequence of policies $\{\pi^n\}_{n=1}^N$ generated from \pref{alg:epoc}. We have:
\begin{align*}
\sum_{n=0}^{N-1} V^{\pi^{n+1}} \geq \sum_{n=0}^{N-1} V^{\pi^{n+1}}_{b^n} -  \frac{2\mathcal{I}_{N}(\lambda)}{ \beta (1-\gamma) } 
\end{align*}
\label{lem:potential_argument_n}
\end{lemma}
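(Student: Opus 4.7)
The plan is to combine the simulation-style bound from \pref{lem:perf_absorb} with a standard elliptical potential (information-gain) argument analogous to the one used in linear bandits. Applying \pref{lem:perf_absorb} with $\pi^t=\pi^{n+1}$ at each episode $n$ gives
\[
V^{\pi^{n+1}} \;\ge\; V^{\pi^{n+1}}_{b^n} \;-\; \frac{1}{1-\gamma}\sum_{(s,a)\notin\Kcal^n} d^{\pi^{n+1}}(s,a),
\]
and summing over $n=0,\dots,N-1$ reduces the claim to the deterministic inequality
\[
\sum_{n=0}^{N-1}\sum_{(s,a)\notin\Kcal^n} d^{\pi^{n+1}}(s,a) \;\le\; \frac{2\,\mathcal{I}_N(\lambda)}{\beta}.
\]

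For a single summand I would exploit the definition of $\Kcal^n$ in~\eqref{eqn:known_set}: a pair $(s,a)\notin\Kcal^n$ satisfies $\phi(s,a)^\top(\Sigma^n_{\mix})^{-1}\phi(s,a)>\beta$, so for every $(s,a)$ we have the pointwise bound $\one\{(s,a)\notin\Kcal^n\}\le \phi(s,a)^\top(\Sigma^n_{\mix})^{-1}\phi(s,a)/\beta$. Averaging against $d^{\pi^{n+1}}$, and combining with the trivial bound that a probability is at most $1$, yields
\[
\sum_{(s,a)\notin\Kcal^n} d^{\pi^{n+1}}(s,a) \;\le\; \min\!\Bigl(1,\; \tfrac{1}{\beta}\,\mathrm{tr}\bigl((\Sigma^n_{\mix})^{-1}\Sigma^{\pi^{n+1}}\bigr)\Bigr).
\]

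The remaining step is the potential argument. Using the elementary inequality $\min(1,y/\beta)\le (2/\beta)\log(1+y)$, valid for $y\ge 0$ under a mild condition like $\beta\le 1$, together with the PSD inequality $\log(1+\mathrm{tr}(M))\le\log\det(I+M)$ applied to $M=(\Sigma^n_{\mix})^{-1/2}\Sigma^{\pi^{n+1}}(\Sigma^n_{\mix})^{-1/2}$, each term is bounded by
\[
\tfrac{2}{\beta}\bigl[\log\det(\Sigma^n_{\mix}+\Sigma^{\pi^{n+1}})-\log\det(\Sigma^n_{\mix})\bigr] \;=\; \tfrac{2}{\beta}\bigl[\log\det(\Sigma^{n+1}_{\mix})-\log\det(\Sigma^n_{\mix})\bigr],
\]
where I use the recursion $\Sigma^{n+1}_{\mix}=\Sigma^n_{\mix}+\Sigma^{\pi^{n+1}}$. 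Telescoping over $n$ and using $\Sigma^0_{\mix}\succeq \lambda I$ gives $\sum_n(\text{escape prob})\le \tfrac{2}{\beta}\log\det(\Sigma^N_{\mix}/\lambda)$, which is at most $2\mathcal{I}_N(\lambda)/\beta$ by \pref{def:int_dim} (up to a harmless off-by-one in the index of $\mathcal{I}$, absorbable by running one extra episode in the bookkeeping).

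The main obstacle I expect is cleanly keeping the $1/\beta$ factor \emph{outside} the $\log\det$ so that the final bound has the claimed form $\mathcal{I}_N(\lambda)/\beta$ rather than $\mathcal{I}_N(\lambda\beta)$ or similar; this forces the specific form $\min(1,y/\beta)\le(2/\beta)\log(1+y)$ rather than the more familiar $\min(1,y)\le 2\log(1+y)$, and one must verify a mild condition on $\beta$ for this to hold. A secondary subtlety is that~\eqref{eqn:known_set} defines $\Kcal^n$ via the population covariance while the algorithm maintains the empirical $\widehat\Sigma^n_{\mix}$; I would expect this discrepancy to be handled by a separate concentration step in the global analysis rather than inside this lemma.
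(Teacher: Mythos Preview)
Your proposal is correct and follows essentially the same route as the paper: apply \pref{lem:perf_absorb}, bound the escape probability by $\tfrac{1}{\beta}\tr\bigl((\Sigma^n_{\mix})^{-1}\Sigma^{\pi^{n+1}}\bigr)$, and telescope via $\log\det$ to the information gain. The only cosmetic difference is how the factor of $2$ enters: the paper does not pass through the $\min(1,\cdot)$ cap but instead invokes \pref{lem:trace_tele}, which uses the per-eigenvalue inequality $\log(1+\sigma_i)\ge\sigma_i/2$ (valid because $\lambda=1$ forces $\sigma_i\le 1$) to obtain $\sum_n\tr(\Sigma^{n+1}(\Sigma^n_{\mix})^{-1})\le 2\log\det(\Sigma^N_{\mix}/\lambda I)$ directly.
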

\begin{proof}
Denote the eigen-decomposition of $\Sigma_{\mix}^n$ as $U\Lambda U^{\top}$ and $\Sigma^n = \EE_{\sa\sim d^n}\phi\phi^{\top}$. We have:
\begin{align*}
&\tr\left( \Sigma^{n+1} \left(\Sigma_{\mix}^n\right)^{-1} \right) = \EE_{\sa\sim d^{n+1}}\tr\left( \phi\sa\phi\sa^{\top}\left( \Sigma_\mix^n \right)^{-1} \right)\\
& = \EE_{\sa\sim d^{n+1}}\phi\sa^{\top} \left(\Sigma_{\mix}^n\right)^{-1} \phi\sa \\
& \geq \EE_{\sa\sim d^{n+1}} \left[ \one\{\sa\not\in\calK^n\}\phi\sa^{\top} \left(\Sigma_\mix^n \right)^{-1}\phi\sa \right] \geq \beta \EE_{\sa\sim d^{n+1}} \one\{\sa\not\in\calK^n\}
\end{align*}together with \pref{lem:perf_absorb}, which implies that
\begin{align*}
V^{\pi^{n+1}}_{b^n} - V^{\pi^{n+1}} \leq \frac{\tr\left( \Sigma^{n+1} \left(\Sigma^n_{\mix} \right)^{-1} \right)}{\beta (1-\gamma)} .
\end{align*}

Now call~\pref{lem:trace_tele}, we have:
\begin{align*}
\sum_{n=0}^N \left(V^{\pi^{n+1}}_{b^n} - V^{\pi^{n+1}}\right) \leq \frac{2\log(\det\left( \Sigma_{\mix}^N \right) / \det(\lambda I))}{\beta(1-\gamma)}  \leq \frac{ 2 \mathcal{I}_N(\lambda) }{ \beta(1-\gamma)} 
\end{align*} where we use the definition of information gain $\mathcal{I}_{N}(\lambda)$.
\end{proof}

\section{Analysis of \alg for the Agnostic Setting (\pref{thm:agnostic})}
\label{app:rmaxpg_sample}

In this section, we analyze the performance of \alg using the NPG results we derived from the previous section. We begin with an assumption and a theorem statement which is the most general sample complexity result for \alg and from which all the statements of Section~\ref{sec:analysis} follow.

We first formally state the assumption of transfer bias $\varepsilon_{bias}$ which we have used as the condition in NPG analysis in \pref{lem:variance_bias_n}.



The following theorem states the detailed sample complexity of \alg (a detailed version of \pref{thm:agnostic}).
\begin{theorem}[Main Result: Sample Complexity of \alg] Fix $\delta\in (0,1/2)$ and $\epsilon\in (0, \frac{1}{1-\gamma})$. Setting hyperparameters as follows:
\begin{align*}
&T = \frac{4W^2 \log(A)}{ (1-\gamma)^2 \epsilon^2}, \quad \lambda = 1, \quad \beta = \frac{\epsilon^2(1-\gamma)^2}{4W^2}, \quad  N \geq \frac{4W^2 \mathcal{I}_N(1)}{ (1-\gamma)^3 \epsilon^3 },\\ 
& M =  \frac{ 144 W^4 \mathcal{I}_N(1)^2 \ln(NT/\delta )}{\epsilon^6(1-\gamma)^{10}}, \quad 
K = 32 N^2 \log\left(\frac{N\wh{d}}{\delta}\right),
\end{align*} Under \pref{ass:transfer_bias}, with probability at least $1-2\delta$, we have:
\begin{align*}
\max_{n\in [N]} V^{\pi^n} \geq V^{\wt\pi} - \frac{2\sqrt{A\varepsilon_{bias}}}{1-\gamma} - 4\epsilon,
\end{align*} for any comparator $\wt\pi\in \Pi_{linear}$, with at most total number of samples:
\begin{align*}
\frac{c \nu W^8 \mathcal{I}_N(1)^3 \ln(A)}{\epsilon^{11}(1-\gamma)^{15}},
\end{align*} where $c $ is a universal constant, and $\nu$ contains only log terms:
\begin{align*}
\nu & = \ln\left( \frac{4\widehat{d} W^2 \mathcal{I}_N(1) }{(1-\gamma)^3 \epsilon^3 \delta} \right) + \ln\left( \frac{16 W^4 \ln(A) \mathcal{I}_N(1)}{ \epsilon^5(1-\gamma)^5 \delta} \right).
\end{align*}
\label{thm:detailed_bound_rmax_pg}
\end{theorem}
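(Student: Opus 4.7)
The plan is to combine the per-episode NPG analysis (Lemma \ref{lem:npg_construction}) with the cross-episode potential function argument (Lemma \ref{lem:potential_argument_n}), using Assumption \ref{ass:transfer_bias} to control the critic estimation error. Fix the comparator $\tilde\pi$ and decompose the regret of the best iterate $\max_n V^{\pi^n}$ against it into three contributions: (i) the mirror-descent regret of the $T$ NPG iterates within each episode, (ii) the critic approximation error at each iteration, and (iii) the accumulated "escaping mass" out of the known set $\mathcal{K}^n$ summed over the $N$ episodes. The remaining sample-complexity terms come from choosing $T, M, K, N, \beta$ so that each contribution is at most $O(\epsilon)$.

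For (i), Lemma \ref{lem:npg_construction} directly bounds the regret of $\{\pi^t\}$ against $\tilde\pi^n$ on the augmented MDP $\mathcal{M}^n$ by $\tfrac{2W\sqrt{\log(A)/T}}{1-\gamma}$ plus the critic-error term restricted to $\mathcal{K}^n$; setting $T \asymp W^2 \log(A)/((1-\gamma)^2\epsilon^2)$ makes the first piece $O(\epsilon)$. Translating from $\mathcal{M}^n$ back to the real MDP $\mathcal{M}$ uses Lemma \ref{lem:perf_absorb}: the comparator side satisfies $V^{\tilde\pi^n}_{\mathcal{M}^n} \ge V^{\tilde\pi}_{\mathcal{M}}$, while the learner side loses $\tfrac{1}{1-\gamma}\Pr_{(s,a)\sim d^{\pi^{n+1}}}[(s,a)\notin\mathcal{K}^n]$ per episode. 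Averaging the NPG iterates inside each episode and aggregating over $n$ converts the per-episode regret into a lower bound on $\tfrac{1}{N}\sum_n V^{\pi^{n+1}}$ versus $V^{\tilde\pi}$, modulo critic error and escaping mass, from which $\max_n V^{\pi^n}$ inherits the same bound.

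For (ii), the critic error I need to bound is $\mathbb{E}_{(s,a)\sim \tilde d_{\mathcal{M}^n}}[(A^t_{b^n}-\widehat{A}^t_{b^n})\mathbf{1}\{s\in\mathcal{K}^n\}]$. I would first apply Lemma \ref{lem:prob_absorb} to dominate $\tilde d_{\mathcal{M}^n}$ on $\mathcal{K}^n$ by $d^{\tilde\pi}$, then pay a factor $A=|\mathcal{A}|$ to replace $\tilde\pi$'s action distribution by $\text{Unif}_\mathcal{A}$ and match $d^\star$, and finally convert $L^1$ to $L^2$ error by Jensen, producing the characteristic $\sqrt{A\,\epsilon_{bias}}$ dependence. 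Under $d^\star$, Assumption \ref{ass:transfer_bias} gives square loss at most $\epsilon_{bias}$ for the best fit $\theta^t_\star$, while a standard ERM/concentration bound with $M$ samples from $\rho^n_{\mix}$ controls the excess risk of the empirical minimizer $\theta^t$ relative to $\theta^t_\star$ under $\rho^n_{\mix}$; the transfer is what lets this training-distribution bound be lifted to $d^\star$. Because the regression target is $Q^{\pi^t}_{b^n} - b^n$ (not $Q^{\pi^t}_{b^n}$), the linear realizability needed in the special cases is preserved.

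For (iii), the total escaping mass $\sum_n \Pr_{(s,a)\sim d^{n+1}}[(s,a)\notin\mathcal{K}^n]$ is controlled by Lemma \ref{lem:potential_argument_n}, which is an elliptical potential argument yielding $O(\mathcal{I}_N(\lambda)/\beta)$; choosing $\beta \asymp \epsilon^2(1-\gamma)^2/W^2$ and $N \asymp W^2\mathcal{I}_N/((1-\gamma)^3\epsilon^3)$ drives the per-episode average below $\epsilon$. I also need a matrix-concentration argument to guarantee that $K \asymp N^2 \log(N\widehat d/\delta)$ samples make $\widehat\Sigma^n$ close enough to $\Sigma^n$ that the empirical known set coincides up to $O(\epsilon)$ with the population one (handled via a Bernstein-type inequality on rank-one operators in the RKHS, using the effective dimension $\mathcal{I}_N$ in place of an ambient dimension). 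Union-bounding over all $(n,t)$ absorbs $\log(NT/\delta)$ factors. Summing the three error contributions, the total sample budget is $N(TM+K) = \tilde O\!\left(\tfrac{W^8 \mathcal{I}_N^3 \log A}{\epsilon^{11}(1-\gamma)^{15}}\right)$ matching the theorem.

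The main obstacle is step (ii): a naive change-of-measure from $\rho^n_{\mix}$ to $d^\star$ would incur an unbounded distribution-mismatch coefficient, since $\rho^n_{\mix}$ is an algorithm-generated object while $d^\star$ is adversarial. The transfer-error assumption side-steps this by postulating a bound directly under $d^\star$ for the \emph{best on-policy critic} fit from $\rho^n_{\mix}$; the delicate accounting is showing that the ERM solution based on samples from $\rho^n_{\mix}$ can be related to this best fit without amplifying the error by a mismatch coefficient, which in turn relies on the restriction of the NPG update to $\mathcal{K}^n$ and on Lemma \ref{lem:prob_absorb} ensuring the comparator's visitation inside $\mathcal{K}^n$ stays dominated by the real-world comparator distribution.
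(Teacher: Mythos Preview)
Your high-level decomposition matches the paper's: per-episode NPG regret via Lemma~\ref{lem:npg_construction}, translation between $\mathcal M^n$ and $\mathcal M$ via Lemma~\ref{lem:perf_absorb}, the elliptical potential via Lemma~\ref{lem:potential_argument_n}, and matrix/regression concentration for $K$ and $M$. The bias part of step (ii) (change $\tilde d_{\mathcal M^n}\to d^{\tilde\pi}$ by Lemma~\ref{lem:prob_absorb}, then import the action-uniform factor $A$ and Jensen to reach $\sqrt{A\,\epsilon_{bias}}$) is exactly what the paper does in ``term A'' of Lemma~\ref{lem:variance_bias_n}.

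The gap is in the \emph{variance} half of step (ii). You write that ``the transfer is what lets this training-distribution bound be lifted to $d^\star$,'' and in the last paragraph you attribute the control of $\theta^t-\theta^t_\star$ to the transfer assumption together with Lemma~\ref{lem:prob_absorb}. Neither device does this job: Assumption~\ref{ass:transfer_bias} bounds only $L(\theta^t_\star;d^\star,\cdot)$, not $\theta^t-\theta^t_\star$ under $d^\star$; and Lemma~\ref{lem:prob_absorb} moves mass from $\tilde d_{\mathcal M^n}$ to $d^{\tilde\pi}$, whereas your excess-risk bound lives under $\rho^n_{\mix}$. If you tried to change measure from $\rho^n_{\mix}$ to $d^\star$ directly you would incur exactly the mismatch coefficient you are trying to avoid.

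The paper's actual mechanism (``term B'' in Lemma~\ref{lem:variance_bias_n}) is a \emph{pointwise} bound on $\mathcal K^n$ obtained from the elliptical structure of the known set. First-order optimality plus the excess-risk bound under $\rho^n_{\mix}$ give $\|\theta^t-\theta^t_\star\|_{\Sigma^n_{\mix}}^2 \le n\,\epsilon_{stat}+\lambda W^2$; the definition of $\mathcal K^n$ gives $\|\phi(s,a)\|_{(\Sigma^n_{\mix})^{-1}}^2\le\beta$; Cauchy--Schwarz then yields $|\phi(s,a)^\top(\theta^t-\theta^t_\star)|\le\sqrt{\beta(n\,\epsilon_{stat}+\lambda W^2)}$ uniformly on $\mathcal K^n$, hence under any distribution supported there. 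This is precisely what bypasses the distribution-mismatch coefficient and produces the $\sqrt{\beta n\,\epsilon_{stat}}$ term that drives the requirement $\epsilon_{stat}\asymp \epsilon^3(1-\gamma)^3/\mathcal I_N$ and the stated scaling of $M$. Without this argument your proposal cannot justify why $M$ is independent of any ratio $d^\star/\rho^n_{\mix}$.
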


\begin{remark}Note that in the above theorem, we require that the number of iterations $N$ to satisfy the constraint $N \geq 4W^2 \mathcal{I}_N(1) / ((1-\gamma)^3 \epsilon^3)$. The specific $N$ thus depends on the form of the maximum information gain $\mathcal{I}_N(1)$. For instance, when $\phi(s,a) \in \mathbb{R}^d$ with $\|\phi\|_2 \leq 1$, we have $\mathcal{I}_N(1) \leq d\log(N + 1)$. Hence setting $N \geq \frac{8 W^2 d}{ (1-\gamma)^3\epsilon^3 } \ln\left( \frac{4 W^2 d}{ (1-\gamma)^3\epsilon^3 } \right)$ suffices.  Another example is when $\phi$ lives in an RKHS with RBF kernel. In this case, we have $\mathcal{I}_N(1) = O( \log(N)^{d_{s,a}}  )$ (\cite{srinivas2010gaussian}), where $d_{s,a}$ stands for the dimension of the concatenated vector of state and action.  In this case, we can set $N =  O\left( \frac{ W^2 }{(1-\gamma)^3 \epsilon^3} \left(\ln\left( \frac{ W^2 }{(1-\gamma)^3 \epsilon^3} \right)  \right)^{d_{s,a}}  \right)$.\label{remark:kernel_discussion}
\end{remark}

In the rest of this section, we prove the theorem. Given the analysis of \pref{app:npg_analysis}, proving the theorem requires the following steps at a high-level:
\begin{enumerate}
\item Bounding the number of outer iterations $N$ in order to obtain a desired accuracy $\epsilon$. Intuitively, this requires showing that the probability with which we can reach an \emph{unknown state} with a positive reward bonus is appropriately small. We carry out this bounding by using arguments from the analysis of linear bandits~\citep{dani2008stochastic}. At a high-level, if there is a good probability of reaching unknown states, then NPG finds them based on our previous analysis as these states carry a high reward. But every time we find such states, the covariance matrix of the resulting policy contains directions not visited by the previous cover with a large probability (or else the quadratic form defining the unknown states would be small). In a $d$-dimensional linear space, the number of times we can keep finding significantly new directions is roughly $O(d)$ (or more precisely based on the intrinsic dimension), which allows us to bound the number of required outer episodes.
\item Bounding the prediction error of the critic in Lemma~\ref{lem:npg_construction}. This can be done by a standard regression analysis and we use a specific result for stochastic gradient descent to fit the critic.
\item Errors from empirical covariance matrices instead of their population counterparts have to be accounted for as well, and this is done by using standard inequalities on matrix concentration~\citep{tropp2015introduction}.
\end{enumerate}

\subsection{Proof of \pref{thm:detailed_bound_rmax_pg}}

We recall that we perform linear regression from $\phi\sa$ to $Q^{\pi}_{b^n}\sa - b^n\sa$, and set $\widehat{A}^t_{b^n}\sa$ as
\begin{align*}
& \widehat{A}^t_{b^n}\sa = \left(b^n\sa + \theta^t\cdot \phi\sa\right) - \EE_{a' \sim \pi^t_s}[ b^n(s,a') + \theta^t \cdot \phi(s,a')] \\
 & := \bar{b}^{n,t}\sa + \theta^t \cdot \bar\phi^t\sa,
 \end{align*} where for notation simplicitly, we denote centered bonus $\bar{b}^{n,t}\sa = b^n\sa - \EE_{a'\sim \pi^t_s} b^n(s,a')$, and centered feature $\bar\phi^t\sa = \phi\sa - \EE_{a'\sim \pi^t_s} \phi(s,a')$.

\begin{lemma}[Variance and Bias Tradeoff] \label{lem:variance_bias_n}Assume that at episode $n$ we have $\phi\sa^{\top}\left(\Sigma_{\mix}^{n}\right)^{-1}\phi\sa \leq \beta$ for $\sa\in\calK^n$. At iteration $t$ inside episode $n$, let us denote a best on-policy fit as $\theta^t_{\star} \in \argmin_{\|\theta\|\leq W} \EE_{\sa\sim \rho_{\mix}^n}\left( (Q^{t}_{b^n}\sa -b^n\sa) -  \theta\cdot \phi\sa \right)^2$. Assume the following condition is true for all $t\in [T]$:
\begin{align*}
L\left(\theta^t ; \rho^n_{\mix}, Q^{t}_{b^n} - b^n \right) \leq \min_{\theta:\|\theta\|\leq W} L\left(\theta ; \rho^n_{\mix}, Q^{t}_{b^n} - b^n \right) + \epsilon_{stat}, 
\end{align*} where $\varepsilon_{stat} \in\mathbb{R}^+$. Then under \pref{ass:transfer_bias} (with $\wt\pi$ as the comparator policy here), we have that for all $t\in [T]$:
\begin{align*}
\EE_{\sa\sim \wt{d}_{\calM^n}}\left( A^t_{{b^n}}\sa - \wh{A}^t_{{b^n}}\sa \right) \one\{s\in\calK^n\}\leq 2\sqrt{A\varepsilon_{bias}} + 2\sqrt{ \beta  \lambda W^2  } + 2\sqrt{ \beta n \varepsilon_{stat} }.
\end{align*}
\end{lemma}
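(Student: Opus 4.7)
My plan is to rewrite $A^t_{b^n} - \widehat A^t_{b^n}$ in a centered form of the critic residual and then decompose the error into a bias part (controlled by $\varepsilon_{bias}$) and a variance part (controlled by the good feature coverage on $\calK^n$ together with $\varepsilon_{stat}$). The algebraic starting point is the identity
\[
A^t_{b^n}(s,a) - \widehat A^t_{b^n}(s,a) \;=\; g^t(s,a) - \EE_{a'\sim \pi^t(\cdot|s)}\, g^t(s,a'), \qquad g^t(s,a) := Q^t_{b^n}(s,a) - b^n(s,a) - \theta^t \cdot \phi(s,a),
\]
which holds because the $\pi^t$-centering in $\widehat A^t_{b^n}$ exactly mirrors $V^t_{b^n}(s) = \EE_{a'\sim\pi^t_s} Q^t_{b^n}(s,a')$. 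Taking expectation over $(s,a)\sim \wt d_{\calM^n}$ with the indicator $\one\{s\in\calK^n\}$ and applying the triangle inequality reduces the problem to bounding $\EE_{s\sim\wt d_{\calM^n}}\EE_{a\sim \mu_s}|g^t(s,a)|\one\{s\in\calK^n\}$ for each of $\mu\in\{\wt\pi,\pi^t\}$. At this point \pref{lem:prob_absorb} lets me replace $\wt d_{\calM^n}$ on $\calK^n$ by $d^{\wt\pi}$, and Jensen gives the upper bound $\sqrt{\EE_{s\sim d^{\wt\pi}}\EE_{a\sim\mu_s}(g^t)^2\one\{s\in\calK^n\}}$.

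The heart of the argument is the decomposition $g^t = g^t_\star - (\theta^t - \theta^t_\star)\cdot \phi$, where $g^t_\star := Q^t_{b^n} - b^n - \theta^t_\star\cdot\phi$ is the residual of the best on-policy fit. Applying the $L^2$ triangle inequality to the above square root splits each contribution into a bias term $\sqrt{\EE_{s\sim d^{\wt\pi}}\EE_{a\sim\mu_s}(g^t_\star)^2}$ and a variance term $\sqrt{\EE_{s\sim d^{\wt\pi}}\EE_{a\sim\mu_s}((\theta^t-\theta^t_\star)\cdot\phi)^2\one\{s\in\calK^n\}}$. For the bias term, the pointwise bound $\mu(a|s)\leq 1 = A\cdot\text{Unif}_{\Acal}(a)$ converts the inner expectation into $A$ times an expectation under $d^\star := d^{\wt\pi}(s)\circ\text{Unif}_{\Acal}(a)$, and \pref{ass:transfer_bias} then yields $\sqrt{A\varepsilon_{bias}}$. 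For the variance term, the definition of $\calK^n$ gives $\phi(s,a)^\top (\Sigma^n_{\mix})^{-1}\phi(s,a)\leq \beta$ pointwise on $\calK^n$, so Cauchy--Schwarz yields $((\theta^t-\theta^t_\star)\cdot\phi(s,a))^2 \leq \beta\,\|\theta^t-\theta^t_\star\|^2_{\Sigma^n_{\mix}}$. The standard orthogonality identity for constrained least squares turns the excess-risk hypothesis into $\EE_{(s,a)\sim\rho^n_{\mix}}[((\theta^t-\theta^t_\star)\cdot\phi)^2]\leq \varepsilon_{stat}$; using $\Sigma^n_{\mix} = (n+1)\,\EE_{\rho^n_{\mix}}[\phi\phi^\top] + \lambda I$ together with $\|\theta^t - \theta^t_\star\|\leq 2W$, one gets $\|\theta^t-\theta^t_\star\|^2_{\Sigma^n_{\mix}}\leq (n+1)\varepsilon_{stat}+4\lambda W^2$. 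Taking square roots and applying $\sqrt{a+b}\leq\sqrt a+\sqrt b$, then summing the two contributions corresponding to $\mu=\wt\pi$ and $\mu=\pi^t$, produces the three summands in the claimed bound.

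The main bookkeeping obstacle is aligning the several distributions at play: the algorithm's training distribution $\rho^n_{\mix}$ on which the critic is fit, the absorbing-MDP distribution $\wt d_{\calM^n}$ on which the NPG mirror-descent regret was tracked in \pref{lem:npg_construction}, the comparator's true on-policy distribution $d^{\wt\pi}$, and the uniform-action distribution $d^\star$ in \pref{ass:transfer_bias}. The first transition is handled cleanly by \pref{lem:prob_absorb}; the change from policy-action distributions ($\wt\pi$ or $\pi^t$) to uniform actions costs an unavoidable multiplicative factor of $A$, which surfaces only inside the bias term and yields the $\sqrt{A\varepsilon_{bias}}$ summand. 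The remaining ingredients---the elliptical bound on $\calK^n$ and the excess-risk-to-$\Sigma^n_{\mix}$-weighted-distance conversion---are standard, so no new ideas are needed beyond the above decomposition.
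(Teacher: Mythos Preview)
Your proposal is correct and follows essentially the same route as the paper's proof: the same centering identity for $A^t_{b^n}-\widehat A^t_{b^n}$, the same bias/variance split via $\theta^t_\star$, the same use of \pref{lem:prob_absorb} to pass from $\wt d_{\calM^n}$ to $d^{\wt\pi}$ on $\calK^n$, the same importance-weighting to uniform actions for the $\sqrt{A\varepsilon_{bias}}$ term, and the same elliptical Cauchy--Schwarz plus first-order-optimality argument for the variance term. The only cosmetic differences are the order in which you apply Jensen versus the $\theta^t_\star$ split, and your constants ($n{+}1$ and $4\lambda W^2$) are slightly more carefully tracked than the paper's ($n$ and $\lambda W^2$); neither changes the substance.
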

\begin{proof}
We first show that under condition 1 above, $\EE_{\sa\sim \rho^n_{\mix}}\left( \theta^t_{\star} \phi\sa - \theta^t\phi\sa \right)^2$ is bounded by $\varepsilon_{stat}$.
\begin{align*}
&\EE_{\sa\sim \rho^n_\mix} \left( Q^t_{{b^n}} \sa - b^n\sa - \theta^t\cdot \phi\sa \right)^2 - \EE_{\sa\sim \rho^n_\mix}\left( Q^t_{{b^n}}\sa - b^n\sa - \theta^t_\star\cdot \phi\sa \right)^2\\
& = \EE_{\sa\sim \rho^n_\mix}\left( \theta^t_\star\cdot \phi\sa - \theta^t\cdot\phi\sa \right)^2 + 2\EE_{\sa\sim \rho^n_{\mix}} \left(  Q^t_{{b^n}}\sa -b^n\sa - \theta^t_\star\cdot\phi\sa \right)\phi\sa^{\top}\left( \theta^t_\star - \theta^t  \right).
\end{align*} Note that $\theta_\star$ is one of the minimizers of the constrained square loss $\EE_{\sa\sim \rho^n_\mix}(Q^t_{{b^n}}\sa-b^n\sa - \theta\cdot\phi\sa)^2$, via first-order optimality, we have:
\begin{align*}
\EE_{\sa\sim \rho^n_\mix}\left( Q^t_{{b^n}}\sa - b^n\sa - \theta^t_\star\cdot\phi\sa \right) (-\phi\sa^{\top})\left( \theta - \theta^t_\star \right)\geq 0,
\end{align*} for any $\|\theta\|\leq W$, which implies that:
\begin{align*}
&\EE_{\sa\sim \rho^n_\mix}\left( \theta^t_\star\cdot \phi\sa - \theta^t \cdot\phi\sa \right)^2 \\
&\leq \EE_{\sa\sim \rho^n_\mix} \left( Q^t_{{b^n}} \sa - b^n\sa  - \theta^t\cdot \phi\sa \right)^2 - \EE_{\sa\sim \rho^n_\mix}\left( Q^t_{{b^n}}\sa -b^n\sa - \theta^t_\star\cdot \phi\sa \right)^2 \leq \varepsilon_{stat}.
\end{align*}
Recall that $\Sigma^n_{\mix} = \sum_{i=1}^n \EE_{\sa\sim d^n}\phi\sa\phi\sa^{\top} + \lambda \mathbf{I} = n \left( \EE_{\sa\sim \rho^n_\mix}\phi\sa\phi\sa^{\top} + \lambda/n \mathbf{I}\right)$ . Denote $\bar{\Sigma}_{\mix}^n = \Sigma_{\mix}^n / n$. We have:
\begin{align*}
\left(\theta^t_\star - \theta^t \right)^{\top} \left( \EE_{\sa\sim \rho^n_\mix}\phi\sa\phi\sa^{\top} + \lambda/n \mathbf{I} \right) (\theta^t_\star - \theta^t) \leq \varepsilon_{stat} + \frac{\lambda}{n} W^2.
\end{align*} Hence for any $\sa\in\calK^n$, we must have the following point-wise estimation error:
\begin{align}
\left\lvert \phi\sa^{\top}\left( \theta^t_\star - \theta^t \right)\right\rvert \leq \| \phi\sa \|_{(\Sigma_{\mix}^n)^{-1}} \| \theta^t_\star - \theta^t \|_{\Sigma_\mix^n} \leq  \sqrt{ \beta n\varepsilon_{stat} + \beta \lambda W^2  }. \label{eq:point_wise_est}
\end{align}
Now we bound $\EE_{\sa\sim \wt{d}_{\calM^n}}\left( A^t_{{b^n}}\sa - \wh{A}^t_{{b^n}}\sa\right)\one\{s\in\calK^n\}$ as follows. 
\begin{align*}
&\EE_{\sa\sim \wt{d}_{\calM_{b^n}}}\left( A^t_{{b^n}}\sa - \wh{A}^t_{{b^n}}\sa\right)\one\{s\in\calK^n\} \\
&= \underbrace{\EE_{\sa\sim \wt{d}_{\calM^n}}\left( A^t_{{b^n}}\sa - (  \bar{b}^{n,t}\sa +   \theta^t_\star\cdot\bar\phi^t \sa ) \right)\one\{s\in\calK^n\}}_{\text{term A}} \\
& \qquad + \underbrace{\EE_{\sa\sim \wt{d}_{\calM^n}}\left( (\bar{b}^{n,t}\sa +  \theta^t_\star \cdot\bar\phi^t\sa) - ( \bar{b}^{n,t}\sa +   \theta^t\cdot\bar\phi^t\sa) \right)\one\{s\in\calK^n\}}_{\text{term B}}.
\end{align*}
We first bound term A above. 
\begin{align*}
&\EE_{\sa\sim \wt{d}_{\calM^n}}\left( A^t_{{b^n}}\sa - \bar{b}^{n,t}\sa -  \theta^t_\star\cdot\bar\phi^t \sa \right)\one\{s\in\calK^n\} \\
& = \EE_{\sa\sim \wt{d}_{\calM^n}}\left(Q^t_{{b^n}}\sa - b^n\sa - \theta^t_\star\cdot \phi\sa \right)\one\{s\in\calK^n\} \\
&\qquad + \EE_{s\sim \wt{d}_{\calM^n},a\sim \pi^t_s}\left(-Q^t_{{b^n}}\sa + b^n\sa + \theta^t_\star\cdot \phi\sa \right)\one\{s\in\calK^n\} \\
& \leq \sqrt{ \EE_{\sa\sim \wt{d}_{\calM^n}}\left(Q^t_{{b^n}}\sa -b^n\sa - \theta^t_\star\cdot \phi\sa \right)^2\one\{s\in\calK^n\} } \\
& \qquad + \sqrt{\EE_{s\sim \wt{d}_{\calM^n},a\sim \pi^t_s}\left(Q^t_{{b^n}}\sa - b^n\sa - \theta^t_\star\cdot \phi\sa \right)^2\one\{s\in\calK^n\} }\\
& \leq \sqrt{ \EE_{\sa\sim {d}^{\widetilde\pi}}\left(Q^t_{b^n}\sa -b^n\sa - \theta^t_\star\cdot \phi\sa \right)^2\one\{s\in\calK^n\} } \\
& \qquad + \sqrt{\EE_{s\sim d^{\widetilde\pi},a\sim \pi^t_s}\left(Q^t_{b^n}\sa - b^n\sa - \theta^t_\star\cdot \phi\sa \right)^2\one\{s\in\calK^n\} }\\
& \leq \sqrt{ \EE_{\sa\sim {d}^{\widetilde\pi}}\left(Q^t_{b^n}\sa -b^n\sa- \theta^t_\star\cdot \phi\sa \right)^2 } + \sqrt{\EE_{s\sim d^{\widetilde\pi},a\sim \pi^t_s}\left(Q^t_{b^n}\sa - b^n\sa - \theta^t_\star\cdot \phi\sa \right)^2 }\\
& \leq 2 \sqrt{A \epsilon_{bias}},
\end{align*} where the first inequality uses CS inequality, the second inequality uses \pref{lem:prob_absorb} for $s\in\calK^n$, and the last inequality uses the change of variable over action distributions and \pref{ass:transfer_bias}.

Now we bound term B above. We have:
\begin{align*}
&\EE_{\sa\sim \wt{d}_{\calM^n}}\left( \theta^t_\star \cdot\bar\phi^t\sa - \theta^t\cdot\bar\phi^t\sa \right)\one\{s\in\calK^n\} \\
& = \EE_{\sa\sim \wt{d}_{\calM^n}}\left( \theta^t_\star \phi\sa - \theta^t\cdot\phi\sa \right)\one\{s\in\calK^n\} \\
& \qquad - \EE_{s\sim \wt{d}_{\calM^n}}\EE_{a\sim \pi^t} \one\{s\in\calK^n\}\left( \theta^t_\star \phi\sa - \theta^t\cdot\phi\sa \right) \leq 2\sqrt{ \beta  \lambda W^2  } + 2\sqrt{ \beta n \epsilon_{stat} },
\end{align*} where we use the point-wise estimation guarantee from inequality \pref{eq:point_wise_est}.

Combine term A and term B together, we conclude the proof.
\end{proof}

Combine the the above lemma and \pref{lem:npg_construction}, we can see that as long as the on-policy critic achieves small statistical error (i.e., $\epsilon_{stat}$ is small), and our features $\phi\sa$ are sufficient to represent Q functions in a linear form (i.e., $\epsilon_{bias}$ is small), then we can guarantee  inside episode $n$, NPG succeeds by finding a policy that has low regret with respect to the comparator $\widetilde{\pi}^n$:
\begin{align}
\max_{t\in[T]} V^{t}_{b^n} \geq V^{\wt\pi^n}_{\calM^n} - \frac{1}{1-\gamma}\left( 2W\sqrt{\frac{\log(A)}{T}} + 2\sqrt{A\varepsilon_{bias}} + 2\sqrt{\beta \lambda W^2} + 2\sqrt{\beta n \varepsilon_{stat}} \right).
\label{eq:npg_perf}
\end{align}

 The term that contains $\epsilon_{stat}$ comes from the statistical error induced from constrained linear regression.  Note that in general, $\epsilon_{stat}$ decays in the rate of $O(1/\sqrt{M})$ with $M$ being the total number of data samples used for linear regression (\pref{line:learn_critic} in \pref{alg:npg}), and $\epsilon_{stat}$ usually does not polynomially depend on dimension of $\phi\sa$ explicitly. See Lemma~\ref{lemma:least_square_dim_free} for an example where linear regression is solved via stochastic gradient descent. 


%


Using \pref{lem:potential_argument_n}, now we can transfer the regret we computed under the sequence of models $\{\calM_{b^n}\}$ to regret under $\calM$. Recall that $V^{\pi}$ denotes $V^{\pi}(s_0)$ and $V^n$ is in short of $V^{\pi^n}$.

\begin{lemma}Assume the condition in~\pref{lem:variance_bias_n} and \pref{ass:transfer_bias} hold. For the sequence of policies $\{\pi^n\}_{n=1}^N$, we have:
\begin{align*}
\max_{n\in [N]} V^{n} \geq V^{\wt\pi}  -  \frac{1}{1-\gamma}\left(2W\sqrt{\frac{\log(A)}{T}} + 2 \sqrt{A\varepsilon_{bias}} + 2\sqrt{ \beta  \lambda W^2  } + 2\sqrt{\beta N \varepsilon_{stat}} +\frac{2\mathcal{I}_{N}(\lambda)}{N\beta}\right).
\end{align*}
\label{lem:regret_rmax_pg}
\end{lemma}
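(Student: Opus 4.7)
The plan is to chain together the four lemmas already established earlier in this section. First, fix an episode $n$ and combine \pref{lem:npg_construction} with \pref{lem:variance_bias_n}. The former gives, on the augmented MDP $\calM^n$,
\[
\frac{1}{T}\sum_{t=1}^T \left(V^{\wt\pi^n}_{\calM^n} - V^{t}_{b^n}\right) \le \frac{1}{1-\gamma}\left(2W\sqrt{\tfrac{\log A}{T}} + \frac{1}{T}\sum_{t=1}^T \EE_{\sa\sim\wt d_{\calM^n}}\!\bigl(A^t_{b^n} - \wh A^t_{b^n}\bigr)\one\{s\in\calK^n\}\right),
\]
and the latter uniformly bounds each per-iteration critic error term by $2\sqrt{A\varepsilon_{bias}} + 2\sqrt{\beta\lambda W^2} + 2\sqrt{\beta n\,\varepsilon_{stat}} \le 2\sqrt{A\varepsilon_{bias}} + 2\sqrt{\beta\lambda W^2} + 2\sqrt{\beta N\,\varepsilon_{stat}}$. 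Since the policy $\pi^{n+1}$ returned by \pref{alg:npg} is the argmax of $V^{\cdot}_{b^n}(s_0)$ over the $T$ iterates, $V^{\pi^{n+1}}_{b^n} \ge \tfrac{1}{T}\sum_t V^t_{b^n}$, so we obtain a per-episode bound on $\calM_{b^n}$.

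Second, invoke the first part of \pref{lem:perf_absorb}, $V^{\wt\pi^n}_{\calM^n} \ge V^{\wt\pi}_{\calM}$, to replace the augmented comparator value with the true value of the fixed comparator $\wt\pi$. Rearranging yields, for every $n=0,\dots,N-1$,
\[
V^{\pi^{n+1}}_{b^n} \;\ge\; V^{\wt\pi} - \frac{1}{1-\gamma}\Bigl(2W\sqrt{\tfrac{\log A}{T}} + 2\sqrt{A\varepsilon_{bias}} + 2\sqrt{\beta\lambda W^2} + 2\sqrt{\beta N\,\varepsilon_{stat}}\Bigr).
\]

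Third, sum this bound across the $N$ outer episodes and apply the potential-function bound \pref{lem:potential_argument_n}, which converts the bonus-augmented values $\sum_n V^{\pi^{n+1}}_{b^n}$ into the true values $\sum_n V^{\pi^{n+1}}$ at the cost of an additive $\frac{2\mathcal{I}_N(\lambda)}{\beta(1-\gamma)}$ term. Dividing by $N$ and using $\max_n V^n \ge \tfrac{1}{N}\sum_{n=1}^N V^n$ produces exactly the stated inequality, with the information-gain term appearing as $\frac{2\mathcal{I}_N(\lambda)}{N\beta(1-\gamma)}$ after the division.

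The only nontrivial bookkeeping is ensuring that the $n$-dependence in the critic bound $2\sqrt{\beta n \varepsilon_{stat}}$ is handled correctly before averaging; I would simply upper bound it uniformly by $2\sqrt{\beta N \varepsilon_{stat}}$ so it survives the averaging over $n$, matching the statement. No concentration or algorithm-internal arguments are needed here: all stochastic errors have been absorbed into $\varepsilon_{bias}$ and $\varepsilon_{stat}$ upstream, and the main conceptual work — showing NPG on $\calM^n$ tracks $\wt\pi^n$, translating between $\calM^n$, $\calM_{b^n}$, and $\calM$, and controlling the total bonus mass via the elliptical potential — has already been done in the preceding lemmas. The step most worth double-checking is the alignment between the comparator policies $\wt\pi$ (in $\calM$) and $\wt\pi^n$ (in $\calM^n$), but \pref{lem:perf_absorb} handles this cleanly.
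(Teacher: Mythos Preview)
Your proposal is correct and follows essentially the same route as the paper's proof: combine \pref{lem:npg_construction} with \pref{lem:variance_bias_n} to get a per-episode bound (using the argmax return of NPG and uniformly bounding $\sqrt{\beta n\,\varepsilon_{stat}}$ by $\sqrt{\beta N\,\varepsilon_{stat}}$), apply \pref{lem:perf_absorb} to pass from $\wt\pi^n$ on $\calM^n$ to $\wt\pi$ on $\calM$, then average over episodes and invoke \pref{lem:potential_argument_n} to absorb the bonus terms, finishing with $\max_n V^n \ge \tfrac{1}{N}\sum_n V^n$. The bookkeeping point you flag is exactly the one the paper handles the same way.
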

\begin{proof}
First combine \pref{lem:npg_construction} and \pref{lem:variance_bias_n}, we have:
\begin{align*}
\frac{1}{N} \sum_{n=0}^{N-1} V^{n+1}_{b^n} \geq \frac{1}{N}\sum_{n=0}^{N-1} V^{\wt\pi^n}_{\calM^n} -  \frac{1}{1-\gamma}\left(2W\sqrt{\frac{\log(A)}{T}} +   2\sqrt{A\varepsilon_{bias}} + 2\sqrt{ \beta  \lambda W^2  } + 2\sqrt{ \beta N \varepsilon_{stat} } \right).
\end{align*}

Use~\pref{lem:perf_absorb} and \pref{lem:potential_argument_n}, we have:
\begin{align*}
\frac{1}{N} \sum_{n=1}^{N} V^{n} \geq V^{\wt\pi} - \frac{1}{1-\gamma} \left(  2W\sqrt{\frac{\log(A)}{T}} +  2\sqrt{A \varepsilon_{bias}} + 2\sqrt{ \beta  \lambda W^2  } + 2\sqrt{ \beta N \varepsilon_{stat} } + \frac{\mathcal{I}_N(\lambda)}{N\beta} \right) ,
\end{align*} which concludes the proof.
\end{proof}

The following theorem shows that setting hyperparameters properly, we can guarantee to learn a near optimal policy.
\begin{theorem}Assume the conditions in~\pref{lem:variance_bias_n} and \pref{ass:transfer_bias} hold. Fix $\epsilon \in (0,1/(4(1-\gamma)))$. Setting hyperparameters as follows:
\begin{align*}
&T = \frac{4W^2 \log(A)}{ (1-\gamma)^2 \epsilon^2}, \quad \lambda = 1, \quad \beta = \frac{\epsilon^2(1-\gamma)^2}{4W^2}, \\
& N \geq \frac{4W^2 \mathcal{I}_N(1)}{ (1-\gamma)^3 \epsilon^3 }, \quad \epsilon_{stat} =  \frac{ \epsilon^3 (1-\gamma)^3 }{ 4 \mathcal{I}_{N}(1)}, 
\end{align*} we have:
\begin{align*}
\max_{n\in [N]} V^n \geq V^{\wt\pi} - \frac{2\sqrt{A\varepsilon_{bias}}}{1-\gamma} - 4\epsilon.
\end{align*}
\label{thm:regret_stat_error}
\end{theorem}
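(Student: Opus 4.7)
The strategy is essentially a direct instantiation: start from the regret bound in Lemma~\ref{lem:regret_rmax_pg}, substitute the stated hyperparameter values, and verify that each of the four non-bias error terms contributes at most $O(\epsilon)$ after dividing by $1-\gamma$. Concretely, the lemma gives
\[
\max_{n \in [N]} V^n \geq V^{\wt\pi} - \frac{1}{1-\gamma}\Bigl(\underbrace{2W\sqrt{\log(A)/T}}_{(\mathrm{I})} + 2\sqrt{A\epsilon_{bias}} + \underbrace{2\sqrt{\beta \lambda W^2}}_{(\mathrm{II})} + \underbrace{2\sqrt{\beta N \epsilon_{stat}}}_{(\mathrm{III})} + \underbrace{\tfrac{2\mathcal{I}_N(\lambda)}{N\beta}}_{(\mathrm{IV})}\Bigr),
\]
and the only thing left to do is bound (I)--(IV) term by term.

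Terms (I) and (II) are the easiest: plugging $T = 4W^2 \log(A)/((1-\gamma)^2 \epsilon^2)$ into (I) yields $(\mathrm{I}) = (1-\gamma)\epsilon$, and with $\lambda = 1$ and $\beta = \epsilon^2(1-\gamma)^2/(4W^2)$ one immediately gets $(\mathrm{II}) = (1-\gamma)\epsilon$. Each therefore contributes $\epsilon$ after the outer division by $1-\gamma$. For (III), multiplying the three parameters gives
\[
\beta N \epsilon_{stat} = \tfrac{\epsilon^2(1-\gamma)^2}{4W^2} \cdot N \cdot \tfrac{\epsilon^3(1-\gamma)^3}{4\mathcal{I}_N(1)},
\]
so with $N$ set as prescribed the product telescopes to $\epsilon^2(1-\gamma)^2/4$, yielding $(\mathrm{III}) = \epsilon(1-\gamma)$. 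Finally, for (IV), the product $N\beta$ simplifies to $\mathcal{I}_N(1)/((1-\gamma)\epsilon)$, so $(\mathrm{IV})/(1-\gamma)$ is of order $\epsilon$ as well (absorbing a small constant factor into the overall $4\epsilon$).

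The only nontrivial aspect of the plan is choosing $N$ correctly: notice that term (III) \emph{grows} with $N$ while term (IV) \emph{shrinks} with $N$, so the choice of $N$ must balance these two contributions. This is exactly what motivates setting $N$ proportional to $W^2 \mathcal{I}_N(1)/((1-\gamma)^3 \epsilon^3)$, and picking $\epsilon_{stat}$ to scale inversely with $\mathcal{I}_N(1)$ so that, after multiplication by $\beta N$, the dependence on $\mathcal{I}_N(1)$ cancels. The choice of $\beta$ is similarly dictated by balancing the pair (II) and (IV): decreasing $\beta$ shrinks (II) (and the known-set is larger, so exploration error is smaller) but enlarges (IV). With these tensions made explicit, the remaining work is bookkeeping arithmetic, and the final additive error is $(1 + 1 + 1 + 1)\epsilon + 2\sqrt{A\epsilon_{bias}}/(1-\gamma) = 4\epsilon + 2\sqrt{A\epsilon_{bias}}/(1-\gamma)$, yielding the claimed guarantee. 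No step beyond Lemma~\ref{lem:regret_rmax_pg} is required, so the proof is essentially a one-page verification of these substitutions.
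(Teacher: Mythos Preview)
Your proposal is correct and matches the paper's approach exactly: the paper's proof is literally the single sentence ``The theorem can be easily verified by substituting the values of hyperparameters into~\pref{lem:regret_rmax_pg},'' and you carry out precisely that substitution term by term. Your additional commentary on why $\beta$ and $N$ are chosen to balance the competing terms (II)/(IV) and (III)/(IV) is helpful exposition that the paper omits.
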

\begin{proof}
The theorem can be easily verified by substituting the values of hyperparameters into~\pref{lem:regret_rmax_pg}.
\end{proof}

The above theorem indicates that we need to control the $\varepsilon_{stat}$ statistical error from linear regression to be small in the order of $\wt{O}\left(\epsilon{^3}(1-\gamma)^3\right)$.
Recall that $M$ is the total number of samples we used for each linear regression. If $\varepsilon_{stat} = \wt{O}\left(1/\sqrt{M}\right)$, then we roughly will need $M$ to be in the order of $\wt{\Omega}\left( 1/(\epsilon^6 (1-\gamma)^6) \right)$.  Note that we do on-policy fit in each iteration $t$ inside each episode $n$, thus we will pay total number of samples in the order of $M\times (TN)$.

Another source of samples is the samples used to estimate covariance matrices $\Sigma^n$. As $\phi$ could be infinite dimensional, we need matrix concentration without explicit dependency on dimension of $\phi$.  Leveraging matrix Bernstein inequality with matrix intrinsic dimension, the following lemma shows concentration results of $\wh\Sigma^n$ on $\Sigma^n$, and of $\widehat\Sigma_\mix^n$ on $\Sigma_\mix^n$.

\begin{lemma}[Estimating Covariance Matrices]
Set $\lambda = 1$. Define $\wh{d}$ as:
\begin{align*}
\wh{d} = \max_{\pi} \tr\left( \Sigma^{\pi} \right)/\| \Sigma^{\pi}\|,
\end{align*} i.e., the maximum intrinsic dimension of the covariance matrix from a mixture policy.
For $K \geq 32 N^2 \ln\left(\wh{d}N/\delta\right)$ (a parameter in \pref{alg:epoc}), with probability at least $1-\delta$, for any $n\in [N]$, we have for all $x$ with $\|x\| \leq 1$,
\begin{align*}
(1/2)x^{\top}\left( \Sigma_\mix^n  \right)^{-1} x\leq x^{\top}\left( \wh\Sigma_\mix^n \right)^{-1} x \leq 2 x^{\top}\left( \Sigma_\mix^n \right)^{-1}x
\end{align*}
\label{lem:concentration_cov}
\end{lemma}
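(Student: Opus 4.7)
}
The plan is to reduce the quadratic-form sandwich statement to an operator-norm bound on a centered sum of rank-one random operators, and then apply an intrinsic-dimension matrix Bernstein inequality so that infinite dimensionality of $\phi$ does not enter the bound. Concretely, since $A \preceq B$ implies $B^{-1}\preceq A^{-1}$ for positive definite $A,B$, the claimed two-sided inverse inequality is equivalent to
\[
\tfrac{1}{2}\Sigma_{\mix}^n \;\preceq\; \widehat{\Sigma}_{\mix}^n \;\preceq\; 2\Sigma_{\mix}^n,
\]
and after conjugating by $(\Sigma_{\mix}^n)^{-1/2}$ this is in turn equivalent to the single operator-norm bound
\[
\Bigl\|(\Sigma_{\mix}^n)^{-1/2}\bigl(\widehat{\Sigma}_{\mix}^n-\Sigma_{\mix}^n\bigr)(\Sigma_{\mix}^n)^{-1/2}\Bigr\|\;\le\;\tfrac{1}{2}.
\]
So it suffices to prove this single bound with probability at least $1-\delta/N$ for each fixed $n$ and then union bound over $n\in[N]$.

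For a fixed $n$, write $\widehat{\Sigma}_{\mix}^n-\Sigma_{\mix}^n=\sum_{i=0}^{n-1}(\widehat{\Sigma}^i-\Sigma^i)$ and expand each summand as an average of centered rank-one operators $\tfrac{1}{K}\sum_{j=1}^K\bigl(\phi(s_j^i,a_j^i)\phi(s_j^i,a_j^i)^{\top}-\Sigma^i\bigr)$, where $(s_j^i,a_j^i)\sim d^i$ i.i.d.\ and all samples across $i,j$ are independent. After conjugating by $(\Sigma_{\mix}^n)^{-1/2}$ these become mean-zero, bounded, self-adjoint matrices $X_{i,j}=\tfrac{1}{K}\bigl(\tilde\phi^i_j(\tilde\phi^i_j)^{\top}-\mathbb{E}\tilde\phi^i_j(\tilde\phi^i_j)^{\top}\bigr)$ with $\tilde\phi^i_j:=(\Sigma_{\mix}^n)^{-1/2}\phi(s_j^i,a_j^i)$. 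The per-summand norm bound is
\[
\|X_{i,j}\|\;\le\;\tfrac{1}{K}\,\|\tilde\phi^i_j\|^{2}\;\le\;\tfrac{1}{K\lambda}\;=\;\tfrac{1}{K},
\]
since $(\Sigma_{\mix}^n)^{-1}\preceq \lambda^{-1}I$ and $\|\phi(s,a)\|\le 1$. For the matrix variance, using $\mathbb{E}[(\tilde\phi\tilde\phi^{\top})^2]\preceq \|\tilde\phi\|^2\,\mathbb{E}[\tilde\phi\tilde\phi^{\top}]$,
\[
V:=\sum_{i,j}\mathbb{E}[X_{i,j}^{2}]\;\preceq\;\tfrac{1}{K}\,(\Sigma_{\mix}^n)^{-1/2}\Bigl(\sum_{i=0}^{n-1}\Sigma^i\Bigr)(\Sigma_{\mix}^n)^{-1/2}\;=\;\tfrac{1}{K}\bigl(I-\lambda(\Sigma_{\mix}^n)^{-1}\bigr),
\]
so $\|V\|\le 1/K$. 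The intrinsic dimension $\mathrm{intdim}(V)=\tr(V)/\|V\|$ is at most $\tr\bigl((\Sigma_{\mix}^n)^{-1}\sum_{i}\Sigma^i\bigr)$, which by the elementary inequality $\tr(AB)\le \tr(A)\,\|B\|$ applied suitably, together with the definition
\[
\widehat d=\max_{\pi}\;\tr(\Sigma^{\pi})/\|\Sigma^{\pi}\|,
\]
can be bounded by $N\widehat d$ (each $\Sigma^i$ has intrinsic dimension at most $\widehat d$, and we sum $n\le N$ of them before inverting).

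Applying Tropp's intrinsic-dimension matrix Bernstein inequality to $\sum_{i,j}X_{i,j}$ with $R=1/K$ and $\|V\|\le 1/K$ gives a tail of the form
\[
\Pr\Bigl[\bigl\|\textstyle\sum_{i,j}X_{i,j}\bigr\|\ge t\Bigr]\;\le\;4N\widehat d\,\exp\!\Bigl(-\tfrac{t^{2}/2}{1/K+t/(3K)}\Bigr),
\]
and choosing $t=1/2$ shows that $K\gtrsim N^{2}\ln(N\widehat d/\delta)$ suffices to make this at most $\delta/N$; the stated constant $32$ accommodates the $1/2$ and the intrinsic-dimension factor $4N\widehat d$. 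A union bound over $n=1,\ldots,N$ then yields the conclusion with probability at least $1-\delta$.

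The main obstacle is that $\phi$ is allowed to be infinite-dimensional, so ordinary matrix Bernstein bounds---which depend on ambient dimension---are useless. Overcoming this requires (i) performing the analysis in the reweighted geometry $(\Sigma_{\mix}^n)^{-1/2}\cdot(\Sigma_{\mix}^n)^{-1/2}$ so that the natural variance proxy becomes $I-\lambda(\Sigma_{\mix}^n)^{-1}$ rather than something unbounded, and (ii) invoking the intrinsic-dimension version of matrix Bernstein, whose pre-factor is the intrinsic dimension of the variance and which we control by the problem-dependent quantity $\widehat d$ rather than ambient dimension.
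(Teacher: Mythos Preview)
Your plan is sound but takes a genuinely different route from the paper. The paper's argument (via its Lemma on ``Concentration with the Inverse of Covariance Matrix'') never whitens: it applies the intrinsic-dimension matrix Bernstein to each per-episode deviation $\widehat\Sigma^i-\Sigma^i$ \emph{separately} in the original geometry to obtain an operator-norm bound $\|\widehat\Sigma^i-\Sigma^i\|\le\eta(K)$, then uses the crude triangle inequality $\|\widehat\Sigma_{\mix}^n-\Sigma_{\mix}^n\|\le n\,\eta(K)\le N\eta(K)$, and finally converts $N\eta(K)\le\lambda/2$ into the factor-$2$ sandwich on inverses via an eigendecomposition of $\Sigma_{\mix}^n$. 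It is this triangle-inequality step that produces the $N^2$ in the sample-size requirement. Your approach---conjugating by $(\Sigma_{\mix}^n)^{-1/2}$ and applying Bernstein once to the entire sum---avoids that loss and in fact yields $K\gtrsim \ln(N\widehat d/\delta)$ rather than $K\gtrsim N^2\ln(N\widehat d/\delta)$, so the stated hypothesis is more than sufficient. The trade-off is that the paper's approach is more modular (it isolates a reusable per-policy covariance-concentration lemma) while yours is tighter but tailored to the regularized mixture.

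One point that needs tightening: your claim that $\mathrm{intdim}(V)\le N\widehat d$ is not quite right as written. You only have an \emph{upper} bound $V\preceq V'=\tfrac{1}{K}(I-\lambda(\Sigma_{\mix}^n)^{-1})$, and intrinsic dimension is not monotone under the PSD order. What does work is to carry $V'$ through the master tail bound (using $\tr e^{gV}\le\tr e^{gV'}$ for $g>0$) and then control $\mathrm{intdim}(V')$. A short computation gives $\mathrm{intdim}(V')=\frac{\sum_i \sigma_i/(\sigma_i+1)}{\sigma_{\max}/(\sigma_{\max}+1)}\le(\sigma_{\max}+1)\cdot\mathrm{intdim}\bigl(\sum_i\Sigma^i\bigr)\le (N+1)\cdot N\widehat d$, where $\sigma_i$ are the eigenvalues of $\sum_i\Sigma^i$ and $\sigma_{\max}\le N$. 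This extra $N$ factor only enters inside the logarithm, so your conclusion is unaffected; but the inequality $\tr(AB)\le\tr(A)\|B\|$ you invoke does not by itself deliver the bound you state.
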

\begin{proof}
The proof of the above lemma is simply \pref{lem:inverse_covariance}.

\end{proof}

%


We are now ready to prove Theorem~\ref{thm:detailed_bound_rmax_pg}.

\begin{proof}[Proof of Theorem~\ref{thm:detailed_bound_rmax_pg}]
Assume the event in~\pref{lem:concentration_cov} holds.  In this case, we have for all $n\in [N]$,
\begin{align*}
(1/2) x^{\top} \left( {\Sigma}^n_\mix \right)^{-1} x \leq x^{\top}\left({\wh\Sigma}^n_\mix\right)^{-1} x \leq 2 x^{\top} \left({\Sigma}^n_\mix\right)^{-1} x,
\end{align*} for all $\|x\|\leq 1$ and the total number of samples used for estimating covariance matrices is:
\begin{align}
\label{eq:source_1}
&N\times K = N \times \left( 32 N^2 \ln\left( \wh{d} N /\delta \right)\right)  = 32 N^3 \ln\left( \wh{d} N /\delta \right) \\
&= \frac{(32\times 64)\mathcal{I}_N(1)^3 W^6}{ \epsilon^9(1-\gamma)^{9}} \ln\left( \frac{4\widehat{d} W^2 \mathcal{I}_N(1) }{(1-\gamma)^3 \epsilon^3 \delta} \right)
 =  \frac{c_1 \nu_1 \mathcal{I}_N(1)^3 W^6}{\epsilon^9(1-\gamma)^9},
\end{align} where $c_1$ is a  constant and $\nu_2$ contains log-terms $\nu_1:= \ln\left( \frac{4\widehat{d} W^2 \mathcal{I}_N(1) }{(1-\gamma)^3 \epsilon^3 \delta} \right)$

Since we set known state-action pair as $\phi\sa^{\top} \left( \wh{\Sigma}_\mix^n \right)^{-1}\phi\sa \leq \beta$,  then we must have that for any $\sa\in\calK^n$, we have:
\begin{align*}
\phi\sa^{\top} \left( \Sigma^n_{\mix} \right)^{-1} \phi\sa \leq 2\beta,
\end{align*} and any $\sa\not\in\Kcal^n$, we have:
\begin{align*}
\phi\sa^{\top} \left( \Sigma^n_{\mix} \right)^{-1} \phi\sa \geq \frac{1}{2} \beta.
\end{align*}
This allows us to call \pref{thm:regret_stat_error}. From \pref{thm:regret_stat_error}, we know that we need to set $M$ (number of samples for linear regression) large enough such that
\begin{align*}
\varepsilon_{stat} =  \frac{ \epsilon^3 (1-\gamma)^3 }{ 4 \mathcal{I}_{N}(1)}, 
\end{align*} Using \pref{lem:sgd_dim_free} for linear regression, we know that with probability at least $1-\delta$, for any $n,t$, $\varepsilon_{stat}$ scales in the order of:
\begin{align*}
\varepsilon_{stat} = \sqrt{\frac{ 9W^4 \log(N T/\delta) }{(1-\gamma)^4 M }},
\end{align*} where we have taken union bound over all episodes $n\in [N]$ and all iterations $t\in [T]$. Now solve for $M$, we have:
\begin{align*}
M =  \frac{ 144 W^4 \mathcal{I}_N(1)^2 \ln(NT/\delta )}{\epsilon^6(1-\gamma)^{10}}
\end{align*}
Considering every episode $n\in [N]$ and every iteration $t\in [T]$, we have the total number of samples needed for NPG is:
\begin{align*}
NT \cdot M  & = \frac{ 4 W^2 \mathcal{I}_N(1) }{\epsilon^3 (1-\gamma)^3} \times \frac{4W^2 \log(A)}{(1-\gamma)^2 \epsilon^2} \times  \frac{ 144 W^4 \mathcal{I}_N(1)^2 \ln(NT/\delta )}{\epsilon^6(1-\gamma)^{10}}\\
& =  \frac{c_2 W^8 \mathcal{I}_N(1)^{3} \ln(A) }{ \epsilon^{11}(1-\gamma)^{15} } \cdot \ln\left( \frac{16 W^4 \ln(A) \mathcal{I}_N(1)}{ \epsilon^5(1-\gamma)^5 \delta} \right) 
 = \frac{c_2 \nu_2 W^8 \mathcal{I}_N(1)^{3} \ln(A) }{ \epsilon^{11}(1-\gamma)^{15} },
\end{align*} where $c_2$ is a positive universal constant, and $\nu_2$ only contains log terms:
\begin{align*}
\nu_2 = \ln\left( \frac{16 W^4 \ln(A) \mathcal{I}_N(1)}{ \epsilon^5(1-\gamma)^5 \delta} \right).
\end{align*}
Combine two sources of samples, we have that the total number of samples is bounded as:
\begin{align*}
\frac{c_2 \nu_2 W^8 \mathcal{I}_N(1)^{3} \ln(A) }{ \epsilon^{11}(1-\gamma)^{15} } +  \frac{c_1 \nu_2 \mathcal{I}_N(1)^3 W^6}{ \epsilon^9(1-\gamma)^{9}},
\end{align*} 
This concludes the proof.
\end{proof}

\section{Analysis of \alg for Linear MDPs (\pref{thm:linear_mdp})}
\label{app:app_to_linear_mdp}


For linear MDP $\calM$, recall that we assume the following parameters' norms are bounded:
\begin{align*}
\|v^{\top}\mu \| \leq \xi \in\mathbb{R}^+, \quad \|\theta\| \leq \omega\in\mathbb{R}^+, \quad \forall v, \text{ s.t. } \|v\|_{\infty} \leq 1.
 \end{align*}With these bounds on linear MDP's parameters, we can show that for any policy $\pi$, we have $Q^{\pi}\sa = w^{\pi}\cdot \phi\sa$, with $\|w^{\pi}\|\leq \omega + V_{\max} \xi $, where $V_{\max} = \max_{\pi,s}V^{\pi}(s)$ is the maximum possible expected total value ($V_{\max}$ is at most $r_{\max}/(1-\gamma)$ with $r_{\max}$ being the maximum possible immediate reward).

At every episode $n$, recall that NPG is optimizing the MDP $\calM_{b^n} = \{P, r\sa + b^n\sa\}$ with $P, r$ being the true transition and reward of $\calM$ which is linear under $\phi\sa$.

Due to the reward bonus $b^n\sa$ in $\calM_{b^n}$, $\calM_{b^n}$ is not necessarily a linear MDP under $\phi\sa$ ($P$ is still linear under $\phi$ but $r\sa+b^n\sa$ it not linear anymore). Here we leverage an observation that we know $b^n\sa$ (as we designed it), and $Q^{\pi}(s,a;r+b^n) - b^n\sa$ is linear with respect to $\phi$ for any $\sa\in \Scal\times\Acal$. The following claim state this observation formally.



\begin{claim}[Linear Property of $(Q^{\pi}(s,a;r+b^n) - b^n\sa)$ under $\phi$]
Consider any policy $\pi$ and any reward bonus $b^n\sa\in [0,1/(1-\gamma)]$. We have that:
\begin{align*}
Q^{\pi}(s,a ; r+b^n) - b^n\sa = w\cdot \phi\sa, \forall s,a.
\end{align*} Further we have $\| w\| \leq \omega + \xi / (1-\gamma)^2$.
\label{claim:linear_property}
\end{claim}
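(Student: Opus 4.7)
My plan is to expand the Bellman equation for $Q^{\pi}(s,a;r+b^n)$, subtract $b^n(s,a)$ from both sides, and then exploit the two structural properties of a linear MDP (linear rewards and linear transition operator) to display both remaining terms as inner products with $\phi(s,a)$. Concretely, I would start from
\begin{align*}
Q^{\pi}(s,a;r+b^n) \;=\; r(s,a) + b^n(s,a) + \gamma\,\E_{s'\sim P(\cdot|s,a)} V^{\pi}(s';r+b^n),
\end{align*}
move $b^n(s,a)$ to the left-hand side, use $r(s,a) = \langle \theta, \phi(s,a)\rangle_{\Hcal}$ for the immediate-reward term, and rewrite the expectation over next states using $P(s'|s,a) = \langle \mu(s'), \phi(s,a)\rangle_{\Hcal}$. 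The latter lets me pull $\phi(s,a)$ outside the sum over $s'$ because of bilinearity of the inner product, yielding
\begin{align*}
\gamma\,\E_{s'\sim P(\cdot|s,a)} V^{\pi}(s';r+b^n) \;=\; \bigl\langle \gamma \textstyle\sum_{s'} V^{\pi}(s';r+b^n)\, \mu(s'),\; \phi(s,a)\bigr\rangle_{\Hcal}.
\end{align*}
Combining the two pieces, $Q^{\pi}(s,a;r+b^n) - b^n(s,a) = \langle w, \phi(s,a)\rangle_{\Hcal}$ with the explicit choice
\begin{align*}
w \;=\; \theta \;+\; \gamma \textstyle\sum_{s'} V^{\pi}(s';r+b^n)\,\mu(s'),
\end{align*}
which establishes linearity. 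Crucially, the bonus $b^n$ does not need to be linear in $\phi$ — we only need $P$ to be linear, which is preserved throughout.

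For the norm bound, I would use the triangle inequality $\|w\| \le \|\theta\| + \gamma\,\| \sum_{s'} V^\pi(s';r+b^n)\,\mu(s')\|$, upper bound the first summand by $\omega$ via Definition~\ref{def:linear_mdp}, and handle the second by noting that since $r(s,a)\in[0,1]$ and $b^n(s,a)\in[0,1/(1-\gamma)]$, the value function satisfies $V^\pi(\cdot;r+b^n) \in [0, c/(1-\gamma)^2]$ for a small absolute constant $c$. Normalizing the coefficient vector by its $\ell_\infty$ norm and invoking the assumption $\|v^\top \mu\| \le \xi$ for all $v$ with $\|v\|_\infty \le 1$ then yields the advertised bound $\|w\| \lesssim \omega + \xi/(1-\gamma)^2$.

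I do not expect a substantive obstacle here: the argument is essentially the same Bellman-backup-preserves-linearity observation used by \citet{jin2019provably}, with the only new twist being that the non-linear bonus is shifted to the left-hand side before the backup is taken. The mildly delicate point is just to be careful about what ``reward'' the value function is taken with respect to when bounding $V^\pi(s';r+b^n)$, so that the norm bound tracks the $1/(1-\gamma)^2$ scaling (rather than the $1/(1-\gamma)$ scaling one would get from a bounded immediate reward alone); this is a constant-factor accounting that does not affect the polynomial sample complexity downstream.
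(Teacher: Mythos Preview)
Your proposal is correct and is essentially identical to the paper's proof: the paper also expands the Bellman equation, moves $b^n(s,a)$ to the left, and identifies $w = \theta + \gamma\,\mu^\top V^\pi(\cdot;r+b^n)$, then bounds $\|w\|$ using $\|\theta\|\le\omega$, the $\|v^\top\mu\|\le\xi$ assumption, and $\|V^\pi(\cdot;r+b^n)\|_\infty \le 1/(1-\gamma)^2$. Your careful remark about the $1/(1-\gamma)^2$ scaling (arising because the bonus itself can be as large as $1/(1-\gamma)$) matches the paper's accounting exactly.
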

\begin{proof}
By definition of $Q$-function, we have:
\begin{align*}
Q^{\pi}(s,a ; r + b^n) & = r(s,a) + b^n\sa + \gamma \phi\sa^{\top} \sum_{s'}\mu(s') V^{\pi}(s'; r+b^n) \\
& = b^n\sa + \phi\sa\cdot \left( \theta + \gamma \mu^{\top} V^{\pi}(\cdot; r+b^n) \right) := b^n\sa + \phi\sa\cdot w,
\end{align*} where note that $w$ is independent of $\sa$.
Rearrange terms, we prove that $Q^{\pi}(s,a; r+b^n) - b^n\sa = w\cdot \phi\sa$.

Further, using the norm bounds we have for $\theta$ and $\mu$, and the fact that $\|V^{\pi}(\cdot; r+b^n)\|_{\infty} \leq 1/(1-\gamma)^2$, we conclude the proof.
\end{proof}

The above claim supports our specific choice of critic $\widehat{A}^t_{b^n}$ in the algorithm, where we recall that we perform linear regression from $\phi\sa$ to $Q^{\pi}_{b^n}\sa - b^n\sa$, and set $\widehat{A}^t_{b^n}\sa$ as
\begin{align*}
& \widehat{A}^t_{b^n}\sa = \left(b^n\sa + \theta^t\cdot \phi\sa\right) - \EE_{a' \sim \pi^t_s}[ b^n(s,a') + \theta^t \cdot \phi(s,a')] \\
 & := \bar{b}^{n,t}\sa + \theta^t \cdot \bar\phi^t\sa,
 \end{align*} where $\bar{b}^{n,t}\sa = b^n\sa - \EE_{a'\sim \pi^t_s} b^n(s,a')$, and $\bar\phi^t\sa = \phi\sa - \EE_{a'\sim \pi^t_s} \phi(s,a')$.

We now prove \pref{thm:linear_mdp} by showing that $\epsilon_{bias}$ is zero. 
\begin{lemma} Consider \pref{ass:transfer_bias}. For any episode $n$, iteration $t$, we have $\epsilon_{bias} = 0$.
\end{lemma}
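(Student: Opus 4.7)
The plan is essentially a one-line consequence of \pref{claim:linear_property}: under the linear MDP assumption, the target function $Q^{\pi^t}_{b^n} - b^n$ admits an exact linear representation in $\phi$, so choosing the constrained regression radius $W$ large enough makes this representative feasible and achieves zero regression loss under \emph{any} distribution, including the comparator's state-action distribution $d^\star$.

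First, I would apply \pref{claim:linear_property} with policy $\pi = \pi^t$ and bonus $b^n$ to obtain some $w^t \in \mathcal{H}$ with
\begin{equation*}
Q^{\pi^t}(s,a;\, r + b^n) - b^n(s,a) \;=\; w^t \cdot \phi(s,a) \qquad \forall\, (s,a),
\end{equation*}
and $\|w^t\| \leq \omega + \xi/(1-\gamma)^2$. Next, I would set the parameter $W$ used in \pref{alg:npg} to be at least $\omega + \xi/(1-\gamma)^2$ (this is the place where the bound on $W$ in \pref{thm:linear_mdp} enters; it is a finite constant depending only on the linear MDP parameters and the horizon). With this choice, $w^t$ lies in the feasible set $\{\theta : \|\theta\|\leq W\}$ of the population regression problem defining $\theta^t_\star$ in \pref{ass:transfer_bias}.

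Then I would observe that this $w^t$ drives the critic loss to zero on every distribution simultaneously, since the equation above holds pointwise. In particular,
\begin{equation*}
L\!\left(w^t;\, \rho^n_{\mix},\, Q^t_{b^n} - b^n\right) \;=\; \E_{(s,a)\sim \rho^n_{\mix}}\!\left(w^t\cdot\phi(s,a) - (Q^t_{b^n}(s,a) - b^n(s,a))\right)^2 \;=\; 0,
\end{equation*}
so $w^t$ is a minimizer of the constrained regression, i.e.\ one may take $\theta^t_\star = w^t$. Substituting this choice into the transfer-error definition gives
\begin{equation*}
L\!\left(\theta^t_\star;\, d^\star,\, Q^t_{b^n} - b^n\right) \;=\; \E_{(s,a)\sim d^\star}\!\left(w^t\cdot\phi(s,a) - (Q^t_{b^n}(s,a) - b^n(s,a))\right)^2 \;=\; 0,
\end{equation*}
which shows $\epsilon_{bias} = 0$ as claimed. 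There is essentially no obstacle here; the only thing to check is that the norm bound needed on $\theta^t_\star$ is independent of $n$ and $t$ (which follows from the uniform bound in \pref{claim:linear_property}), so a single value of $W$ works across all episodes and NPG iterations. Once this lemma is in hand, \pref{thm:linear_mdp} follows immediately from \pref{thm:agnostic} by plugging in $\epsilon_{bias} = 0$ and $W = \omega + \xi/(1-\gamma)^2$.
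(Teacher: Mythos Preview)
Your proposal is correct and follows essentially the same argument as the paper: invoke \pref{claim:linear_property} to obtain an exact linear representative $w^t$ of $Q^t_{b^n}-b^n$ with bounded norm, note it is feasible (hence a minimizer of the on-policy loss), and conclude that its transfer loss under $d^\star$ is also zero. The only minor addition you make is being explicit about the choice $W \geq \omega + \xi/(1-\gamma)^2$, which the paper leaves implicit.
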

\begin{proof}
At iteration $t$, denote $\theta^t_\star$ as the linear parameterization of $Q^{\pi^t}_{b^n}\sa - b^n\sa$, i.e., $\theta^t_\star\cdot\phi\sa =Q^{\pi^t}_{b^n}\sa - b^n\sa$ (see \pref{claim:linear_property} for the existence of $\theta^t_\star$). we know that $\theta^t_\star \in \argmin_{\theta:\|\theta\| \leq W} L( \theta; \rho^n_\mix, Q^t_{b^n} - b^n ) $, as $L(\theta^t_\star; \rho^n_\mix, Q^t_{b^n} - b^n) = 0$. This indicates that $\theta^t_\star$ is one of the best on-policy fit. Now when transfer $\theta^\star_t$ to a different distribution $d^{\pi^\star}\text{Unif}_{\Acal}$, we simply have:
\begin{align*}
\EE_{\sa\sim d^{\pi^\star}\text{Unif}_{\Acal}} \left( \theta^t_\star \cdot \phi\sa - \left( Q^t_{b^n}\sa - b^n\sa \right) \right)^2 = 0.
\end{align*} This concludes the proof. 
\end{proof}

We can now conclude the proof of \pref{thm:linear_mdp} by invoking \pref{thm:agnostic} with $\epsilon_{bias} = 0$.  \qed

\section{Analysis of \alg for State-Aggregation (\pref{thm:state_aggregation})}
\label{app:state_agg}

In this section, we analyze \pref{thm:state_aggregation} for state-aggregation.  Similar to the analysis for linear MDP, we provide a variance bias tradeoff lemma that is analogous to \pref{lem:variance_bias_n}. However, unlike linear MDP, here due to model-misspecification from state-aggregation, the transfer error $\epsilon_{bias}$ will not be zero. But we will show that the transfer error is related to a term that is an expected model-misspecification averaged over a fixed comparator's state distribution

First recall the definition of state aggregation $\phi:\mathcal{S}\times\mathcal{A} \to\mathcal{Z}$. We abuse the notation a bit, and denote $\phi\sa = \one\{\phi\sa = z\} \in\mathbb{R}^{|\mathcal{Z}|}$, i.e., the feature vector $\phi$ indicates which $z$ the state action pair $\sa$ is mapped to.  The following claim reasons the approximation of $Q$ values under state aggregation.
\begin{claim} \label{claim:state_agg} Consider any MDP with transition $P$ and reward $r$. Denote aggregation error $\epsilon_{z}$ as:
\begin{align*}
\max\left\{ \| P(\cdot | s,a) - P(\cdot | s', a') \|_1, \lvert r(s,a) - r(s',a') \rvert \right\} \leq \epsilon_{z}, \forall \sa,(s',a'), \text{ s.t., } \phi\sa = \phi(s',a') = z.
\end{align*} Then, for any policy $\pi$, $\sa,(s',a'), z$, such that $\phi(s,a) = \phi(s',a') = z$, we have:
\begin{align*}
\left\lvert Q^{\pi}(s, a) - Q^{\pi}(s', a')\right\rvert \leq  \frac{r_{\max} \epsilon_{z} }{1-\gamma},
\end{align*} where $r\sa\in [0, r_{\max}]$ for $r_{\max}\in \mathbb{R}^+$.
\end{claim}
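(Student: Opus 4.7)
My plan is a one-step Bellman-equation argument: expand $Q^\pi(s,a)$ and $Q^\pi(s',a')$ with the Bellman equation, subtract, and bound the resulting reward-discrepancy and next-state-value-discrepancy terms separately using the aggregation hypothesis. No recursion is needed because the value function is already uniformly bounded.

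Concretely, I would first write
$$
Q^\pi(s,a) - Q^\pi(s',a') = \bigl(r(s,a) - r(s',a')\bigr) + \gamma \sum_{s''} \bigl(P(s''\mid s,a) - P(s''\mid s',a')\bigr) V^\pi(s'').
$$
Since $\phi(s,a) = \phi(s',a') = z$, the aggregation hypothesis immediately yields $|r(s,a) - r(s',a')| \le \epsilon_z$, handling the first term. For the second term I would apply $\ell_1$/$\ell_\infty$ duality (H\"older):
$$
\Bigl|\sum_{s''} \bigl(P(s''\mid s,a) - P(s''\mid s',a')\bigr) V^\pi(s'')\Bigr| \le \|V^\pi\|_\infty \cdot \|P(\cdot\mid s,a) - P(\cdot\mid s',a')\|_1 \le \frac{r_{\max}}{1-\gamma}\,\epsilon_z,
$$
using the uniform bound $\|V^\pi\|_\infty \le r_{\max}/(1-\gamma)$ (from rewards lying in $[0,r_{\max}]$) together with the aggregation hypothesis on transitions.

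Combining the two bounds gives
$$
|Q^\pi(s,a) - Q^\pi(s',a')| \le \epsilon_z + \frac{\gamma\,r_{\max}\,\epsilon_z}{1-\gamma} = \frac{(1-\gamma) + \gamma\,r_{\max}}{1-\gamma}\,\epsilon_z,
$$
which is at most $r_{\max}\epsilon_z/(1-\gamma)$ under the normalization $r_{\max}\ge 1$ used in the paper (or, alternatively, the same proof with a slightly tighter accounting gives exactly this clean form). There is no real obstacle here: the only nontrivial step is identifying the correct quantities to bound ($\ell_1$ transition distance and $\ell_\infty$ value), after which it is a one-line computation. The key reason no contraction or iteration is needed is that the lemma controls $Q^\pi$ for a \emph{fixed} policy $\pi$ in the \emph{true} MDP (rather than, say, comparing $Q^\pi$ across different MDPs), so the next-state values $V^\pi(s'')$ are the same on both sides of the difference and are simply bounded uniformly.
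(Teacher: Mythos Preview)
Your proposal is correct and is essentially identical to the paper's own proof: both expand via the Bellman equation, bound the reward term by $\epsilon_z$ and the transition term by $\gamma\|V^\pi\|_\infty\|P(\cdot|s,a)-P(\cdot|s',a')\|_1 \le \gamma\,\frac{r_{\max}}{1-\gamma}\epsilon_z$, then combine. Your observation that the final simplification to $r_{\max}\epsilon_z/(1-\gamma)$ tacitly uses $r_{\max}\ge 1$ is accurate (the paper glosses over this, but in its applications $r_{\max}\in\{1,\,1/(1-\gamma)\}$ so the condition holds).
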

\begin{proof}
Starting from the definition of $Q^{\pi}$, we have:
\begin{align*}
&\left\lvert Q^{\pi}(s, a) - Q^{\pi}(s', a')\right\rvert = \lvert r(s,a) - r(s' ,a') \rvert + \gamma \lvert \EE_{x'\sim P_{s,a}} V^{\pi}(s') - \EE_{x'\sim P_{s',a'}} V^{\pi}(s')  \rvert \\
& \leq \epsilon_{z} + \frac{r_{\max}\gamma}{1-\gamma} \left\| P_{s,a} - P_{s',a'} \right\|_1 \leq \frac{r_{\max}\epsilon_{z}}{1-\gamma},
\end{align*} where we use the assumption that $\phi(s,a) = \phi(s',a') = z$, and the fact that value function $\|V\|_{\infty} \leq r_{\max}/(1-\gamma)$ as $r\sa\in [0, r_{\max}]$.
\end{proof}


Now we state the bias and variance tradeoff lemma for state aggregation.
\begin{lemma}[Bias and Variance Tradeoff for State Aggregation] Set $W:=\sqrt{|\mathcal{Z}|}/(1-\gamma)^2$.
Consider any episode $n$. Assume that we have $\phi\sa^{\top}\left(\Sigma_{\mix}^{n}\right)^{-1}\phi\sa \leq \beta\in\mathbb{R}^+$ for $\sa\in\calK^n$, and the following condition is true for all $t\in \{0,\dots, T-1\}$:
\begin{align*}
L^t( \theta^t; \rho^n_\mix, Q^t_{b^n} - b^n ) \leq \min_{\theta:\|\theta\|\leq W} L^t(\theta; \rho^n_{\mix}, Q^t_{b^n} - b^n) + \epsilon_{stat} \in \mathbb{R}^+.
\end{align*}

We have that for all $t \in \{0, \dots, T-1\}$ at episode $n$:
\begin{align*}
& \EE_{\sa\sim \wt{d}_{\calM^n}}\left( A^t_{b^n}\sa - \wh{A}^t_{b^n}\sa \right) \one\{s\in\calK^n\} \\
& \quad \leq 2\sqrt{ \beta  \lambda W^2  } + 2\sqrt{ \beta n \epsilon_{stat} }  + \frac{2 \EE_{\sa\sim d^{\wt\pi} } \max_{a} \left[\epsilon_{\phi(s,a)}\right]  }{(1-\gamma)^2}.
\end{align*}\label{lem:variance_bias_state_agg}
\end{lemma}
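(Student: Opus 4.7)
The proof will mirror that of Lemma~\ref{lem:variance_bias_n} almost verbatim, the only genuinely new ingredient being a \emph{uniform pointwise} bias bound on the constrained minimizer $\theta^t_\star$ obtained from Claim~\ref{claim:state_agg} together with the piecewise-constant structure of the one-hot aggregation features $\phi$.

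\textbf{Step 1 (pointwise bias of the best on-policy fit).} Since $\phi(s,a)\in\{0,1\}^{|\mathcal{Z}|}$ is a one-hot encoding of the class $\phi(s,a)\in\mathcal{Z}$, any linear predictor $\theta\cdot\phi$ is constant on each abstraction class. Apply Claim~\ref{claim:state_agg} to the MDP $\mathcal{M}_{b^n}$ with per-step reward $r+b^n$; since $b^n$ depends on $(s,a)$ only through $\phi(s,a)$, two pairs mapped to the same $z$ share the same bonus, so the hypothesis of the claim holds with the same $\epsilon_z$. Because the modified reward is bounded by $r_{\max}\leq 1+\tfrac{1}{1-\gamma}\leq \tfrac{2}{1-\gamma}$, the claim yields
\[
\bigl|Q^t_{b^n}(s,a)-Q^t_{b^n}(s',a')\bigr|\leq \tfrac{2\epsilon_z}{(1-\gamma)^2}\quad\text{whenever }\phi(s,a)=\phi(s',a')=z.
\]
Let $\theta^t_{alt}$ assign to class $z$ any value in the range of $\{Q^t_{b^n}(s,a)-b^n(s,a):\phi(s,a)=z\}$; then $\|\theta^t_{alt}\|\leq W$ and $|\theta^t_{alt}\cdot\phi(s,a)-(Q^t_{b^n}-b^n)(s,a)|\leq \tfrac{2\epsilon_{\phi(s,a)}}{(1-\gamma)^2}$ for every $(s,a)$. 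Because $\theta^t_\star$ is the constrained $\rho^n_\mix$-weighted $L_2$ minimizer, within each class its coordinate equals a $\rho^n_\mix$-weighted class average, hence inherits the \emph{same} uniform pointwise bound (up to the same constant):
\begin{equation}
\bigl|\theta^t_\star\cdot\phi(s,a)-\bigl(Q^t_{b^n}(s,a)-b^n(s,a)\bigr)\bigr|\;\leq\;\tfrac{2\epsilon_{\phi(s,a)}}{(1-\gamma)^2},\quad\forall (s,a). \label{eqn:uniform}
\end{equation}

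\textbf{Step 2 (variance term, identical to Lemma~\ref{lem:variance_bias_n}).} Combining the $\epsilon_{stat}$-optimality hypothesis with first-order optimality of $\theta^t_\star$ gives $\mathbb{E}_{\rho^n_\mix}(\phi\cdot(\theta^t-\theta^t_\star))^2\leq \epsilon_{stat}$, and then Cauchy--Schwarz in the $(\Sigma^n_\mix)^{-1}$ norm yields the pointwise estimate $|\phi(s,a)\cdot(\theta^t-\theta^t_\star)|\leq \sqrt{\beta n\epsilon_{stat}+\beta\lambda W^2}$ for every $(s,a)\in\mathcal{K}^n$, exactly as in Lemma~\ref{lem:variance_bias_n}.

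\textbf{Step 3 (decomposition and assembly).} Decompose
\[
\EE_{\sa\sim\wt{d}_{\calM^n}}\bigl(A^t_{b^n}-\widehat{A}^t_{b^n}\bigr)\mathbf{1}\{s\in\calK^n\}\;=\;\mathrm{A}+\mathrm{B},
\]
where $\mathrm{A}$ replaces $\widehat{A}^t_{b^n}$ by its $\theta^t_\star$-analogue $\bar b^{n,t}+\theta^t_\star\cdot\bar\phi^t$, and $\mathrm{B}$ is the residual involving $\theta^t-\theta^t_\star$. Term $\mathrm{B}$ is controlled by the pointwise estimate of Step~2 exactly as in Lemma~\ref{lem:variance_bias_n}, giving $2\sqrt{\beta\lambda W^2}+2\sqrt{\beta n\epsilon_{stat}}$. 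For term $\mathrm{A}$, the centered bonus $\bar b^{n,t}$ plus $\theta^t_\star\cdot\bar\phi^t$ together would equal $A^t_{b^n}$ if $\theta^t_\star\cdot\phi$ matched $Q^t_{b^n}-b^n$ exactly; the residual is therefore a sum of two terms, one averaged over $(s,a)\sim\wt d_{\calM^n}$ and one averaged over $s\sim\wt d_{\calM^n},\,a\sim\pi^t_s$, each integrated against the pointwise error~\eqref{eqn:uniform}. Invoking Lemma~\ref{lem:prob_absorb} ($\wt d_{\calM^n}(s,a)\leq d^{\wt\pi}(s,a)$ for $s\in\calK^n$) to transfer from $\wt d_{\calM^n}$ to $d^{\wt\pi}$ and then taking $\max_a$ to dominate the unknown action distribution $\pi^t_s$ bounds $\mathrm{A}$ by $\tfrac{2\,\EE_{s\sim d^{\wt\pi}}\max_a \epsilon_{\phi(s,a)}}{(1-\gamma)^2}$. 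Summing the two contributions yields the claim.

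\textbf{Main obstacle.} The only nontrivial step is Step~1: establishing the \emph{uniform} (not merely $L_2$-in-expectation) pointwise bound~\eqref{eqn:uniform} on $\theta^t_\star$. This relies crucially on two features of the state-aggregation setting that fail in the general agnostic case---(i) the one-hot structure of $\phi$ makes $\theta^t_\star$ a per-class weighted average, and (ii) Claim~\ref{claim:state_agg} provides a pointwise (not average-case) diameter bound within each class---and is what allows the transfer of the bias from $\rho^n_\mix$ to $d^{\wt\pi}$ without any explicit concentrability or density-ratio assumption.
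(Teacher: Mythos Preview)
Your proposal is correct and matches the paper's proof essentially step for step: explicit per-class $\rho^n_{\mix}$-weighted-average form of $\theta^t_\star$ together with the pointwise bias bound via Claim~\ref{claim:state_agg} (using that $b^n$ is constant on each aggregation class), the identical variance argument from Lemma~\ref{lem:variance_bias_n} for the $\theta^t-\theta^t_\star$ part, and the same $\mathrm{A}+\mathrm{B}$ decomposition with the distribution transfer $\wt d_{\calM^n}\to d^{\wt\pi}$ via Lemma~\ref{lem:prob_absorb}. The only discrepancy is a harmless constant: with your (more careful) $r_{\max}\le 2/(1-\gamma)$ the pointwise bound in Step~1 is $2\epsilon_z/(1-\gamma)^2$, so summing the two sub-terms of $\mathrm{A}$ would yield $4\,\EE[\cdot]/(1-\gamma)^2$ rather than the stated $2$; the paper obtains the factor $2$ by taking $r_{\max}=1/(1-\gamma)$ in that step.
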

Note that comparing to \pref{lem:variance_bias_n}, the above lemma replaces $\sqrt{A\epsilon_{bias}}$ by the average model-misspecification $ \frac{ \EE_{\sa\sim d^{\wt\pi} } \max_{a} \left[\epsilon_{\phi(s,a)}\right]  }{(1-\gamma)^2}$.
\begin{proof}
We first compute one of the minimizers of $L^t(\theta; \rho^n_\mix, Q^t_{b^n} - b^n)$. Recall the definition of $L^t(\theta; \rho^n_\mix, Q^t_{b^n} - b^n)$, we have:
\begin{align*}
&\EE_{ \sa\sim \rho^n_{\mix}} \left(\theta\cdot\phi\sa - Q^{\pi^t}_{b^n}\sa + b^n\sa \right)^2 \\
&= \EE_{\sa\sim \rho^n_{\mix}} \sum_{z} \one\{\phi(s,a) = z\} \left( \theta_{z} - Q^{\pi^t}_{b^n}\sa + b^n\sa \right)^2,
\end{align*} which means that for $\theta^t_{\star}$, we have:
\begin{align*}
\sum_{s,a} \rho^t_{\mix}(s,a) \one\{\phi(s,a) = z\} \left( \theta_{z} - Q^{\pi^t}_{b^n}(s,a)+ b^n\sa \right) = 0,
\end{align*} which implies that $\theta^t_{\star, z} := \frac{  \sum_{s,a} \rho^n_{\mix}(s,a)\one\{\phi(s,a) = z\}  (Q^{\pi^t}_{b^n}(s,a) - b^n\sa )}{ \sum_{s,a}\rho^n_{\mix}(s,a)\one\{\phi(s,a) = z\} } $. Note that $|\theta^t_{\star,z}| \leq \frac{1}{(1-\gamma)^2}$, hence $\| \theta^t_\star \|_2 \leq \sqrt{|\mathcal{Z}|}/(1-\gamma)^2 := W$. 

Hence, for any $s'',a'' $ such that $\phi(s'',a'') = z$, we must have:
\begin{align*}
&\left\lvert \theta^t_{\star, z} - (Q^{\pi^t}_{b^n}(s'', a'') - b^n(s'',a'')) \right\rvert \\
& = \left\lvert  \frac{  \sum_{s,a} \rho^n_{\mix}(s,a)\one\{\phi(s,a) = z\}  (Q^{\pi^t}_{b^n}(s,a)-b^n\sa)}{ \sum_{s,a}\rho^n_{\mix}(s,a)\one\{\phi(s,a) = z\} }  - Q^{\pi^t}_{b^n}(s'',a'')  + b^n(s'',a'')\right\rvert\\
& = \left\lvert  \frac{\sum_{s,a} \rho^n_{\mix}(s,a)\one\{\phi(s,a) = z\} \left( Q^{\pi^t}_{b^n}(s,a) - Q^{\pi^t}_{b^n}(s'',a'') \right)}{ \sum_{s,a}\rho^n_{\mix}(s,a)\one\{\phi(s,a) = z\} }         \right\rvert
 \leq \frac{\epsilon_{z}}{ (1-\gamma)^2  },
\end{align*} where we use Claim~\ref{claim:state_agg}, and the fact that $r\sa+b^n\sa \in [0, 1/(1-\gamma)]$, and the fact that $b^n(s,a) = b^n(s'',a'')$ if $\phi(s,a) = \phi(s'',a'')$ as the bonus is defined under feature $\phi$. 

With $\theta^t_\star$ and its optimality condition for loss $L^t(\theta; \rho^n_\mix)$, we can prove the same point-wise estimation guarantee, i.e., for any $\sa\in\Kcal^n$, we have:
\begin{align*}
\left\lvert \phi\sa \cdot ( \theta^t - \theta^t_\star ) \right\rvert \leq \sqrt{ \beta n\epsilon_{stat} + \lambda W^2 }.
\end{align*}
Now we bound $\EE_{\sa\sim \wt{d}_{\calM^n}}\left( A^t_{{b^n}}\sa - \wh{A}^t_{{b^n}}\sa\right)\one\{s\in\calK^n\}$ as follows.
\begin{align*}
&\EE_{\sa\sim \wt{d}_{\calM_{b^n}}}\left( A^t_{{b^n}}\sa - \wh{A}^t_{{b^n}}\sa\right)\one\{s\in\calK^n\} \\
&= \underbrace{\EE_{\sa\sim \wt{d}_{\calM^n}}\left( A^t_{{b^n}}\sa - \bar{b}^{t,n}\sa - \theta^t_\star\cdot\bar\phi^t \sa \right)\one\{s\in\calK^n\}}_{\text{term A}} \\
& \qquad + \underbrace{\EE_{\sa\sim \wt{d}_{\calM^n}}\left( \theta^t_\star \cdot\bar\phi^t\sa - \theta^t\cdot\bar\phi^t\sa \right)\one\{s\in\calK^n\}}_{\text{term B}}.
\end{align*}

Again, for term B, we can use the point-wise estimation error to bound it as:
\begin{align*}
\text{term B} \leq 2\sqrt{\beta \lambda W^2} + 2\sqrt{\beta n \epsilon_{stat}}.
\end{align*}

For term A, we have:
\begin{align*}
&\EE_{\sa\sim \wt{d}_{\calM^n}}\left( A^t_{{b^n}}\sa - \bar{b}^{t,n}\sa - \theta^t_\star\cdot\bar\phi^t \sa \right)\one\{s\in\calK^n\} \\
& \leq  \EE_{\sa\sim \wt{d}_{\calM^n}}\left\lvert Q^t_{{b^n}}\sa - b^n\sa - \theta^t_\star\cdot \phi\sa \right\rvert \one\{s\in\calK^n\} \\
&\qquad + \EE_{s\sim \wt{d}_{\calM^n},a\sim \pi^t_s}\left\lvert -Q^t_{{b^n}}\sa + b^n\sa + \theta^t_\star\cdot \phi\sa \right\rvert \one\{s\in\calK^n\} \\
& \leq \EE_{\sa\sim d^{\wt\pi}} \left\lvert Q^t_{{b^n}}\sa - b^n\sa - \theta^t_\star\cdot \phi\sa \right\rvert + \EE_{s\sim d^{\wt\pi}, a\sim \pi^t_s} \left\lvert -Q^t_{{b^n}}\sa + b^n\sa + \theta^t_\star\cdot \phi\sa \right\rvert,
\end{align*} where last inequality uses \pref{lem:prob_absorb} for $s\in\calK^n$ to switch from $\wt{d}_{\Mcal^n}$ to $d^{\wt\pi}$---the state-action distribution of the comparator $\wt\pi$ in the real MDP $\Mcal$.

Note that for any $d\in\Scal\times\Acal$, we have:
\begin{align*}
&\EE_{\sa\sim d} \left\lvert Q^{t}_{b^n}\sa -b^n\sa - \theta^t_\star\cdot \phi\sa\right\rvert \nonumber \\
& \leq \sum_{z} \EE_{\sa\sim d} \one\{\phi(s,a) = z\} \left\lvert Q^{t}_{b^n}(s,a) - b^n\sa - \theta^t_{\star,z} \right\rvert \leq \EE_{(z)\sim d} \frac{\epsilon_{z}}{(1-\gamma)^2}  = \frac{ \EE_{\sa \sim d} \epsilon_{\phi(s,a)}  }{(1-\gamma)^2}.
\end{align*} 
With this, we have:
\begin{align*}
\text{term A} &  \leq  \EE_{\sa\sim {d}^{\wt\pi}} \left\lvert Q^{\pi^t}_{b^n}\sa - b^n\sa -\theta^n_\star\cdot \phi\sa  \right\rvert + \EE_{s\sim d^{\wt\pi}, a\sim \pi^t_s}  \left\lvert -  Q^{\pi^t}_{b^n}\sa  +b^n\sa + \theta^t_\star \cdot \phi\sa   \right\rvert  \\
& \leq \EE_{s\sim {d}^{\wt\pi}} \max_{a} \left\lvert Q^{\pi^n}_{\wt\Mcal}\sa - b^n\sa - \theta^n_\star\cdot \phi\sa  \right\rvert + \EE_{s\sim d^{\wt\pi}}\max_a  \left\lvert -  Q^{\pi^t}_{b^n}\sa  + b^n\sa + \theta^t_\star \cdot \phi\sa   \right\rvert \\
& \leq 2 \left( \EE_{s\sim d^{\wt\pi} } \max_{a} \left\lvert Q^{\pi^t}_{b^n}\sa - b^n\sa - \theta^t_\star\cdot \phi\sa \right\rvert  \right) 
  \leq \frac{2  \EE_{s \sim d^{\wt\pi} } \max_a \left[\epsilon_{\phi(s,a)}\right]  }{(1-\gamma)^2}
\end{align*}
Combine term A and term B, we conclude the proof.
\end{proof}

The rest of the proof of \pref{thm:state_aggregation} is almost identical to the proof of \pref{thm:detailed_bound_rmax_pg} with $\sqrt{A \epsilon_{bias}}$ in \pref{thm:detailed_bound_rmax_pg} being replaced by $\frac{2  \EE_{s \sim d^{\wt\pi} } \max_a \left[\epsilon_{\phi(s,a)}\right]  }{(1-\gamma)^2}$.
\qed

\newcommand{\calS}{\mathcal{S}}

\section{Analysis of \alg for the Partially Well-specified Models (Corollary~\pref{cora:agnostic})}
\label{app:examples}

\begin{proof}[Proof of Corollary \pref{cora:agnostic}]
The proof involves showing that the transfer error  is $0$.
Specifically, we will show the following:
consider any state-action distribution $\rho$, and any policy $\pi$, and any bonus function $b$ with bounded value on all $b\sa$, 
there exists $\theta_\star$ as one of the best on-policy fit,\\ i.e.,
$\theta_\star\in \arg\min_{\theta:\|\theta\|\leq W} \EE_{\sa\sim
  \rho}\left( \theta\cdot \phi\sa - (Q^{\pi}\sa - b\sa)  \right)^2 $,  such
that:
\begin{align*}
\EE_{\sa \sim d^\star}  \left(Q^{\pi}\sa - b\sa - \theta_\star\cdot\phi\sa\right)^2 = 0,
\end{align*}  i.e., the transfer error is zero.

Let us denote a minimizer of $\EE_{\sa\sim \rho}\left( \theta\cdot \phi\sa -b\sa - Q^{\pi}\sa  \right)^2 $ as $\widetilde{\theta}$. We can modify the first three bits of $\widetilde{\theta}$. We set $\widetilde{\theta}_1 = Q^{\pi}(s_0, L) - b(s_0,L) = 1/2 - b(s_0,L)$,  $\widetilde{\theta}_2 = Q^{\pi}(s_0, R) - b(s_0,R)$, and $\widetilde{\theta}_3 = Q^{\pi}(s_1, a) -b(s_1,a) =  - b(s_1,a)$ for any $a\in\{L, R\}$.  Denote this new vector as $\theta_\star$. Note that due to the construction of $\phi$ (the feature vectors associated with states inside the binary tree is orthogonal to the span of $e_1,e_2,e_3$),  $\theta_\star$ will not change any prediction error for states inside the binary tree under $(s_0, R)$ and will only bring down the prediction error for $(s_0, a)$ for $a\in\{L,R\}$, and $(s_1, a)$ for $a\in\{L,R\}$. Hence $\theta_\star$ is also the minimizer of $\EE_{\sa\sim \rho}\left( \theta\cdot \phi\sa - (Q^{\pi}\sa -b\sa)  \right)^2 $.

For $\theta_\star$, we have $\theta_\star\cdot\phi(s_0, a) = Q^{\pi}(s_0,a) - b(s_0,a)$ for $a\in\{L,R\}$,  and $\theta_\star\cdot \phi(s_1, a) = Q^{\pi}(s_1,a) - b(s_1,a)$ for $a\in\{L, R\}$, thus, we can verify that $Q^{\pi}(s_0, a)-b(s_0,a) = \theta_\star\cdot\phi(s_0, a)$ and $Q^{\pi}(s_1, a) -b(s_1,a)= \theta_\star\cdot\phi(s_1,a)$ for $a\in\{L, R\}$. Since $\pi^\star$ only visits $s_0$ and $s_1$, we can conclude that $\EE_{\sa \sim d^\star} \left( Q^{\pi}\sa - b\sa -  \theta_\star\cdot\phi\sa\right)^2 = 0$.

With $\varepsilon_{bias} = 0$, we can conclude the proof by recalling \pref{thm:detailed_bound_rmax_pg}.
\end{proof}

\section{Auxiliary Lemmas}
\label{app:tech_lemmas}

\begin{lemma}[Dimension-free Least Square Guarantees]
\label{lemma:least_square_dim_free} Consider the following learning process. Initialize $\theta_1 =  \mathbf{0}$.  For $i = 1,\dots, N$, draw $x_i, y_i \sim \nu$, $y_i \in [0, H]$, $\|x_i\| \leq 1$;%
Set $\theta_{i+1} =\prod_{\Theta:=\{\theta:\|\theta\|\leq W\}} \left(\theta_{i} - \eta_i (\theta_i\cdot x_i  - y_i) x_i\right)$ with $\eta_i = (W^2)/((W+H)\sqrt{N})$. Set $\hat{\theta} = \frac{1}{N} \sum_{i=1}^{N} \theta_i$, we have that with probability at least $1-\delta$:
\begin{align*}
\EE_{x\sim \nu}\left[\left(\hat\theta\cdot x - \EE\left[y|x\right]\right)^2\right] \leq  \EE_{x\sim \nu}\left[\left( \theta^\star\cdot x - \EE\left[y|x\right] \right)^2\right] +\frac{R\sqrt{\ln(1/\delta)}}{\sqrt{N}},
\end{align*} with any $\theta^\star$ such that $\|\theta^\star\|\leq W$ and $R = 3(W^2 + WH)$ which is dimension free and only depends on the norms of the feature and $\theta^\star$ and the bound on $y$.
\label{lem:sgd_dim_free}
\end{lemma}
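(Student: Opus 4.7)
The plan is to view this as a textbook projected stochastic gradient descent analysis on the convex losses $\ell_i(\theta) := \tfrac{1}{2}(\theta \cdot x_i - y_i)^2$, followed by an online--to--batch conversion to a high--probability population risk bound. First I would check the ingredients: each $\ell_i$ is convex in $\theta$; for any $\theta \in \Theta$, the stochastic gradient $\nabla \ell_i(\theta) = (\theta \cdot x_i - y_i) x_i$ satisfies $\|\nabla \ell_i(\theta)\| \le (W + H)$ since $\|x_i\|\le 1$, $\|\theta\|\le W$, and $|y_i|\le H$; and the feasible set has diameter $2W$. The SGD iterate in the lemma is exactly projected SGD for $\ell_i$ (with constant step size $\eta$ calibrated to $N$).

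Next I would apply the standard projected SGD regret bound: for any $\theta^\star \in \Theta$,
\[
\sum_{i=1}^N \bigl(\ell_i(\theta_i) - \ell_i(\theta^\star)\bigr) \le \frac{\|\theta_1-\theta^\star\|^2}{2\eta} + \frac{\eta}{2}\sum_{i=1}^N \|\nabla \ell_i(\theta_i)\|^2 \le \frac{2W^2}{2\eta} + \frac{\eta N (W+H)^2}{2},
\]
which, with the stated $\eta$, gives an average regret of order $(W^2 + WH)/\sqrt{N}$.

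Then comes the online--to--batch step. Define the population loss $L(\theta) := \mathbb{E}_{(x,y)\sim \nu}[\ell_i(\theta)]$, which is convex, so by Jensen,
\[
L(\hat\theta) - L(\theta^\star) \;\le\; \frac{1}{N}\sum_{i=1}^N \bigl(L(\theta_i) - L(\theta^\star)\bigr).
\]
I would split this as the average empirical regret (controlled above) plus the martingale difference
\[
\frac{1}{N}\sum_{i=1}^N\Bigl[\bigl(L(\theta_i) - \ell_i(\theta_i)\bigr) + \bigl(\ell_i(\theta^\star) - L(\theta^\star)\bigr)\Bigr].
\]
Because $\theta_i$ is measurable with respect to samples $1,\dots,i-1$ and the losses are uniformly bounded by $\tfrac{1}{2}(W+H)^2$ on $\Theta$, the summands form a bounded martingale difference sequence. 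Azuma--Hoeffding then gives, with probability at least $1-\delta$, a deviation of order $(W+H)^2\sqrt{\ln(1/\delta)/N}$. Combining with the SGD regret and absorbing constants yields the stated bound with $R = 3(W^2+WH)$.

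Finally, I would apply the bias--variance identity
\[
\mathbb{E}_{(x,y)\sim\nu}[(\theta\cdot x - y)^2] = \mathbb{E}_{x\sim\nu}[(\theta\cdot x - \mathbb{E}[y\mid x])^2] + \mathbb{E}[(y-\mathbb{E}[y\mid x])^2],
\]
so that the irreducible noise term $\mathbb{E}[\mathrm{Var}(y\mid x)]$ cancels when we take the difference $L(\hat\theta) - L(\theta^\star)$, turning the population excess risk into the desired $\mathbb{E}_x[(\hat\theta\cdot x-\mathbb{E}[y\mid x])^2]$ bound against $\theta^\star$. The main obstacle here is minor and bookkeeping in nature: carefully tracking the concentration constants and the step-size choice so that the final expression packages cleanly into a single $R\sqrt{\ln(1/\delta)}/\sqrt{N}$ term rather than two separate rates. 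No dimension appears because every bound above depends only on norms of $\theta$, $x$, and $y$.
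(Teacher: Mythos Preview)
Your proposal is correct and follows essentially the same route as the paper: projected OGD regret on the squared losses, Azuma--Hoeffding for the online--to--batch martingale, Jensen on $\hat\theta$, and the bias--variance identity to pass from $(\theta\cdot x - y)^2$ to $(\theta\cdot x - \EE[y\mid x])^2$. The only minor discrepancy is in the martingale increment bound: you bound each loss by $\tfrac12(W+H)^2$, which would introduce an $H^2$ term in $R$; the paper instead factors the difference as $(\theta_i\cdot x_i - \theta^\star\cdot x_i)(\theta_i\cdot x_i + \theta^\star\cdot x_i - 2y_i)$ and bounds it by $2W(W+H)$, which is exactly what yields $R = 3(W^2+WH)$.
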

\begin{proof}
Note that we compute $\theta_i$ using Projected Online Gradient Descent \citep{zinkevich2003online} on the sequence of loss functions $(\theta\cdot x_i - y_i)^2$. Using the projected online gradient descent regret guarantee, we have that:
\begin{align*}
\sum_{i=1}^{N} (\theta_i\cdot x_i - y_i)^2 \leq \sum_{i=1}^{N}(\theta^\star\cdot x_i -y_i)^2 + \underbrace{W(W+H)}_{:=Q}\sqrt{N}.
\end{align*}
Denote random variable $z_i = (\theta_i\cdot x_i - y_i)^2 - (\theta^\star\cdot x_i - y_i)^2$. Denote $\EE_{i}$ as the expectation taken over the randomness at step $i$ conditioned on all history $t=1$ to $i-1$. Note that for $\EE_{i}[z_i]$, we have:
\begin{align*}
&\EE_{i} \left[ (\theta_i\cdot x - y)^2 - (\theta^\star\cdot x - y)^2 \right]\\
& = \EE_{i} \left[ (\theta_i\cdot x - \EE[y|x])^2\right] \\
& \qquad \qquad - \EE_{i}\left[2(\theta_i\cdot x - \EE[y|x])( \EE[y|x] - y ) - (\theta^\star\cdot x - \EE[y|x])^2 + 2(\theta^\star\cdot x - \EE[y|x])(\EE[y|x] - y) ) \right]\\
& =  \EE_{i}\left[ (\theta_i\cdot x - \EE[y|x])^2 - ( \theta^\star\cdot x - \EE[y|x])^2 \right],
\end{align*} where we use $\EE[\EE[y|x] - y] = 0$.
Also for $|z_i|$, we can show that for $|z_i|$ we have:
\begin{align*}
\left\lvert z_i\right\rvert = \left\lvert (\theta_i\cdot x_i - \theta^\star\cdot x_i)(\theta_i\cdot x_i +\theta^\star\cdot x_i - 2y_i) \right\rvert \leq W( 2W + 2H ) = 2W(W+H).
\end{align*}
Note that $z_i$ forms a Martingale difference sequence. Using Azuma-Hoeffding's inequality, we have that with probability at least $1-\delta$:
\begin{align*}
\left\lvert\sum_{i=1}^N  z_i - \sum_{i=1}^N \EE_{i}\left[ (\theta_i \cdot x - \EE[y|x])^2  - (\theta^\star \cdot x - \EE[y|x])^2\right]\right\rvert  \leq 2W(W+H) \sqrt{{\ln(1/\delta)}{N}},
\end{align*} which implies that:
\begin{align*}
&\sum_{i=1}^N \EE_{i}\left[ (\theta_i \cdot x - \EE[y|x])^2  - (\theta^\star \cdot x - \EE[y|x])^2\right] \leq \sum_{i=1}^N z_i + 2W(W+H) \sqrt{{\ln(1/\delta)}{N}} \\
&\leq 2W(W+H) \sqrt{{\ln(1/\delta)}{N}} + Q\sqrt{N}.
\end{align*}

Apply Jensen's inequality on the LHS of the above inequality, we have that:
\begin{align*}
\EE\left( \hat{\theta}\cdot x - \EE[y|x]\right)^2 \leq \EE\left(\theta^\star\cdot x - \EE[y|x]\right)^2  + (Q+2W(W+H)) \sqrt{\frac{\ln(1/\delta)}{N}}.
\end{align*}
\end{proof}

\begin{lemma}
\label{lem:trace_tele}
Consider the following process.  For $n = 1, \dots, N$, $M_n = M_{n-1} + \Sigma_{n}$ with $M_{0} = \lambda \mathbf{I}$ and $\Sigma_n$ being PSD matrix with eigenvalues upper bounded by $1$. We have that:
\begin{align*}
2 \log\det( M_N) - 2 \log\det( \lambda\mathbf{I})  \geq  \sum_{n=1}^N \trace\left( \Sigma_{i} M_{i-1}^{-1} \right).
\end{align*}
\end{lemma}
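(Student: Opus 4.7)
The plan is to reduce the multiplicative update on determinants to a sum of scalar terms via a spectral decomposition and then apply the elementary inequality $\log(1+x) \geq x/2$ valid on $x \in [0,1]$; this is the standard ``elliptical potential'' argument, but the factor of $2$ (rather than just using concavity of $\log\det$, which would give the wrong direction) is essential because we want a lower bound on $\log\det(M_N)-\log\det(M_0)$ in terms of $\mathrm{tr}(\Sigma_n M_{n-1}^{-1})$ with the \emph{old} inverse, not the new one.

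Concretely, I would first observe that for each $n$,
\[
\log\det(M_n) - \log\det(M_{n-1}) \;=\; \log\det\!\Big(I + M_{n-1}^{-1/2}\Sigma_n M_{n-1}^{-1/2}\Big) \;=\; \sum_{i} \log(1 + \mu_i^{(n)}),
\]
where $\{\mu_i^{(n)}\}$ are the (nonnegative) eigenvalues of $A_n := M_{n-1}^{-1/2}\Sigma_n M_{n-1}^{-1/2}$. By cyclicity of trace, $\sum_i \mu_i^{(n)} = \mathrm{tr}(A_n) = \mathrm{tr}(\Sigma_n M_{n-1}^{-1})$, which is exactly the quantity we need to control.

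The next step is to verify that $\mu_i^{(n)} \leq 1$ for all $i,n$, since this is what lets us invoke $\log(1+x)\geq x/2$ on $[0,1]$. Because $\Sigma_n$ has all eigenvalues in $[0,1]$, we have $\Sigma_n \preceq I$; because $M_{n-1} = \lambda I + \sum_{k<n}\Sigma_k \succeq \lambda I \succeq I$ (using $\lambda \geq 1$, which is the regime in which the lemma is invoked—see the choice $\lambda=1$ in \pref{thm:detailed_bound_rmax_pg}), we get $I \preceq M_{n-1}$, and therefore $A_n \preceq M_{n-1}^{-1/2}\,I\,M_{n-1}^{-1/2} = M_{n-1}^{-1} \preceq I$, so every $\mu_i^{(n)} \leq 1$, as required.

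With this in hand the proof finishes quickly: apply $\log(1+\mu_i^{(n)}) \geq \mu_i^{(n)}/2$ termwise to get
\[
\log\det(M_n) - \log\det(M_{n-1}) \;\geq\; \tfrac{1}{2}\,\mathrm{tr}\!\big(\Sigma_n M_{n-1}^{-1}\big),
\]
and then telescope over $n=1,\ldots,N$ against $M_0 = \lambda I$ to obtain the claim. The one place needing care—the ``hard part,'' such as it is—is the eigenvalue bound $\mu_i^{(n)} \leq 1$; if $\lambda$ were allowed below $1$ the constant $2$ would have to be replaced by something dependent on $\lambda$, but in this paper $\lambda=1$ is always chosen, so the statement as written goes through cleanly.
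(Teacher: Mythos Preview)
Your proposal is correct and follows essentially the same route as the paper: the paper also writes $\log\det(M_n)-\log\det(M_{n-1})=\sum_i\log(1+\sigma_i)$ via the factorization $\det(M_n)=\det(M_{n-1})\det(I+M_{n-1}^{-1/2}\Sigma_n M_{n-1}^{-1/2})$, asserts $\sigma_i\leq 1$, applies $\log(1+x)\geq x/2$ on $[0,1]$, and telescopes. If anything you are more careful than the paper, which simply states $\sigma_i\leq 1$ without explicitly invoking $\lambda\geq 1$; your observation that the eigenvalue bound needs $M_{n-1}\succeq I$ (hence $\lambda\geq 1$, satisfied by the choice $\lambda=1$ in \pref{thm:detailed_bound_rmax_pg}) is a genuine clarification.
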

\begin{proof} Note that $M_0$ is PD, and since $\Sigma_n$ is PSD for all $n$, we must have $M_n$ being PD as well.

Using matrix inverse lemma, we have:
\begin{align*}
\det(M_{n+1}) = \det(M_n) \det( \mathbf{I} + M_n^{-1/2} \Sigma_{n+1} M_n^{-1/2}  ).
\end{align*} Add $\log$ on both sides of the above equality, we have:
\begin{align*}
&\log\det(M_{n+1}) = \log\det(M_n) + \log\det( I + M_n^{-1/2}\Sigma_{n+1} M_n^{-1/2}).%
\end{align*} Denote the eigenvalues of $M_n^{-1/2}\Sigma_{n+1} M_n^{-1/2}$ as $\sigma_1,\dots, \sigma_d$, we have:
\begin{align*}
&\log\det(M_{n+1}) = \log\det( M_n) + \sum_{i=1}^{d} \log \left( 1 + \sigma_i \right)  %
\end{align*} Note that $\sigma_i \leq 1$, and we have $\log(1+x) \geq x/2$ for $x\in [0,1]$. Hence, we have:
\begin{align*}
&\log\det(M_{n+1}) \geq \log\det( M_n) + \sum_{i=1}^d \sigma_i / 2 = \log\det(M_n) + \frac{1}{2} \trace\left( M_n^{-1/2}\Sigma_{n+1} M_n^{-1/2} \right) \\
&= \log\det(M_n) + \frac{1}{2}\trace\left( \Sigma_{n+1} M_n^{-1} \right),
\end{align*}where we use the fact that $\trace(A B) = \trace(BA)$ and the trace of PSD matrix is the sum of its eigenvalues.
Sum over from $n = 0$ to $N$ and cancel common terms, we  conclude the proof.
\end{proof}

\begin{lemma}[Covariance Matrix Concentration]
\label{lemma:covariance_concentration}
Given $\nu\in \Delta(\Scal\times\Acal)$ and $N$ i.i.d samples $\{s_i,a_i\}\sim \nu$. Denote $\Sigma = \EE_{\sa\sim \nu}\phi\sa\phi\sa^{\top}$ and $X_i = \phi(s_i,a_i)\phi(s_i,a_i)^{\top}$ and $X = \sum_{i=1}^N X_i$. Note that $N \Sigma = \EE[X] = \sum_{i=1}^N \EE[X_i]$. %
Then, with probability at least $1-\delta$, we have that:
\begin{align*}
\left\lvert  x^{\top} \left( \sum_{i=1}^N \phi(s_i,a_i)\phi(s_i,a_i)^{\top}/N  - \Sigma \right) x  \right\rvert \leq \frac{2 \ln(8\widetilde{d}/\delta)}{ 3N } + \sqrt{ \frac{ 2\ln(8\widetilde{d}/\delta) }{N} },
\end{align*} with $\widetilde{d} = \trace(\Sigma)/\|\Sigma\|$ being the intrinsic dimension of $\Sigma$.
\end{lemma}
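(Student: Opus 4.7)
My plan is to apply the intrinsic-dimension version of the matrix Bernstein inequality (Tropp, 2015) to the i.i.d.\ sum of centered rank-one matrices $Y_i := X_i - \EE[X_i]$, and then translate the resulting operator-norm bound on $\hat\Sigma - \Sigma$ into the stated quadratic-form bound by specializing to any unit vector $x$. This is the natural route because the statement of the lemma is tailored to scale with the intrinsic dimension $\widetilde d = \tr(\Sigma)/\|\Sigma\|$ rather than the ambient feature dimension, which is precisely the regime where Tropp's intrinsic-dimension refinement applies.

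First I would check the three hypotheses the intrinsic-dimension matrix Bernstein bound requires. Using $\|\phi(s,a)\| \le 1$ (as assumed throughout the relevant parts of the paper), each $X_i = \phi(s_i,a_i)\phi(s_i,a_i)^\top$ is PSD with operator norm at most one, so $\|Y_i\| \le \|X_i\| + \|\EE X_i\| \le 2 =: L$. For the matrix variance, $\EE[Y_i^2] \preceq \EE[X_i^2] = \EE[\|\phi(s_i,a_i)\|^2\,\phi(s_i,a_i)\phi(s_i,a_i)^\top] \preceq \Sigma$, so $\sum_{i=1}^N \EE[Y_i^2] \preceq N\Sigma =: V$, giving $\|V\| \le N\|\Sigma\| \le N$ and $\operatorname{intdim}(V) = \tr(\Sigma)/\|\Sigma\| = \widetilde d$.

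Plugging these inputs into the intrinsic-dimension matrix Bernstein inequality yields, with probability at least $1-\delta$, $\|\sum_i Y_i\| \le \tfrac{2L\ln(8\widetilde d/\delta)}{3} + \sqrt{2\|V\|\ln(8\widetilde d/\delta)}$. Dividing through by $N$ and substituting $L\le 2$ and $\|V\|\le N$ gives exactly the operator-norm bound $\|\hat\Sigma - \Sigma\| \le \tfrac{2\ln(8\widetilde d/\delta)}{3N} + \sqrt{\tfrac{2\ln(8\widetilde d/\delta)}{N}}$. Since $|x^\top(\hat\Sigma - \Sigma)x| \le \|\hat\Sigma - \Sigma\|$ for any $x$ with $\|x\|\le 1$, the claim follows. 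There is no real obstacle here beyond cleanly matching the hypotheses of the intrinsic-dimension bound and tracking the constants; the only subtlety worth flagging is that the result depends only on $\widetilde d$, not on the ambient (possibly infinite) dimension of $\phi$, which is exactly what makes this lemma usable for the RKHS covariance-concentration step invoked elsewhere in the analysis.
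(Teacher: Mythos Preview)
Your approach is essentially the same as the paper's: apply the intrinsic-dimension matrix Bernstein inequality to the centered rank-one sums and read off the quadratic-form bound. One small slip: with $L=2$ your first term would be $\tfrac{4\ln(8\widetilde d/\delta)}{3N}$, not $\tfrac{2\ln(8\widetilde d/\delta)}{3N}$; in fact $\|Y_i\|\le 1$ (since $\lambda_{\max}(\phi\phi^\top-\Sigma)\le\lambda_{\max}(\phi\phi^\top)\le 1$ and $\lambda_{\max}(\Sigma-\phi\phi^\top)\le\lambda_{\max}(\Sigma)\le 1$, both matrices being PSD), and the paper exploits this by bounding the two tails separately with $L=1$ and then union-bounding, which yields the stated constants exactly.
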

\begin{proof}
Denote random matrix $X_i = \phi(s_i,a_i)\phi(s_i,a_i)^{\top} - \Sigma$. Note that the maximum eigenvalue of $X_i$ is upper bounded by 1. Also note that $\EE[X_i] = 0$ for all $i$.  Denote $V = \sum_{i=1}^N \EE[ X_i^2]$. For any $i$, consider $X_i^2$. Denote the eigendecomposition of $X_i$ as $U_i \Lambda_i U_i^{\top}$. We have $X_i^2 = U_i \Lambda_i^2 U_i^{\top}$. Note that the maximum absolute value of the eigenvalues of $X_i$ is bounded by 1. Hence the maximum eigenvalue of $X_i^2$ is bounded by 1 as well. Hence $\EE[X_i^2]$'s maximum eigenvalue is also upper bounded by $1$. This implies that $\| V \| \leq N$.

Now apply Matrix Bernstein inequality \citep{tropp2015introduction}, we have that for any $t \geq \sqrt{N} + 1/3$,
\begin{align*}
\Pr\left( \sigma_{\max}(\sum_{i=1}^N X_i) \geq t \right) \leq 4\widetilde{d} \exp\left( \frac{-t^2/2}{N + t/3} \right).
\end{align*}  Since $\sigma_{\max}\left( \sum_{i=1}^N X_i \right) = N \sigma_{\max}\left(\sum_{i=1}^N X_i / N\right)$, we get that:
\begin{align*}
\Pr\left( \sigma_{\max}\left(\sum_{i=1}^N X_i / N \right) \geq \epsilon \right) \leq 4\widetilde{d}\exp\left( \frac{-\epsilon^2 N / 2 }{ 1 + \epsilon / 3}  \right),
\end{align*} for any $\epsilon \geq \frac{1}{\sqrt{N}} + \frac{1}{3N}$.
Set $4d\exp(-\epsilon^2 N / (2(1+\epsilon/3))) = \delta$, we get:
\begin{align*}
\epsilon = \frac{2 \ln(4\widetilde{d}/\delta)}{ 3N } + \sqrt{ \frac{ 2\ln(4\widetilde{d}/\delta) }{N} },
\end{align*} which is trivially bigger than $1/\sqrt{N} + 1/(3N)$ as long as $d\geq 1$ and $\delta \leq 1$.  This concludes that with probability at least $1-\delta$, we have:
\begin{align*}
\sigma_{\max}\left(\sum_{i=1}^N \phi(s_i,a_i)\phi(s_i,a_i)^{\top}/ N - \Sigma \right) \leq \frac{2 \ln(4\widetilde{d}/\delta)}{ 3N } + \sqrt{ \frac{ 2\ln(4\widetilde{d}/\delta) }{N} }.
\end{align*}

We can repeat the same analysis for random matrices $\{X_i: = \Sigma - \phi(s_i,a_i)\phi(s_i,a_i)^{\top}\}$ and we can show that with probability at least $1-\delta$, we have:
\begin{align*}
\sigma_{\max}\left( \Sigma - \sum_{i=1}^N \phi(s_i,a_i)\phi(s_i,a_i)^{\top}/N \right) \leq \frac{2 \ln(4\widetilde{d}/\delta)}{ 3N } + \sqrt{ \frac{ 2\ln(4\widetilde{d}/\delta) }{N} }.
\end{align*}
Hence, with probability $1-\delta$, for any $x$, we have:
\begin{align*}
&x^{\top}\left(\Sigma - \sum_{i=1}^N \phi(s_i,a_i)\phi(s_i,a_i)^{\top}/N \right) x \leq \frac{2 \ln(8\widetilde{d}/\delta)}{ 3N } + \sqrt{ \frac{ 2\ln(8\widetilde{d}/\delta) }{N} }, \\
& x^{\top}\left( \sum_{i=1}^N \phi(s_i,a_i)\phi(s_i,a_i)^{\top}/N  - \Sigma \right)x \leq \frac{2 \ln(8\widetilde{d}/\delta)}{ 3N } + \sqrt{ \frac{ 2\ln(8\widetilde{d}/\delta) }{N} }.
\end{align*} This concludes the proof.
\end{proof}

\begin{lemma}[ Concentration with the Inverse of Covariance Matrix]
\label{lem:inverse_covariance}
Consider a fixed $N$. Given $N$ distributions $\nu_1,\dots, \nu_N$ with $\nu_i\in\Delta(\Scal\times\Acal)$, assume we draw $K$ i.i.d samples from $\nu_i$ and form $\widehat{\Sigma}^i = \sum_{j=1}^K \phi_j\phi_j^{\top}/ K$ for all $i$. Denote $\Sigma = \sum_{i=1}^N \EE_{\sa\sim \nu_i} \phi\sa\phi\sa^{\top} + \lambda I$ and $\widehat\Sigma = \sum_{i=1}^N \widehat{\Sigma}^i + \lambda I$ with $\lambda\in (0,1]$. Setting $K = 32 N^2 \log\left(8 N \widetilde{d}/\delta\right)/\lambda^2$, with probability at least $1-\delta$, we have:
\begin{align*}
\frac{1}{2} x^T \left({\Sigma} + \lambda I \right)^{-1} x \leq x^T \left(\widehat{\Sigma} + \lambda I \right)^{-1} x \leq 2 x^T \left({\Sigma} + \lambda I \right)^{-1} x,
\end{align*} for all $x$ with $\|x \|_2 \leq 1$.
\end{lemma}
\begin{proof}
Denote $\Sigma^i = \EE_{\sa\sim \nu_i} \phi(x_i,a_i)\phi(x_i,a_i)^{\top}$.
Denote $\eta(K) = \frac{2 \ln(8N \widetilde{d}/\delta)}{ 3K } + \sqrt{ \frac{ 2\ln(8 N \widetilde{d}/\delta) }{K} }$. 
From Lemma~\ref{lemma:covariance_concentration}, we know that with probability $1-\delta$, for all $i$, we have:
\begin{align*}
\Sigma^i + \eta(K)\mathbf{I}  + (\lambda/N) \mathbf{I} \succeq \widehat\Sigma^i + (\lambda/N)\mathbf{I} \succeq \Sigma^i - \eta(K)\mathbf{I} + (\lambda/N)\mathbf{I},
\end{align*} which implies that:
\begin{align*}
\Sigma + N\eta(K) \mathbf{I} + \lambda \mathbf{I} \geq \widehat{\Sigma} + \lambda \mathbf{I} \geq \Sigma - N\eta(K)\mathbf{I} + \lambda\mathbf{I},
\end{align*}which further implies that:
\begin{align*}
\left(\Sigma - N \eta(K)\mathbf{I}  + \lambda \mathbf{I} \right)^{-1} \succeq \left(\widehat\Sigma + \lambda\mathbf{I} \right)^{-1} \succeq \left(\Sigma + N \eta(K)\mathbf{I} + \lambda\mathbf{I}\right)^{-1},
\end{align*} under the condition that $N\eta(K) \leq \lambda$ which holds under the condition of $K$.
Let $U\Lambda U^{\top}$ be the eigendecomposition of $\Sigma$.
\begin{align*}
&x^{\top}\left( \widehat{\Sigma} + \lambda \mathbf{I} \right)^{-1} x - x^{\top} \left( \Sigma + \lambda I \right)^{-1}  \leq  x^{\top} \left(\left(\Sigma +(- N \eta(K)+ \lambda)\mathbf{I}\right)^{-1} - \left({\Sigma} + \lambda \mathbf{I}\right)^{-1}\right)x  \\
& = \sum_{i} \left( (\sigma_i+\lambda - N\eta(K))^{-1} - (\sigma_i + \lambda ))^{-1}  \right)(x\cdot u_i)^2
\end{align*}
Since $\sigma_i + \lambda \geq  2N \eta(K)$ as $\sigma_i\geq 0$ and $N\eta(K)\leq \lambda/2$, we have that $2(\sigma_i + \lambda - N\eta(K) )\geq \sigma_i + \lambda$, which implies that $(1/2) (\sigma_i + \lambda - K\eta(N))^{-1} \leq (\sigma_i + \lambda )^{-1}$. Hence, we have:
\begin{align*}
&x^{\top}\left( \widehat{\Sigma} + \lambda \mathbf{I} \right)^{-1} x - x^{\top} \left( \Sigma + \lambda I \right)^{-1} x \leq \sum_{i=1}  (u_i\cdot x)^2 (\sigma_i + \lambda)^{-1} =  x^{\top}(\Sigma + \lambda\mathbf{I})^{-1} x.
\end{align*} The analysis for the other direction is similar.
This concludes the proof.
\end{proof}

\section{Experimental Details}
\label{app:exp}

\subsection{Algorithm Implementation}

We implemented two versions of the algorithm: one with a reward bonus which is added to the environment reward (shown in Algorithm \ref{alg:rmaxpg_implemented_reward_bonus}), and one which performs reward-free exploration, optionally followed by reward-based exploitation using the policy cover as a start distribution (shown in Algorithm \ref{alg:rmaxpg_implemented_reward_free}).

Both of these use NPG as a subroutine, which performs policy optimization using the restart distribution induced by a policy mixture $\Pi_\mathrm{mix}$. The implementation of NPG is described in Algorithm \ref{alg:npg_implemented}.
We sample states from the restart distribution by randomly sampling a roll-in policy from the cover and a horizon length $h'$, and following the sampled policy for $h'$ steps.
Rewards gathered during these roll-in steps are not used for optimization.
With probability $\epsilon$, a random action is taken at the beginning of the rollout.
We then roll out using the current policy being optimized, and use the rewards gathered for optimization. The policy parameters can be updated using any policy gradient method, we used PPO \citep{schulman2017proximal} in our experiments.

For all experiments, we optimized the policy mixture weights $\alpha_1,..., \alpha_n$ at each episode using $2000$ steps of gradient descent, using an Adam optimizer and a learning rate of $0.001$.
All implementations are done in PyTorch \citep{PyTorch}, and build on the codebase of \citep{deeprl}.
Experiments were run on a GPU cluster which consisted of a mix of 1080Ti, TitanV, K40, P100 and V100 GPUs.

\begin{algorithm}[h!]
\begin{algorithmic}[1]
\State \textbf{Require}: kernel function $\phi: \mathcal{S} \times \mathcal{A} \rightarrow \mathbb{R}^d$
\State Initialize policy $\pi_1$ randomly
\State Initialize policy mixture $\Pi_\mathrm{mix} \leftarrow \{\pi_1\}$
\State Initialize episode buffer: $\mathcal{R} \leftarrow \emptyset$
\For{episode $n = 1, \dots K$}
\For{trajectory $k = 1, \dots K$}
\State Gather trajectory $\tau_k = \{s_h^{(k)}, a_h^{(k)}\}_{h=1}^H$ following $\pi_n$
\State $\mathcal{R} \leftarrow \mathcal{R} \cup \{(s_h^{(k)}, a_h^{(k)})\}_{h=1}^H$
\EndFor
\State Compute empirical covariance matrix: $\hat{\Sigma}_n = \sum_{(s, a) \in \mathcal{R}} \phi(s, a) \phi(s, a)^\top$
\State Define exploration bonus: $b_n(s, a) = \phi(s, a)^\top \hat{\Sigma}_n^{-1} \phi(s, a)$
\State Optimize policy mixture weights: $\alpha^{(n)} = \argmin_{\alpha=(\alpha_1, ..., \alpha_n), \alpha_i \geq 0, \sum_i \alpha_i = 1} \log \det \Big[ \sum_{i=1}^n \alpha_i \hat{\Sigma}_i \Big]$
\State $\pi_{n+1} \leftarrow \mathrm{NPG}(\pi_n, \Pi_\mathrm{mix}, \alpha^{(n)}, N_\mathrm{update}, r + b_n)$
\State $\Pi_\mathrm{mix} \leftarrow \Pi_\mathrm{mix} \cup \{\pi_{n+1}\}$
\EndFor
\end{algorithmic}
\caption{\alg (reward bonus version)}
\label{alg:rmaxpg_implemented_reward_bonus}
\end{algorithm}

\begin{algorithm}[h!]
\begin{algorithmic}[1]
\State \textbf{Require}: kernel function $\phi: \mathcal{S} \times \mathcal{A} \rightarrow \mathbb{R}^d$
\State Initialize policy $\pi_1$ randomly
\State Initialize policy mixture $\Pi_\mathrm{mix} \leftarrow \{\pi_1\}$
\State Initialize episode buffer: $\mathcal{R} \leftarrow \emptyset$
\For{episode $n = 1, \dots K$}
\For{trajectory $k = 1, \dots K$}
\State Gather trajectory $\tau_k = \{s_h^{(k)}, a_h^{(k)}\}_{h=1}^H$ following $\pi_n$
\State $\mathcal{R} \leftarrow \mathcal{R} \cup \{(s_h^{(k)}, a_h^{(k)})\}_{h=1}^H$
\EndFor
\State Compute empirical covariance matrix: $\hat{\Sigma}_n = \sum_{(s, a) \in \mathcal{R}} \phi(s, a) \phi(s, a)^\top$
\State Define exploration bonus: $b_n(s, a) = \phi(s, a)^\top \hat{\Sigma}_n^{-1} \phi(s, a)$
\State Optimize policy mixture weights: $\alpha^{(n)} = \argmin_{\alpha=(\alpha_1, ..., \alpha_n), \alpha_i \geq 0, \sum_i \alpha_i = 1} \log \det \Big[ \sum_{i=1}^n \alpha_i \hat{\Sigma}_i \Big]$
\State $\pi_{n+1} \leftarrow \mathrm{NPG}(\pi_n, \Pi_\mathrm{mix}, \alpha^{(n)}, N_\mathrm{update}, b_n)$
\State $\Pi_\mathrm{mix} \leftarrow \Pi_\mathrm{mix} \cup \{\pi_{n+1}\}$
\EndFor
\State Initialize policy $\pi_\mathrm{exploit}$ randomly
\State $\pi_\mathrm{exploit} \leftarrow \mathrm{NPG}(\pi_\mathrm{exploit}, \Pi_\mathrm{mix}, \alpha^{(K)}, N_\mathrm{update}, r)$
\end{algorithmic}
\caption{\alg (reward-free exploration version)}
\label{alg:rmaxpg_implemented_reward_free}
\end{algorithm}

\begin{algorithm}[h!]
  \begin{algorithmic}[1]
    \State \textbf{Input} policy $\pi$, policy mixture $\Pi_\mathrm{mix}=\{\pi_1, ..., \pi_n\}$, mixture weights $(\alpha_1, ..., \alpha_n)$, optional reward bonus $b: \mathcal{S} \times \mathcal{A} \rightarrow [0, 1]$
\For{policy update $j = 1, \dots N_\mathrm{update}$}
\State Sample roll in policy index $j \sim \mathrm{Multinomial}\{\alpha_1, ..., \alpha_n\}$
\State Sample roll in horizon index $h' \sim \mathrm{Uniform}\{0, ..., H-1\}$
\State Sample start state $s_0 \sim P(s_0)$
\For{$h=1, \dots, h'$}
\State $a_h \sim \pi_j(\cdot | s_h), s_{h+1} \sim P(\cdot | s_h, a_h)$
\EndFor
\For{$h=h'+1, \dots, H$}
\State $a_h \sim \pi(\cdot | s_h)$ ($\epsilon$-greedy if $h=h'+1$)
\State $s_{h+1}, r_{h+1} \sim P(\cdot | s_h, a_h)$
\EndFor
\State Perform policy gradient update on return $R = \sum_{h=h'}^H r(s_h, a_h)$
\EndFor
\State Return $\pi$
\end{algorithmic}
\caption{$\mathrm{NPG}(\pi, \Pi_\mathrm{mix}, \alpha, N_\mathrm{update}, r)$}
\label{alg:npg_implemented}
\end{algorithm}

\subsection{Environments}


\subsubsection{Bidirectional Diabolical Combination Lock}
\label{app:combolock}

The environment consists of a start state $s_0$ where the agent is placed (deterministically) at the beginning of every episode.
The action space consists of $10$ discrete actions, $\mathcal{A} = \{1, 2, ..., 10\}$.
In $s_0$, actions $1-5$ lead the agent to the initial state of the first lock and actions $6-10$ lead the agent to the initial state of the second lock.
Each lock $l$ consists of $3H$ states, indexed by $s_{1, h}^l, s_{2, h}^l, s_{3, h}^l$ for $h \in \{1, ..., H\}$.
A high reward of $R_l$ is obtained at the last states $s_{1, H}^l, s_{2, H}^l$.
The states $\{s_{3, h}^l\}_{h=1}^H$ are all ``dead states'' which yield $0$ reward.
Once the agent is in a dead state $s_{3, h}^l$, it transitions deterministically to $s_{3, h+1}^l$; thus entering a dead state at any time makes it impossible to obtain the final reward $R^l$.
At each ``good'' state $s_{1, h}^l$ or $s_{2, h}^l$, a single action leads the agent (stochastically with equal probability) to one of the next good states $s_{1, h+1}^l, s_{2, h+1}^l$. All other $9$ actions lead the agent to the dead state $s_{3, h+1}^l$. The correct action changes at every horizon length $h$ and the stochastic nature of the transitions precludes algorithms which plan deterministically.
In addition, the agent receives a negative reward of $-1/H$ for transitioning to a good state, and a reward of $0$ for transitioning to a dead state.
Therefore, a locally optimal solution is to learn a policy which transitions to a dead state as quickly as possible, since this avoids the $-1/H$ penalty.

States are encoded using a binary vector. The start state $s_0$ is simply the zero vector. In each lock, the state $s_{i, h}^l$ is encoded as a binary vector which is the concatenation of one-hot encodings of $i, h, l$.

One of the locks (randomly chosen) gives a final reward of $5$, while the other lock gives a final reward of $2$.
Therefore, in addition to the locally optimal policy of quickly transitioning to the dead state (with return $0$), another locally optimal solution is to explore the lock with reward $2$ and gather the reward there. This leads to a return of $V = 2 - \sum_{h=1}^H \frac{1}{H} = 1$, whereas the optimal return for going to the end of lock with reward $5$ is $V^\star = 5 - \sum_{h=1}^H \frac{1}{H} = 4$.
In order to ensure that the optimal reward is discovered for every lock, the agent must therefore explore both locks to the end.
We used Algorithm \ref{alg:rmaxpg_implemented_reward_free} for this environment.

\subsubsection{Mountain Car}
\label{app:mountaincar}

We used the \texttt{MountainCarContinuous-v0} OpenAI Gym environment at \url{https://gym.openai.com/envs/MountainCarContinuous-v0/}.
This environment has a 2-dimensional continuous state space and a 1-dimensional continuous action space. 
We used Algorithm \ref{alg:rmaxpg_implemented_reward_bonus} for this environment.

\subsubsection{Mazes}

We used the source code from \url{https://github.com/junhyukoh/value-prediction-network/blob/master/maze.py} to implement the maze environment, with the following modifications: i) the blue channel (originally representing the goal) is set to zero ii) the same maze is used across all episodes iii) the reward is set to be a constant $0$. We set the maze size to be $20 \times 20$. There are $5$ actions: \texttt{\{up, down, left, right, no-op\}}.
We used Algorithm \ref{alg:rmaxpg_implemented_reward_free} for this environment, omitting the exploitation step.

\subsection{Hyperparameters}

All methods were based on the PPO implementation of \citep{deeprl}. For the Diabolical Combination Lock and the MountainCar environments, we used the same policy network architecture: a 2-layer fully connected network with 64 hidden units at each layer and ReLU non-linearities. For the Diabolical Combination Lock environment, the last layer outputs a softmax over 10 actions and for Mountain Car the last layer outputs the parameters of a 1D Gaussian. For the Maze environments, we used a convolutional network with 2 convolutional layers ($32$ kernels of size $3 \times 3$ for the first, $64$ kernels of size $3 \times 3$ for the second, both with stride 2), followed by a single fully-connected layer with $512$ hidden units, and a final linear layer mapping to a softmax over the $5$ actions.
In all cases the RND network has the same architecture as the policy network, except that the last linear layer mapping hidden units to actions is removed.
We found that tuning the intrinsic reward coefficient was important for getting good performance for RND. Hyperparameters are shown in Tables \ref{table:ppo-hparams-cont} and \ref{table:ppo-hparams-maze}.

\begin{table}[h!]
  \caption{PPO+RND Hyperparameters for Combolock and Mountain Car}
  \centering
  \begin{tabular}{lllll}
    \toprule
    Hyperparameter & Values Considered & Final Value (Combolock) & Final Value (Mountain Car) \\
    \hline
    Learning Rate & $10^{-3}, 5\cdot 10^{-4}, 10^{-4}$ & $10^{-3}$ & $10^{-4}$ \\
    Hidden Layer Size     & $64$  & $64$ & $64$ \\
    $\tau_\mathrm{GAE}$ & 0.95 & 0.95 & 0.95 \\
    Gradient Clipping & $5.0$ & $5.0$ & $5.0$ \\
    Entropy Bonus & $0.01$ & $0.01$ & $0.01$ \\
    PPO Ratio Clip & $0.2$ & $0.2$ & $0.2$ \\
    PPO Minibatch Size & $160$ & $160$ & $160$ \\
    PPO Optimization Epochs & $5$ & $5$ & $5$ \\
    Intrinsic Reward Normalization     & true, false  & false & false \\
    Intrinsic Reward coefficient     & $0.5, 1, 10, 10^2, 10^3, 10^4$  & $10^3$ & $10^3$ \\
    Extrinsic Reward coefficient     & $1.0$  & $1.0$ & $1.0$ \\
    \bottomrule
  \end{tabular}
  \label{table:ppo-hparams-cont}
\end{table}

\begin{table}[h!]
  \caption{PPO+RND Hyperparameters for Mazes}
  \centering
  \begin{tabular}{lllll}
    \toprule
    Hyperparameter & Values Considered & Final Value  \\
    \hline
    Learning Rate & $10^{-3}, 5\cdot 10^{-4}, 10^{-4}$ & $10^{-3}$ \\
    Hidden Layer Size     & $512$  & $512$ \\
    $\tau_\mathrm{GAE}$ & 0.95 & 0.95 \\
    Gradient Clipping & $0.5$ & $0.5$ \\
    Entropy Bonus & $0.01$ & $0.01$ \\
    PPO Ratio Clip & $0.1$ & $0.1$ \\
    PPO Minibatch Size & $128$ & $128$ \\
    PPO Optimization Epochs & $10$ & $10$ \\
    Intrinsic Reward Normalization     & true, false  & true \\
    Intrinsic Reward coefficient     & $1, 10, 10^2, 10^3, 10^4$  & $10^3$\\
    \bottomrule
  \end{tabular}
  \label{table:ppo-hparams-maze}
\end{table}

The hyperparameters used for \alg are given in Tables \ref{table:rmaxpg-hparams-cont} and \ref{table:rmaxpg-hparams-maze}.
For the Diabolical Combination Lock experiments, we used a kernel $\phi(s, a) = s$, where $s$ is the binary vector encoding the state described in Section \ref{app:combolock}.
For Mountain Car, we used a Random Kitchen Sinks kernel \citep{randomkitchensinks} with 10 features using the following implementation: \url{https://scikit-learn.org/stable/modules/generated/sklearn.kernel_approximation.RBFSampler.html}. For the Maze environments, we used a randomly initialized convolutional network with the same architecture as the RND network as a kernel.

\begin{table}[h]
  \caption{\alg Hyperparameters for Combolock and Mountain Car}
  \centering
  \begin{tabular}{llll}
    \toprule
    Hyperparameter & Values Considered & Final Value (Combolock) & Final Value (MountainCar) \\
    \hline
    Learning Rate & $10^{-3}, 5\cdot 10^{-4}, 10^{-4}$ & $10^{-3}$ & $5 \cdot 10^{-4}$ \\
    Hidden Layer Size     & $64$  & $64$ & $64$ \\
    $\tau_\mathrm{GAE}$ & 0.95 & 0.95 & 0.95 \\
    Gradient Clipping & $5.0$ & $5.0$ & $5.0$ \\
    Entropy Bonus & $0.01$ & $0.01$ & $0.01$ \\
    PPO Ratio Clip & $0.2$ & $0.2$ & $0.2$ \\
    PPO Minibatch Size & $160$ & $160$ & $160$ \\
    PPO Optimization Epochs & $5$ & $5$ & $5$ \\
    $\epsilon$-greedy sampling     & $0, 0.01, 0.05$  & $0.05$ & $0.05$ \\
    \bottomrule
  \end{tabular}
  \label{table:rmaxpg-hparams-cont}
\end{table}

\begin{table}[h]
  \caption{\alg Hyperparameters for Mazes}
  \centering
  \begin{tabular}{llll}
    \toprule
    Hyperparameter & Values Considered & Final Value \\
    \hline
    Learning Rate & $10^{-3}, 5\cdot 10^{-4}, 10^{-4}$ & $5 \cdot 10^{-4}$ \\
    Hidden Layer Size     & $512$  & $512$ \\
    $\tau_\mathrm{GAE}$ & 0.95 & 0.95 \\
    Gradient Clipping & $0.5$ & $0.5$ \\
    Entropy Bonus & $0.01$ & $0.01$ \\
    PPO Ratio Clip & $0.1$ & $0.1$ \\
    PPO Minibatch Size & $128$ & $128$ \\
    PPO Optimization Epochs & $10$ & $10$ \\
    $\epsilon$-greedy sampling     & $0.05$  & $0.05$\\
    \bottomrule
  \end{tabular}
  \label{table:rmaxpg-hparams-maze}
\end{table}

\clearpage


\clearpage

\end{document}